\documentclass{article}
\usepackage[a4paper]{geometry}
\geometry{top=1.0in, bottom=1.0in, left=1.25in, right=1.25in}

\usepackage[round]{natbib}
\setlength{\bibsep}{0.5em}

\usepackage[utf8]{inputenc} \usepackage[T1]{fontenc}    \usepackage[colorlinks=true,citecolor=teal]{hyperref}       \usepackage{url}            \usepackage{booktabs}       \usepackage{amsfonts}       \usepackage{nicefrac}       \usepackage{microtype}      \usepackage{xcolor}         \usepackage{tikz}
\usepackage{amssymb}
\usepackage{amsthm}
\usepackage{amsmath}
\usepackage{mathtools}
\usepackage{float}
\DeclarePairedDelimiter\abs{\lvert}{\rvert}\DeclarePairedDelimiter\norm{\lVert}{\rVert}

\newtheorem{thm}{Theorem}[section]
\newtheorem{lemma}[thm]{Lemma}
\newtheorem{cor}[thm]{Corollary}
\newtheorem{prop}[thm]{Proposition}

\newtheorem*{thm*}{Theorem}
\newtheorem*{lemma*}{Lemma}
\newtheorem*{cor*}{Corollary}
\newtheorem*{prop*}{Proposition}
\newtheorem*{conjecture*}{Conjecture}

\theoremstyle{definition}
\newtheorem{defn}{Definition}[section]
\newtheorem*{defn*}{Definition}
\theoremstyle{definition}

\theoremstyle{definition}

\theoremstyle{remark}
\newtheorem*{ex*}{Example}
\theoremstyle{definition}

\theoremstyle{definition}
\newtheorem*{assm*}{Assumption}
\theoremstyle{remark}

\theoremstyle{remark}
\newtheorem*{remark*}{Remark}

\usetikzlibrary{decorations.pathreplacing}

\newcommand{\tD}{\widetilde{D}}

\newcommand{\tp}{\widetilde{p}}

\newcommand{\sT}{\mathcal{T}}

\newcommand{\fR}{r}

\newcommand\indep{\independent}
\newcommand\indepP{\independent_{\!\!\!P}}
\newcommand\independent{\protect\mathpalette{\protect\independenT}{\perp}}
\def\independenT#1#2{\mathrel{\rlap{$#1#2$}\mkern4mu{#1#2}}}

\newcommand\numberthis{\addtocounter{equation}{1}\tag{\theequation}}

\newcommand{\R}{\mathbb{R}}
\newcommand{\N}{\mathbb{N}}

\newcommand{\sH}{\mathcal{H}}
\newcommand{\sD}{\mathcal{D}}

\newcommand{\G}{\mathcal{G}}

            \newcommand{\given}{\,|\,}

\newcommand{\eps}{\varepsilon}

\newcommand{\ges}{\gtrsim}

\DeclareMathOperator{\diam}{diam}

\title{Breaking the curse of dimensionality in\\structured density estimation}

\author{Robert A.~Vandermeulen$^{*}$ \and Wai Ming Tai$^{\dag}$ \and Bryon Aragam}

\begin{document}

\maketitle

\begingroup
{\let\thefootnote\relax\footnote{$^{*}$Much of this work was conducted at the Berlin Institute for the Foundations of Learning and Data (BIFOLD), Technische Universität Berlin. $^{\dag}$The work was done when the author was at Nanyang Technological University and was supported by Singapore AcRF Tier 2 grant MOE-T2EP20122-0001.}}
\endgroup

\begin{abstract}
We consider the problem of estimating a structured multivariate density, subject to Markov conditions implied by an undirected graph. In the worst case, without Markovian assumptions, this problem suffers from the curse of dimensionality. Our main result shows how the curse of dimensionality can be avoided or greatly alleviated under the Markov property, and applies to arbitrary graphs. While existing results along these lines focus on sparsity or manifold assumptions, we introduce a new graphical quantity called ``graph resilience'' and show how it controls the sample complexity. Surprisingly, although one might expect the sample complexity of this problem to scale with local graph parameters such as the degree, this turns out not to be the case. Through explicit examples, we compute uniform deviation bounds and illustrate how the curse of dimensionality in density estimation can thus be circumvented. Notable examples where the rate improves substantially include sequential, hierarchical, and spatial data.
\end{abstract}

\section{Introduction}
\label{sec:intro}

Density estimation is a classical problem in statistical machine learning, and provides the backbone of modern generative models such as
diffusion models, which are now state-of-the-art density estimators for a variety of applications, as well as normalizing flows, energy-based models, and implicit generative models.
At the same time, density estimation is a notoriously difficult problem in high-dimensions, known to suffer from the so-called curse of dimensionality. 
When the density depends on only a few variables or, more generally, is supported on a low-dimensional manifold, it is known that the curse of dimensionality can be circumvented by substituting the ambient dimension $d$ with the effective dimension $s$ \citep[e.g.][]{lafferty2008rodeo,yang2015minimax}. But what happens when the distribution is spread over the entire space in a structured manner---is it still possible to circumvent the curse of dimensionality? 

Three representative examples are given in Figure~\ref{fig:intro-ex}, 
corresponding to sequential, hierarchical, and spatial (or convolutional) data. In these examples, although both manifold and sparsity assumptions are violated, there are structured dependencies that one might hope to leverage when estimating the underlying density.
These kinds of structures are pervasive in machine learning applications. For one example, consider computer vision and imaging. Images have long been modeled as a grid graph where adjacent pixels correspond to adjacent vertices (see \cite{keener2010}, for example). Such an assumption is very natural: pixels tend to be strongly dependent on nearby pixels and independent of far away pixels. See Figure \ref{fig:cond-heatmap} for an example of this.

\begin{figure}[t]
    \centering
    \begin{tikzpicture}
\node (n1) at (0,1) [circle, draw] {};
    \node (n2) at (1,1) [circle, draw] {};
    \node (n3) at (2,1) [circle, draw] {};
    \node (n4) at (3,1) [circle, draw] {};

\draw (n1) -- (n2);
    \draw (n2) -- (n3);
    \draw (n3) -- (n4);

\node at (1.5, -1)[align=center] {{\bf Path} (sequential data) \\ Graph resilience: $r=O(\log d)$ \\ Samples: $n \gtrsim (1/\varepsilon)^{O(\log d)}$};
\end{tikzpicture}~
\begin{tikzpicture}
\node (n1) at (0,0) [circle, draw] {};
    \node (n2) at (-1,-1) [circle, draw] {};
    \node (n3) at (1,-1) [circle, draw] {};
    \node (n4) at (-1.5,-2) [circle, draw] {};
    \node (n5) at (-0.5,-2) [circle, draw] {};
    \node (n6) at (0.5,-2) [circle, draw] {};
    \node (n7) at (1.5,-2) [circle, draw] {};

\draw (n1) -- (n2);
    \draw (n1) -- (n3);
    \draw (n2) -- (n4);
    \draw (n2) -- (n5);
    \draw (n3) -- (n6);
    \draw (n3) -- (n7);

\node at (0, -3)[align=center] {{\bf Tree} (hierarchical data) \\ Graph resilience: $r=O(1)$ \\ Samples: $n \gtrsim (1/\varepsilon)^{O(1)}$};
\end{tikzpicture}~
\begin{tikzpicture}
\node (n1) at (0,0) [circle, draw] {};
    \node (n2) at (1,0) [circle, draw] {};
    \node (n3) at (2,0) [circle, draw] {};
    \node (n4) at (0,1) [circle, draw] {};
    \node (n5) at (1,1) [circle, draw] {};
    \node (n6) at (2,1) [circle, draw] {};
    \node (n7) at (0,2) [circle, draw] {};
    \node (n8) at (1,2) [circle, draw] {};
    \node (n9) at (2,2) [circle, draw] {};

\draw (n1) -- (n2);
    \draw (n2) -- (n3);
    \draw (n4) -- (n5);
    \draw (n5) -- (n6);
    \draw (n7) -- (n8);
    \draw (n8) -- (n9);
    \draw (n1) -- (n4);
    \draw (n4) -- (n7);
    \draw (n2) -- (n5);
    \draw (n5) -- (n8);
    \draw (n3) -- (n6);
    \draw (n6) -- (n9);

\node at (1, -1)[align=center] {{\bf Grid} (spatial data) \\ Graph resilience: $r=o(d)$ \\ Samples: $n \gtrsim (1/\varepsilon)^{o(d)}$};
\end{tikzpicture}
\caption{Examples of common structures that yield improvements in density estimation. As indicated by the path example on the left, which is also a tree, the worst-case resilience of any tree is $r=O(\log d)$, but for bounded-depth trees, $r=O(1)$.}
    \label{fig:intro-ex}
\end{figure}
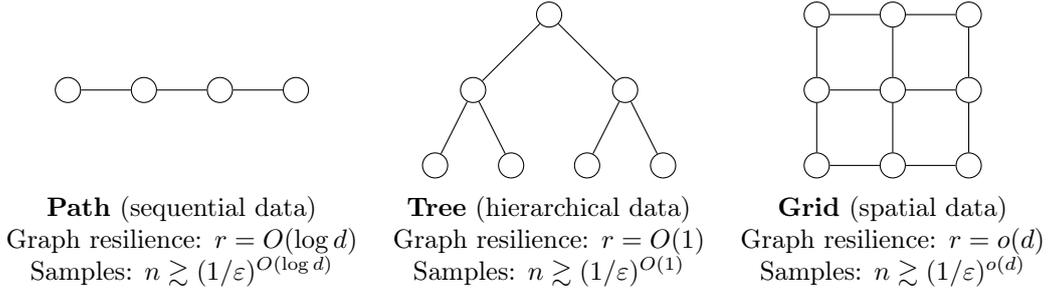

These examples are particularly appealing in applications, however, we emphasize that our problem setting is considerably more general, and applies to general dependence structures given by any Markov random field.
Since real data is expected to (approximately) exhibit these types of structures, a reasonable question to ask is whether or not the curse of dimensionality persists in such structured settings.

\paragraph{Structured density estimation}
To formalize the notion of structured dependencies in a high-dimensional, multivariate distribution, we adopt the framework of undirected graphical models, also known as Markov random fields (MRFs). In this setting, we are given samples from an unknown distribution $P$, with density $p$, over the random vector $X=(X_{1},\ldots,X_{d})$, and it is assumed that $P$ is Markov to some undirected graph $G$ (see Section~\ref{sec:defs} for definitions). We treat both cases where $G$ is \emph{a priori} known, and where it is unknown but is in some known subset of all graphs. The graph $G$ encodes the underlying dependence structure between the variables, which we hope simplifies the estimation problem. For example, when $P$ is a Gaussian, this gives rise to the well-known Gaussian graphical model \citep{speed1986gaussian}, and various extensions of this idea to nonparametric settings are known, including trees \citep{liu2011forest,gyorfi2022tree} and nonparanormal models \citep{liu2009}. While this line of work also discusses the problem of \emph{structure learning}, our focus is on the problem of nonparametric \emph{density estimation}, which is comparatively understudied in graphical models. This may come as a surprise given the outsized literature on the general density estimation problem; see Section~\ref{sec:related} for an overview of related work.
In contrast to most existing work on density estimation, in lieu of imposing parametric or functional restrictions on $P$, the only assumption we make in addition to the Markov assumption is Lipschitz continuity.

\begin{figure}[t]
    \centering
    \includegraphics[width=\textwidth]{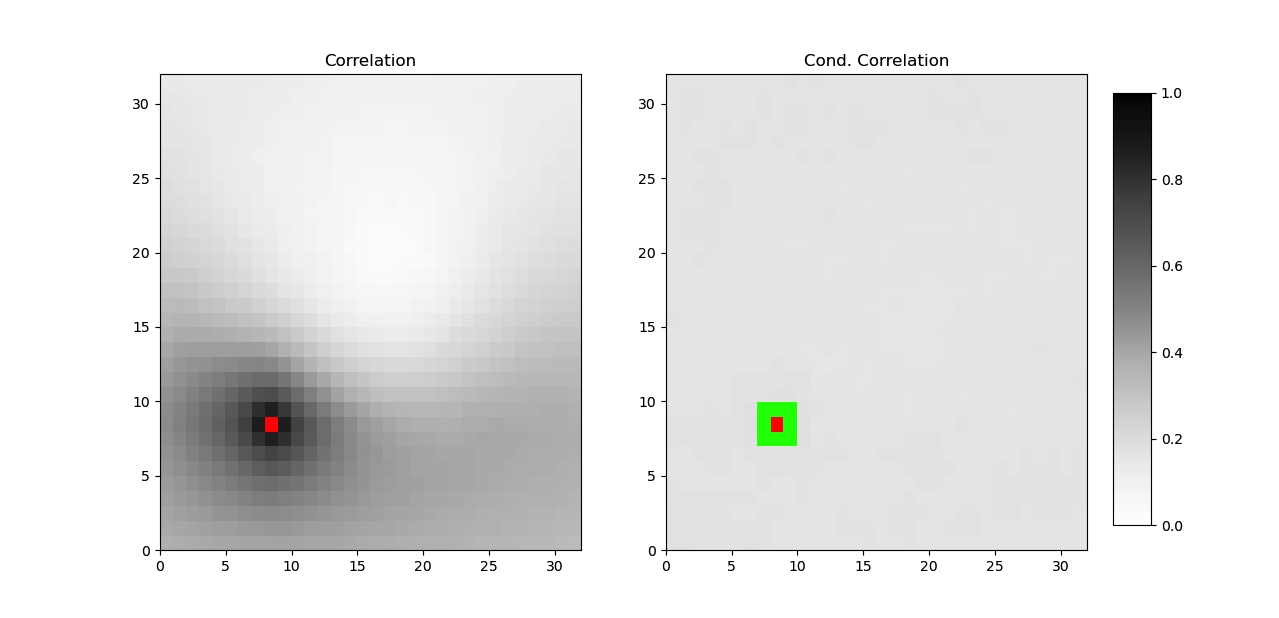}
    \caption{Heatmaps of the magnitude of the correlation between red pixel and every other pixel, using the CIFAR-10 training set. The left image shows the correlation without conditioning, the right image shows correlation conditioned on the green pixels. We see that the modeling the image as a Markov random grid is valid.}
    \label{fig:cond-heatmap}
\end{figure}

\paragraph{Overview}
Our main result establishes the sample complexity of estimating such a density---for arbitrary graphs $G$---in total variation (TV) distance: It is approximately $(1/\eps)^{r+2}$, where $r\ll d$ is a novel graphical parameter we call \emph{graph resilience} that depends only on $G$. Roughly, $r$ is a measure of how connected the graph $G$ is; the easier it is to disconnect $G$, the smaller $r$ will be.
The examples in Figure~\ref{fig:intro-ex} illustrate a range of values from constant $r=O(1)$ to sublinear $r=O(\sqrt{d})=o(d)$. In the former case, this leads to an exponential improvement in the sample complexity over the usual nonparametric sample complexity of $(1/\eps)^{d}$.

While it is not surprising that graphical structure (i.e. sparsity in the form of conditional independence) can make estimation easier, what is surprising is the quantity involved: It is not, as one might guess, one of the ``usual'' suspects such as sparsity, degree, or width. To capture the effective dimension of the problem and its resulting sample complexity, we introduce the concept of \emph{graph resilience}.
In particular, the usual suspects are insufficient to break the curse of dimensionality, whereas graph resilience does. Moreover, it is easy to construct examples where these are not only insufficient, but wholly misleading: Graph resilience can be controlled in graphs with unbounded degree, sparsity, and/or diameter.

Our work also marks a substantial departure from the existing literature that focuses primarily on
functional restrictions (e.g. compositional structure), sparsity (e.g. density regression), or low-dimensional embeddings (e.g. manifold hypothesis). Instead, we impose no \emph{explicit} restrictions on the functional form (although certain restrictions are implicit through the Markov assumption). In practice, empirical evidence points to a combination of these properties prevailing in real-world data, and thus our approach hopefully serves to provide another practical assumption under which density estimation is feasible, and in particular, the curse of dimensionality can be avoided.

\paragraph{Contributions}
More precisely, we make the following contributions.
\begin{enumerate}
    \item (Section~\ref{sec:res}) We introduce the graphical property of \emph{resilience} (Definition~\ref{def:res}), which approximately quantifies the connectivity of an undirected graph. Resilience is defined through the process of disintegration (Definition~\ref{def:dis}), which is described in detail.
    \item (Section~\ref{sec:main}) We prove that the sample complexity of estimating a density $p$ that is Markov to any undirected graph $G$ scales with the resilience $r=r(G)$, as opposed to the dimension $d$ (Theorem~\ref{thm:main-known-G}). We also provide examples to show that other metrics such as degree, diameter, and sparsity cannot properly capture the sample complexity (Section~\ref{sec:comparison}). We also show that efficient estimation is still possible even if $G$ is unknown (Theorem~\ref{thm:unknown-G}).
    \item (Section~\ref{sec:ex}) We demonstrate numerous concrete examples where the resilience (and hence the sample complexity) can either be exactly calculated or bounded. These examples include familiar graphs such as trees (including paths for sequential data), cliques, and grids (also known as lattice graphs), and represent a broad continuum of possible complexities (Section~\ref{sec:example-res}).
\end{enumerate}

All told, the potential savings implied by our results can be substantial, and are not isolated or pathological in any way: If there is a \emph{single} independence relation satisfied by $P$, the effective dimension will be strictly less than the ambient dimension $d$, and in practical applications such as spatial or imaging data, there is an exponential savings in the sample complexity (see Figure~\ref{fig:intro-ex}). As we show, the graph resilience reveals a continuum of complexities ranging from dimension-independent (i.e. $r=O(1)$), in which case the curse of dimensionality is circumvented completely, to dimension-dependent with nontrivial savings (e.g. $r=o(d)$).

\section{Related work}
\label{sec:related}

We begin by recalling classical rates and results on density estimation. The standard nonparametric rate for estimating an $L$-Lipschitz density $p$ in $d$ dimensions in TV distance is $n^{-1/(d+2)}$;
see \citet{devroye1985nonparametric,tsybakov2009introduction,gine2016mathematical} for more details. This rate ignores dimension-dependent constants that affect finite-sample rates,
and a more refined bound on the sample complexity (ignoring log-factors) is given in \citet{mcdonald2017minimax}: 
\begin{align}
\label{eq:npsc:gen}
n
\ges \frac{d^{d}}{\eps^{d+2}}.
\end{align}
See also \citet{ghorbani2020discussion,jiao2023deep}. For comparison, our main result is that only 
\begin{align}
\label{eq:npmrf}
n\ges 
\frac{rd^{r/2+1}}{\eps^{r+2}} \end{align}
samples are needed (again up to log-factors) when $p$ is Markov to an undirected graph $G$ with resilience $r=r(G)$, and this rate cannot be improved among graphs whose resilience is at most $r$. The improvement over \eqref{eq:npsc:gen} is clear: Not only the exponent, which dominates the rate, but also the constant is improved by replacing $d$ with $r$, which can be much smaller than $d$ (see examples in Section~\ref{sec:example-res}).

Ignoring the dimension-dependent constant factor (as most papers do), the basic idea behind most results on circumventing the curse of dimensionality is to replace the $d$-dependence in the exponent of \eqref{eq:npsc:gen} with some $s<d$, where $s$ is the \emph{effective dimension} of the problem. Examples include sparsity, low-dimensional embeddings (e.g. manifold assumptions), and hierarchical and/or compositional structure. Viewed from this perspective, our main contribution is to propose a new measure of effective dimension in structured data, where $s=r$ is the graph resilience. 

Recently, there has been a renewed interest in this problem along two broad axes: 1) Generative models as density estimators, and 2) Breaking the curse of dimensionality. In the remainder of this section, we review this related work.

\subsection{Density estimation}

As noted in the introduction, density estimation is a classical problem with a literature dating back more than 50 years. Notably, \citet{stone1980optimal,stone1982optimal} established minimax rates for nonparametric estimation problems including density estimation and regression. 
These papers derived the now-classical nonparametric rate $n^{-\beta/(2\beta+d)}$ for $\beta$-smooth densities, which implies the curse of dimensionality, i.e. unless $p$ is very smooth, then the sample complexity of estimating $p$ is exponential in the dimension. Yet, at the same time, the stark practical success of generative models suggest that high-dimensional density estimation may not be quite as intractable as this slow rate suggests. Motivated by these empirical observations, a growing line of work has established minimax optimality for a range of generative models, including GANs \citep{liang2017well,singh2018nonparametric, singh2018minimax, uppal2019nonparametric, uppal2020robust}, diffusion models \citep{oko2023diffusion,zhang2024minimax,cole2024score,tang2024adaptivity}, and variational autoencoders \citep{tang2021empirical,kwon2024minimax}. 

Other theoretical developments have focused on efficient algorithms \citep{acharya2021optimal,acharya2017sample,chan2014near} in the univariate case.

There has also been recent interest in developing density estimators that exploit graphical structure \citep{germain2015made,johnson2016composing,khemakhem2021causal,wehenkel2021graphical,chen2024structured}. Of course, there is an enormous literature on algorithms and methods for general density estimation that we cannot cover here.

Most closely related to our work are the papers \citet{liu2007sparse,liu2011forest,gyorfi2022tree}. \citet{liu2007sparse} use the RODEO estimator on a model that satisfies a sparsity assumption, i.e. only $s\ll d$ variables are involved in the nonparametric component. \citet{liu2011forest} use forests to approximate the underlying density under certain regularity conditions; \citet{gyorfi2022tree} relax these conditions and replace forests with trees. Their main result is a pointwise $O(n^{-1/4})$ rate of convergence for estimating a tree-structured density, which is notably dimension-independent. The main difference between our results and these previous results is that our results apply to general graphs $G$ that may not be trees or forests, in addition to being \emph{uniform} in $p$ and unimprovable \citep[cf. Remark~1 in][in particular]{gyorfi2022tree}. 
Most importantly, moving beyond tree-based models requires new ideas, and motivates our introduction of the graph resilience to measure the effective dimension of the estimation problem.

\subsection{Curse of dimensionality}

There is a long line of literature on understanding how and when the curse of dimensionality can be avoided. Common assumptions include 
the manifold assumption \citep{pelletier2005kernel,ozakin2009submanifold,jiang2017kde,schmidt2019manifold,nakada2020adaptive,berenfeld2022estimating,jiao2023deep}, 
additive structure \citep{stone1985additive,raskutti2012minimax},
compositional structure \citep{horowitz2007rate,juditsky2009nonparametric,kohler2017nonparametric,schmidt2017nonparametric,bauer2019deep,kohler2021rate,shen2021deep}, 
low-rank structure 
\citep{hall2003,hall2005,song13,amiridi22, vandermeulen2021, vandermeulen2023sample}
and sparsity \citep{liu2007sparse,lafferty2008rodeo,yang2015minimax}.
\citet{bach2017breaking} showed that neural networks are adaptive to many of these underlying structures. 

This line of work is particularly relevant as it pertains to breaking the curse of dimensionality via structural assumptions on the unknown parameter. Notably, it seems that the advantages of (in)dependence via the Markov property have not been thoroughly investigated. Our work aims to fill this gap for a wide range of structured models that \emph{do not fit into} any of the classes above. Indeed, it is easy to construct densities that are non-sparse (i.e. every variable is active), non-additive and non-compositional (we consider arbitrary continuous densities), and are not supported on any lower-dimensional manifold, but that are Markov to a given graph $G$.

\section{Main Results}
\label{sec:main}
Before presenting the main results of this paper we must present some basic background.
\subsection{Background Definitions and Notation}\label{sec:defs}

Throughout the paper, we use undirected graphs to model the dependencies in $P$.
We adopt the usual terminology and conventions from graphical models: $G=(V,E)$ is an undirected graph with $V=X=[d]$ and $d=\dim(X)$. To avoid technical complications, we assume compact support with $X=(X_{1},\ldots,X_{d})\in[0,1]^{d}$. 
Two disjoint subsets $A,B\subset V$ are said to be separated by $C$ if all paths connecting $A$ to $B$ intersect $C$; equivalently, the subgraph over $(A\cup B)-C$ is disconnected.
The distribution $P$ is called Markov with respect to $G$ if
\begin{align}
\text{$A$ is separated from $B$ by $C$}
\implies
A\indepP B\given C,
\end{align}
where $\indepP$ denotes conditional independence in $P$. 
In other words, graph separation implies conditional independence, but not necessarily vice versa.
See \citet{lauritzen1996} for a review of graphical modeling terminology.

We term a subgraph of $G$ to be a \emph{component of $G$} (sometimes called a \emph{connected component}) if it is a maximal connected subgraph of $G$. 
A \emph{path} in $G$ is a sequence of vertices $(v_0,v_1,\ldots,v_k)$ such that $v_i-v_{i+1}$ in $G$ (i.e. $(i,i+1)\in E$). 
A \emph{simple} path is a path without repeated vertices. 
The \emph{length} of a path is the number of edges in the path; e.g. the length of $(v_0,v_1,\ldots,v_k)$ is $k$.
A \emph{geodesic} path between two vertices is any path of shortest length, and the distance between two vertices is the length of any geodesic between them. For graphs $G,G'$ the notation $G \oplus G'$ denotes a disjoint union of graphs, i.e., the graph whose vertex set is the disjoint union of the vertices in $G$ and $G'$ and inherits edges from the edges in $G$ and $G'$. For a graph $G=(V,E)$, for $V'\subset V$, the graph $G\setminus V'$ denotes the graph with vertices $V\setminus V'$ and the edges $\{i,j\} \in E$ where $\{i,j\} \subseteq V\setminus V'$.

 For any $d\in \N$ and $L\ge 0$, let $\sD_{d}$ be the set of densities on $[0,1]^d$ and $\sD_{d,L}\subset \sD_d$ be those densities which are $L$-Lipschitz continuous. For any graph $G$ with $d$ vertices, we define $\sD(G) \subseteq \sD_{d}$, such that, for $p \in \sD(G)$ and $(X_1,\ldots,X_{d})\sim p$, the entries of the random variable, $X_1,\ldots,X_{d}$, satisfy the global Markov property with respect to the graph $G$.  Finally let $\sD_L(G) \triangleq \sD(G)\cap \sD_{d,L}$. When estimating these densities we will sometimes use the term \emph{sample complexity}. This refers to the number of samples necessary to estimate a density to within $\varepsilon$ error in total variation distance. For functions, $f$ on $[0,1]^d$ we define $\norm{f}_1 = \int |f(x)| dx$; this is the total variation distance.
\subsection{Graph Resilience}
\label{sec:res}
The key concept in this work, which characterizes the difficulty of estimating densities in $\sD(G)$, is what we term the \emph{graph resilience of $G$}. Graph resilience is based on a process we call a \emph{disintegration}.  
\begin{defn}\label{def:dis}
    For a graph $G=(V,E)$, an $r$-tuple $(V_1,\ldots,V_r )$ with $V_i \subseteq V$ is called a \emph{disintegration of $G$} if:
    \begin{enumerate}
        \item $\{V_i\}_{i}$ is a partition of $V$; \label{def:dis_partition}
        \item The elements of $V_1$ all lie in different components of $G$; \label{def:dis_first_step}
        \item $|V_1|$ is equal to the number of components in $G$; \label{def:dis_first_size}
        \item For all $i \in [r-1]$ the elements of $V_{i+1}$ lie in different components of $G\setminus \bigcup_{j=1}^i V_j$; \label{def:dis_next_step}
        \item $|V_{i+1}|$ is equal to the number of components in $G\setminus \bigcup_{j=1}^i V_j$.\label{def:dis_next_size}
    \end{enumerate}
\end{defn}

The \emph{length} of a disintegration is the value $r$ in the above definition. A disintegration of length $r$ will be called an $r$-disintegration.

The first disintegration step, $V_1$ above, can be thought of as the process of picking one vertex from each component in $G$ and removing it. The next step of a disintegration, $V_2$ above, then selects and removes one vertex from each component of the resultant graph. An $r$-disintegration is a sequence of $r$ such steps until one is left with the null graph. It is possible for a graph to admit many different disintegrations of different lengths. A visual representation of the steps of a graph disintegration can be found in Figure \ref{fig:dis-example}.

\begin{figure}[t]
\begin{center}
    \begin{tikzpicture}
\node[circle,draw,inner sep=1pt] (1) at (0,0) {1};
        \node[circle,draw,inner sep=1pt] (2) at (1,0) {2};
        \node[circle,draw,inner sep=1pt] (3) at (1,-1) {3};
        \node[circle,draw,inner sep=1pt] (4) at (0,-1) {4};
        \draw (1) -- (2) -- (3) -- (4) -- (1);
    
\node[circle,draw,inner sep=1pt] (a) at (2,0) {5};
        \node[circle,draw,inner sep=1pt] (b) at (2.5,-0.5) {6};
        \node[circle,draw,inner sep=1pt] (c) at (3,-1) {7};
        \draw (a) -- (b) -- (c);
    
\draw[->,thick] (3.5,-0.5) -- (4,-0.5) node[midway, above] {1};
    
\node[circle,draw,inner sep=1pt] (2') at (5.5,0) {2};
        \node[circle,draw,inner sep=1pt] (3') at (5.5,-1) {3};
        \node[circle,draw,inner sep=1pt] (4') at (4.5,-1) {4};
        \draw  (2') -- (3') -- (4') ;
    
\node[circle,draw,inner sep=1pt] (a') at (6.5,0) {5};
        \node[circle,draw,inner sep=1pt] (c') at (7.5,-1) {7};
        \draw (a')  (c');
    
        \draw[->,thick] (8,-0.5) -- (8.5,-0.5)node[midway, above] {2};
    
        \node[circle,draw,inner sep=1pt] (2'') at (10,0) {2};
        \node[circle,draw,inner sep=1pt] (4'') at (9,-1) {4};
        \draw  (2'')  (4'') ;
    
        \draw[->,thick] (10.5,-0.5) -- (11,-0.5)node[midway, above] {3};
    
        \node at (12, -0.5){null graph};
    \end{tikzpicture}
    \end{center}
    \caption{Visual representation of the steps of the $3$-disintegration $(\{1,6 \},\{3,5,7 \}, \{2,4 \} )$.  In each step of the disintegration, one vertex is removed from every graph component. The total number of steps to the null graph is $3$.}\label{fig:dis-example}
\end{figure}
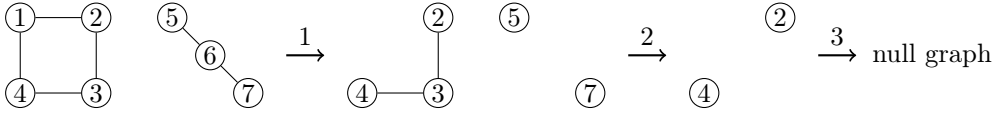

The \emph{resilience} of a graph $G$ describes the length of the shortest possible disintegration of $G$.
\begin{defn}\label{def:res}
    For a graph $G$, the \emph{resilience of $G$}, denoted $\fR(G)$, is the smallest $r$ such that there exists a $r$-disintegration of $G$.
\end{defn}
We elaborate more on some properties of graph resilience in Section \ref{sec:ex}, including upper bounds for the graph resilience of several graphs corresponding to image or sequence data. 
First, we need to establish that graph resilience is the right metric for quantifying the sample complexity of density estimation.

\subsection{Sample complexity}

The following theorem characterizes the difficulty of estimating a density which is known to satisfy the Markov property in terms of its graph. All proofs are deferred the appendix.
\begin{thm} \label{thm:main-known-G}
    Let $G$ be a graph whose number of vertices is $d$ and resilience is $r$. Let $L\ge 0$.
    For any $0<\eps<1$, there exists an algorithm that takes $n = \Omega\left(\frac{rd^{r/2+1}L^r}{\eps^{r+2}} \log (\frac{dL}{\eps}) + \frac{\log (1/\delta)}{\eps^2}\right)$ i.i.d. samples drawn from  any $p\in\sD_L(G)$ and returns a distribution $q$ such that
    \begin{align*}
        \norm{p - q}_1
        & \leq
        \eps \quad\text{with probability at least $1-\frac{\delta}{3}$.}
    \end{align*}
\end{thm}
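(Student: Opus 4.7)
The plan is to use an $r$-disintegration $(V_1,\ldots,V_r)$ of $G$ to factor $p$ as a product of conditional densities, each depending on at most $r$ variables, then estimate every factor by a histogram on a suitably fine discretization of $[0,1]^d$ and multiply the estimates back together.

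To obtain the factorization, write $V_{<i}\triangleq V_1\cup\cdots\cup V_{i-1}$ and apply the chain rule $p(x)=\prod_{i=1}^r p(x_{V_i}\mid x_{V_{<i}})$. Condition~\ref{def:dis_next_step} of Definition~\ref{def:dis} ensures that the vertices of $V_i$ lie in distinct components of $G\setminus V_{<i}$, so they are pairwise separated by $V_{<i}$ in $G$; the global Markov property then gives $p(x_{V_i}\mid x_{V_{<i}})=\prod_{v\in V_i}p(x_v\mid x_{V_{<i}})$. For $v\in V_i$, let $C_v$ be the component of $G\setminus V_{<i}$ containing $v$ and set $\mathrm{par}(v)\triangleq V_{<i}\cap N_G(C_v)$. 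Any path in $G$ from $v$ to a vertex of $V_{<i}\setminus\mathrm{par}(v)$ must first exit $C_v$ through $\mathrm{par}(v)$, so a second use of Markov yields $p(x_v\mid x_{V_{<i}})=p(x_v\mid x_{\mathrm{par}(v)})$. The crucial combinatorial fact---which I expect to be the main obstacle of the proof---is that $|\mathrm{par}(v)|\le i-1\le r-1$: letting $A_j$ denote the component of $v$ in $G\setminus V_{<j}$, one checks that $C_v=A_i\subseteq A_{i-1}\subseteq\cdots\subseteq A_1$, and that any $u\in\mathrm{par}(v)\cap V_j$ must lie in $A_j$ (since $u$ is $G$-adjacent to some $w\in C_v\subseteq A_j$ and both survive up to $G\setminus V_{<j}$); condition~\ref{def:dis_next_size} then forces at most one such $u$ per step $j$. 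This yields $p=\prod_v p(x_v\mid x_{\mathrm{par}(v)})$ with $|A_v|\le r$ for $A_v\triangleq\{v\}\cup\mathrm{par}(v)$.

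For the estimation step, for each $v$ estimate the joint marginal $p(x_{A_v})$ by a histogram on a grid of side $k\asymp L\sqrt{d}/\varepsilon$ per coordinate. Marginals of an $L$-Lipschitz density remain $L$-Lipschitz, so the per-factor discretization bias is $O(L\sqrt{d}/k)=O(\varepsilon)$, and the empirical error on a $k^r$-cell histogram is $\widetilde O(\sqrt{k^r\log(d/\delta)/n})$ simultaneously for all $d$ marginals by a standard empirical-process bound and a union bound over $v$. Define the plug-in conditionals $\hat p(x_v\mid x_{\mathrm{par}(v)})\triangleq\hat p(x_{A_v})/\hat p(x_{\mathrm{par}(v)})$ and the estimator $\hat p(x)\triangleq\prod_v\hat p(x_v\mid x_{\mathrm{par}(v)})$.

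Finally, a standard telescoping argument---expanding $p-\hat p$ one factor at a time and integrating out the appropriate free variables, using that both true and estimated conditional factors are densities---gives $\norm{p-\hat p}_1\le 2\sum_v\norm{p(x_{A_v})-\hat p(x_{A_v})}_1$. Combining with the per-marginal bounds above and solving for $n$ yields the stated sample complexity with probability at least $1-\delta/3$, the additive $\log(1/\delta)/\varepsilon^2$ term absorbing the Chernoff deviation piece. Beyond the combinatorial bound on $|\mathrm{par}(v)|$, the main remaining subtlety is a careful bookkeeping of dimension-dependent constants in order to reach the precise $d^{r/2+1}$ factor in the theorem, rather than the weaker $d^{r/2+2}$ that a naive factor-wise union bound would produce.
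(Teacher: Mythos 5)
Your route is genuinely different from the paper's. You try to construct an explicit plug-in estimator $\hat p=\prod_v\hat p(x_v\mid x_{\mathrm{par}(v)})$ by estimating local marginals $p(x_{A_v})$ with histograms and gluing plug-in conditionals together, whereas the paper never constructs a factorized estimator at all: it bins $[0,1)^d$ once with $b\asymp L\sqrt d/\eps$ bins per side, shows (Theorem~\ref{thm:bias}) that the grid-sampled and renormalized histogram $\tp$ is itself Markov to $G$ with $\norm{p-\tp}_1\le L\sqrt d/b$, bounds the covering number of the Markov histogram class $\sT_b(G)$ by a recursion along a disintegration (Proposition~\ref{prop:rdim-cover}, using Lemmas~\ref{lem:vertex-removal} and~\ref{lem:submult}), and finally runs the Scheff\'e/tournament minimum-distance selector (Lemma~\ref{lem:estimator-algorithm}) over an $\eps$-cover. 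Your combinatorial claim that, in an $r$-disintegration, every $v\in V_i$ has $\abs{\mathrm{par}(v)}\le i-1$ (because for each earlier step $j<i$ at most one vertex of $V_j$ can touch the nested component $A_j\ni v$, by condition~\ref{def:dis_next_step}) is correct and is a clean way to see why $r$ should be the effective dimension; the paper obtains the same effect implicitly through the $b^r$ factor in the covering number recursion rather than through an explicit bounded-fan-in factorization.

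However, there is a genuine gap in your estimation analysis that is worse than the ``$d^{r/2+2}$ versus $d^{r/2+1}$'' bookkeeping issue you flag. The telescoping bound $\norm{p-\hat p}_1\le 2\sum_v\norm{p(x_{A_v})-\hat p(x_{A_v})}_1$ is correct (if one telescopes with the true conditionals on the left and handles bins where $\hat p(x_{\mathrm{par}(v)})=0$ by a default conditional), but it \emph{sums} $d$ per-factor errors, and the per-factor \emph{bias} does not cancel. With $k\asymp L\sqrt d/\eps$, each $\norm{p(x_{A_v})-\hat p(x_{A_v})}_1$ already carries a discretization bias of order $L\sqrt{\abs{A_v}}/k$; summing over all $d$ vertices gives a total bias of order $dL\sqrt r/k$, which is $\Theta(\eps\sqrt{rd})\gg\eps$, not $O(\eps)$. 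The reason is that the empirical histogram marginal $\hat p(x_{A_v})$ converges to the \emph{binned} marginal $p^{\mathrm{bin}}(x_{A_v})$, and the binned joint $p^{\mathrm{bin}}$ is generally \emph{not} Markov to $G$, so the product of its plug-in conditionals is not close to $p^{\mathrm{bin}}$; the bias of each conditional factor must be paid separately. To make the total bias $O(\eps)$ you would need $k\gtrsim dL\sqrt r/\eps$, and then the variance requirement $n\gtrsim d^2 k^r/\eps^2$ produces $n\gtrsim d^{r+2}(L\sqrt r)^r/\eps^{r+2}$, a strictly worse exponent on $d$ than the theorem's $d^{r/2+1}$. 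The paper sidesteps this precisely by choosing the grid-sampled discretization $\tp$ rather than the binned one---grid sampling \emph{does} preserve the Markov property (this is the content of part (1) of the proof of Theorem~\ref{thm:bias}, which checks \eqref{eq:cond_ind} directly)---and by using a covering argument so that the bias is paid once, globally, rather than once per factor. If you want to repair the plug-in route you would have to estimate the conditionals of the grid-sampled density $\tp$ rather than the binned density, and argue that the empirical procedure converges to those; that is a nontrivial additional step your sketch does not supply.
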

This corresponds to a convergence rate of $
\widetilde{O}(n^{-1/(r+2)})$ and is \emph{uniform} over $\sD_L(G)$.
For comparison, the rate $O(n^{-1/(d+2)})$ is known to be optimal for estimating densities in $\sD_{d,L}$ in the total variation norm \citep[e.g.][]{beirlant1998}. Consequently the rate $\widetilde{O}(n^{-1/(r+2)})$ indicates that the rate of convergence when estimating densities in $\sD_L(G)$ is akin to estimating densities in $\sD_{r,L}$. We will see in Section \ref{sec:example-res} that, for graphs typically used to represent audio or image data, the effective dimension for estimating densities can be drastically, even exponentially, smaller than the ambient dimension. 

While it may be reasonable to assume a known graph in some situations, one often encounters the situation where the graph is unknown. If one assumes that the graph $G$ is unknown, but that $G$ lies in some subset of graphs whose maximum graph resilience is bounded above by some value $r$ then it is still possible to achieve a rate of $\widetilde{O}(n^{-1/(r+2)})$.
\begin{thm} \label{thm:unknown-G}

Let $\G$ be the set of all graphs whose number of vertices is $d$ and resilience is $r$.  
Let $L\ge 0$.
For any $0<\eps<1$, there exists an algorithm that takes $n = \Omega\left(\frac{rd^{r/2+1}L^r}{\eps^{r+2}} \log (\frac{dL}{\eps}) + \frac{\log (1/\delta)}{\eps^2}\right)$ i.i.d. samples drawn from  any $p\in\bigcup_{G \in \G} \sD_L(G)$ and returns a distribution $q$ such that
\begin{align*}
    \norm{p - q}_1
    & \leq
    \eps \quad\text{with probability at least $1-\frac{\delta}{3}$.}
\end{align*}

\end{thm}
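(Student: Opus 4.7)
The plan is a straightforward reduction to the known-graph case via a hypothesis selection argument (Scheffé tournament / Devroye--Lugosi estimator). Since the unknown graph $G^{*}$ for which $p\in\sD_L(G^{*})$ belongs to the finite set $\G$, we can produce one candidate estimate per graph and then pick a good candidate using independent samples.

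First, I would split the $n$ samples into two independent batches of (roughly) equal size. On the first batch, for every $G\in\G$ I run the algorithm from Theorem~\ref{thm:main-known-G} with failure probability $\delta/6$ and accuracy $\eps/6$, yielding a collection $\{q_{G}\}_{G\in\G}$. Because $p\in\sD_L(G^{*})$, Theorem~\ref{thm:main-known-G} guarantees $\|p - q_{G^{*}}\|_{1} \le \eps/6$ with probability at least $1-\delta/6$, provided the first batch has size $\Omega\!\bigl(\frac{rd^{r/2+1}L^{r}}{\eps^{r+2}}\log(\frac{dL}{\eps})\bigr)$. Note that only the estimator indexed by the \emph{true} $G^{*}$ needs to succeed, so no union bound over $\G$ is required for this step.

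Second, I would apply a Scheffé-type tournament (any standard hypothesis selection procedure with logarithmic sample complexity works) to the $|\G|$ candidates using the second batch. Such a procedure returns $\hat q$ with
\[
\|p-\hat q\|_{1} \;\le\; 3\min_{G\in\G}\|p-q_{G}\|_{1} + O(\eps) \;\le\; O(\eps)
\]
with probability $1-\delta/6$, provided the second batch has size $\Omega((\log|\G| + \log(1/\delta))/\eps^{2})$. A crude enumeration of labeled graphs on $d$ vertices gives $|\G|\le 2^{\binom{d}{2}}$, so $\log|\G|=O(d^{2})$; for $r\ge 2$ the resulting $d^{2}/\eps^{2}$ term is dominated by $\frac{rd^{r/2+1}L^{r}}{\eps^{r+2}}\log(\frac{dL}{\eps})$, using $d^{r/2+1}\ge d^{2}$, $L\ge 1$ (WLOG), and $\eps\le 1$. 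The edge case $r\le 1$ forces $G$ to be the empty graph, so $|\G|=1$ and selection is trivial. A union bound over the two failure events and rescaling $\eps$ by an absolute constant finishes the argument.

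The main obstacle is confirming that the additional sample cost of selecting among up to $2^{\binom{d}{2}}$ candidates is absorbed into the stated budget without producing a new additive term. This is where the logarithmic dependence on $|\G|$ from tournament-style selection is essential, and where the separate treatment of $r\le 1$ (forcing the trivial graph) is needed to cover the regimes in which $d^{r/2+1}$ does not yet dominate $d^{2}$. Everything else is mechanical sample splitting plus a black-box application of Theorem~\ref{thm:main-known-G}.
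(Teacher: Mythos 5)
Your proposal is correct and lands in essentially the same place as the paper's proof, but by a slightly different (and somewhat more modular) route. The paper does \emph{not} sample-split or black-box Theorem~\ref{thm:main-known-G}; instead it enlarges the candidate set in a single Scheffé tournament: it takes $C=\bigcup_{G\in\G}C_G$, where $C_G$ is the $\eps$-cover of $\sT_b(G)$ from Proposition~\ref{prop:rdim-cover}, observes $\log|C|\le \max_G \log|C_G| + \log|\G| \le db^r\log(2d^{r+1}b/\eps)+d^2$, and runs Lemma~\ref{lem:estimator-algorithm} once over the combined cover. Your two-stage version---run the known-$G$ estimator per graph on one half of the data, then select among the $|\G|$ winners on the other half---is a valid nested-tournament implementation of the same idea, and it correctly exploits the observation that only the candidate indexed by the true $G^*$ must succeed in stage one, so no union bound is needed there. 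Both routes hinge on exactly the same facts: $|\G|\le 2^{\binom d2}$ so $\log|\G|=O(d^2)$; the resulting $d^2/\eps^2$ term is absorbed into $\frac{rd^{r/2+1}L^r}{\eps^{r+2}}\log(dL/\eps)$ when $r\ge 2$; and $r\le 1$ forces the empty graph, making $|\G|=1$. Your treatment of these edge cases actually makes explicit a dominance step (the $L\ge 1$ normalization) that the paper leaves implicit, so the proposal is, if anything, slightly more careful than the paper's terse argument. The trade-off is that your version pays an extra constant-factor in samples from splitting, whereas the paper's one-shot tournament avoids that; this is immaterial at the level of $\Omega(\cdot)$.
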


\section{Graph Resilience Examples}
\label{sec:ex}

Theorems \ref{thm:main-known-G} and \ref{thm:unknown-G} demonstrate that the graph resilience acts as the effective dimension of a nonparametric estimation problem, however, graph resilience is still somewhat of an opaque property. In this section we will go over basic properties of graph resilience and describe graph resiliences for some graphs that reflect real-world settings.
\subsection{Graph Resilience Properties}\label{sec:resilience-properties}
Here we present some basic results regarding graph resilience. We first introduce two very basic lemmas outlining the most fundamental properties of graph resilience. The first lemma shows how graph resilience behaves with disjoint graph union.
\begin{lemma}\label{lem:graph-union}
    Let $G_1,\ldots,G_m$ be graphs, then $\fR(\bigoplus_{i=1}^m G_i) = \max_i\fR(G_i)$.
\end{lemma}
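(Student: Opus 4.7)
The plan is a two-sided inequality argument that exploits the fact that connected components of a disjoint union cannot span different pieces, so a disintegration of $\bigoplus_i G_i$ is essentially the same thing as parallel disintegrations of each $G_i$.

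For the upper bound $\fR(\bigoplus_{i=1}^m G_i) \leq \max_i \fR(G_i)$, I would take an optimal disintegration $(V_1^{(i)}, \ldots, V_{r_i}^{(i)})$ of each $G_i$ with $r_i = \fR(G_i)$, set $r := \max_i r_i$, pad each sequence with empty sets so every one has length $r$, and then define $W_j := \bigcup_{i=1}^m V_j^{(i)}$. Verifying the five conditions of Definition~\ref{def:dis} is routine: the $W_j$ partition $V = \bigsqcup_i V(G_i)$ since each $(V_1^{(i)}, \ldots, V_r^{(i)})$ partitions $V(G_i)$; and at each step the connected components of $\bigl(\bigoplus_i G_i\bigr)\setminus \bigcup_{k<j} W_k$ decompose as the disjoint union (over $i$) of the components of $G_i \setminus \bigcup_{k<j} V_k^{(i)}$, so $W_j$ picks out exactly one vertex from each such component, fulfilling both the "different components" and the "size equals number of components" requirements. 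Padded empty tail entries cause no issue because once a $G_i$ is exhausted its remaining graph is null and contributes zero components.

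For the lower bound $\fR(\bigoplus_i G_i) \geq \max_i \fR(G_i)$, I would start from any minimum-length disintegration $(W_1, \ldots, W_r)$ of $G := \bigoplus_i G_i$ and, for each $i$, form the restriction $V_j^{(i)} := W_j \cap V(G_i)$. The key observation is that $V(G_i)$ is a union of connected components of $G\setminus \bigcup_{k<j} W_k$, so the components of $G_i \setminus \bigcup_{k<j} V_k^{(i)}$ are precisely those components of $G\setminus \bigcup_{k<j} W_k$ that lie in $V(G_i)$. Conditions (2)–(3) for $W_1$ force, by pigeonhole, each component of $G$ to contain exactly one vertex of $W_1$; intersecting with $V(G_i)$ then shows $|V_1^{(i)}|$ equals the number of components of $G_i$ and these vertices lie in distinct components. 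The same argument applies inductively at every step, so $(V_1^{(i)}, \ldots, V_r^{(i)})$ is an $r$-disintegration of $G_i$ (with possibly empty trailing entries), giving $\fR(G_i) \leq r$ for every $i$.

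The proof really is just careful bookkeeping, and the only mild subtlety is handling the padded empty tail entries: one must check that when $G_i\setminus \bigcup_{k<j} V_k^{(i)}$ is already the null graph, the requirements of (4) and (5) are vacuously satisfied by $V_j^{(i)} = \emptyset$ because the number of components is then zero. With that convention in place, both inclusions go through cleanly and combine to give the claimed equality.
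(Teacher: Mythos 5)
Your proof is correct and mirrors the paper's argument closely: the upper bound via merging optimal disintegrations of each $G_i$ (padded with empty sets), and the lower bound via restricting an optimal disintegration of $\bigoplus_i G_i$ to each $V(G_i)$. The only cosmetic difference is that where you handle the empty-tail bookkeeping by hand, the paper packages it into the notion of a \emph{pre-disintegration} (Definition~\ref{def:predis}) and invokes Lemma~\ref{lem:p-disintegration}, a utility it reuses across several of the resilience lemmas.
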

This lemma encapsulates the notion that estimating joint density for a collection of independent random vectors, e.g., $(Y_1,\ldots,Y_m)\sim \prod_{i=1}^m \mu_i$, from a collection of samples is no more difficult than estimating each independent vector individually, $\widehat{\mu}_i \approx \mu_i$, and taking the product measure, $\prod_{i=1}^m \widehat{\mu}_i \approx \prod_{i=1}^m \mu_i$.

The second lemma demonstrates the behavior of graph resilience upon vertex removal.
\begin{lemma}\label{lem:graph-removal}
    Let $G = (V,E)$ be a graph and let $V'\subset V$, then $\fR(G) \le \fR(G\setminus V') + |V'|$.
\end{lemma}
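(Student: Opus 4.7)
The plan is to reduce the lemma to the special case $\fR(G) \le \fR(G \setminus \{v\}) + 1$ for a single vertex $v \in V$, and then induct on $|V'|$: for general $V'$, pick any $v \in V'$, apply the single-vertex bound, and then apply the inductive hypothesis to $G \setminus \{v\}$ with $V' \setminus \{v\}$. This gives $\fR(G) \le \fR(G \setminus \{v\}) + 1 \le \fR(G \setminus V') + (|V'|-1) + 1$.

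For the single-vertex case, let $r' = \fR(G \setminus \{v\})$ and fix an optimal $r'$-disintegration $(W_1, \ldots, W_{r'})$ of $G \setminus \{v\}$. Let $C$ be the component of $G$ containing $v$, and split each $W_i$ as $W_i = W_i^C \sqcup W_i^D$, where $W_i^C := W_i \cap C$ and $W_i^D := W_i \setminus C$. The candidate disintegration of $G$ of length $r' + 1$ I would propose is
\begin{equation*}
V_1 := \{v\} \cup W_1^D, \qquad V_i := W_{i-1}^C \cup W_i^D \text{ for } 2 \le i \le r', \qquad V_{r'+1} := W_{r'}^C.
\end{equation*}
The intuition is that on the ``$D$-side'' (the components of $G$ not containing $v$) the new disintegration mirrors the original $W$-disintegration of $G \setminus \{v\}$, while on the ``$C$-side'' (inside the component containing $v$) it lags by exactly one step, since step $1$ had to be spent removing $v$ itself.

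The verification has three ingredients. First, the $V_i$ partition $V$: a telescoping computation yields $\bigcup_{i=1}^k V_i = \{v\} \cup W_1 \cup \cdots \cup W_{k-1} \cup W_k^D$ for $k \le r'$, and at $k = r'+1$ this evaluates to $\{v\} \cup \bigcup_i W_i = V$. Second, $V_1$ contains exactly one vertex per component of $G$: $v$ accounts for $C$, and $W_1^D$ inherits from $W_1$ one vertex per component of $G \setminus \{v\}$ lying outside $C$, which are precisely the components of $G$ other than $C$. Third, for $2 \le i \le r'$, the remaining vertex set $V \setminus \bigcup_{j<i} V_j$ decomposes into a $C$-side $C \setminus (\{v\} \cup W_1^C \cup \cdots \cup W_{i-2}^C)$ and a $D$-side $(V \setminus C) \setminus (W_1^D \cup \cdots \cup W_{i-1}^D)$; these are precisely the restrictions to $C$ and $V \setminus C$ of $(G \setminus \{v\}) \setminus (W_1 \cup \cdots \cup W_{i-2})$ and $(G \setminus \{v\}) \setminus (W_1 \cup \cdots \cup W_{i-1})$ respectively. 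Hence $W_{i-1}^C$ picks up one vertex per $C$-side component (by the disintegration property applied at step $i-1$) and $W_i^D$ picks up one vertex per $D$-side component (by the property applied at step $i$), so $V_i$ meets conditions (4)--(5) of Definition~\ref{def:dis}. An analogous argument dispatches the terminal step $V_{r'+1} = W_{r'}^C$, where by then the $D$-side is empty.

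The main obstacle is the bookkeeping in the third ingredient: one must check carefully that removing $V_1, \ldots, V_{i-1}$ from $G$ leaves a graph whose $C$-side is exactly ``one $W$-step behind'' the $D$-side, so that the components partition cleanly into those of two specific graphs $(G \setminus \{v\}) \setminus (W_1 \cup \cdots \cup W_k)$. Once this alignment is pinned down, the disintegration properties of the $W_i$ immediately deliver the one-vertex-per-component property for each $V_i$, and the bound $\fR(G) \le r' + 1$ follows.
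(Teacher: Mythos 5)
Your proof is correct, but it takes a genuinely different route from the paper. The paper reduces to $|V'|=1$ as you do, but then simply prepends $\{v\}$ to a $\fR(G\setminus\{v\})$-disintegration $(W_1,\ldots,W_{r'})$ of $G\setminus\{v\}$, obtaining the tuple $(\{v\},W_1,\ldots,W_{r'})$, which is a \emph{pre-disintegration} of $G$ (it satisfies the partition and different-components conditions of Definition~\ref{def:dis} but not necessarily the two size conditions), and then invokes Lemma~\ref{lem:p-disintegration}, which says that a pre-disintegration of length $\ell$ implies $\fR(G)\le\ell$. You instead construct an honest disintegration directly: splitting each $W_i$ into its intersection with the component $C\ni v$ and its complement, then interleaving the $D$-side in sync with $W$ while the $C$-side lags one step behind. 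Your interleaving is, in effect, a hand-executed instance of the ``tightening'' that Lemma~\ref{lem:p-disintegration} performs abstractly. Both arguments are valid; the paper's is shorter and cleaner because it factors the bookkeeping into a reusable lemma (which also underlies the proofs of Lemmas~\ref{lem:graph-union}, \ref{lem:edge-removal}, \ref{lem:graph-subset}, etc.), whereas yours is self-contained and gives an explicit optimal-length disintegration of $G$ without appealing to that machinery. One small presentational caveat: your construction as written assumes $r'\ge 1$; the degenerate cases ($r'=0$, or $r'=1$ where the middle range $2\le i\le r'$ is empty) should be noted, though they are trivially handled.
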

This corresponds to conditioning the random variables in $V\setminus V'$ on the $|V'|$-dimensional random vector corresponding to $V'$.

Simpler graphs, in terms of number of edges and vertices, tend have smaller graph resilience. The following lemma shows that adding edges to a graph will, at most, increase its resilience by the number of edges added.

\begin{lemma}\label{lem:edge-removal}
    Let $G = (V,E)$ be a graph, let $E'$ be a set of edges for vertices $V$, and let $G' = (V,E\cup E')$, then $\fR(G') \le \fR(G) + |E'|$.
\end{lemma}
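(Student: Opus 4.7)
The plan is to induct on $|E'|$. The base case $|E'|=0$ is immediate; for the inductive step, pick any $e \in E'$ and set $G_0 = (V, E \cup (E' \setminus \{e\}))$, so that $G' = G_0 + e$ and the inductive hypothesis gives $\fR(G_0) \le \fR(G) + |E'| - 1$. It therefore suffices to prove the single-edge claim: for every graph $H$ and edge $e = \{u,v\}$, $\fR(H + e) \le \fR(H) + 1$. For this I would apply Lemma~\ref{lem:graph-removal} to $H + e$ with $V' = \{u\}$, obtaining $\fR(H + e) \le \fR((H + e) \setminus \{u\}) + 1$. Since $e$ is incident to $u$, the graph $(H + e) \setminus \{u\}$ coincides with $H \setminus \{u\}$, so the lemma will follow from the auxiliary monotonicity statement $\fR(H \setminus \{u\}) \le \fR(H)$.

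This monotonicity claim is the main obstacle, because simply deleting $u$ from its level in a disintegration of $H$ need not yield a valid disintegration of $H \setminus \{u\}$: if $u$ is a cut vertex of its component at some stage, its removal splits that component into several pieces and the remaining level contains too few representatives. To get around this I would pass to an elimination-forest reformulation. Given a minimum disintegration $(V_1,\ldots,V_r)$ of $H$, construct a rooted forest $F$ on $V(H)$ of depth $r$ whose roots are the elements of $V_1$, and where for $v \in V_k$ with $k \ge 2$ the parent of $v$ is the unique element of $V_{k-1}$ lying in the same component of $H \setminus (V_1 \cup \cdots \cup V_{k-2})$ as $v$. A routine induction on $k$ shows that for every edge $\{x,y\} \in E(H)$, one of $x,y$ is an ancestor of the other in $F$; conversely, any rooted forest with this ancestor-descendant property (after, at each level, splitting trees into subtrees corresponding to the current connected components, which only decreases depth) gives back a disintegration of length equal to the resulting depth.

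Given this $F$, I would delete the node $u$ and reattach each of its children to $u$'s parent in $F$ (or promote them to roots if $u$ was itself a root). Every root-to-leaf path in the resulting forest $F^u$ arises from a path in $F$ by at most dropping the single vertex $u$, so the depth of $F^u$ is at most $r$; and the ancestor-descendant property is preserved for every edge of $H \setminus \{u\}$, since such edges avoid $u$ and reattachment does not disturb ancestry between remaining vertices. Converting $F^u$ back to a disintegration as above yields a disintegration of $H \setminus \{u\}$ of length at most $r$, establishing $\fR(H \setminus \{u\}) \le \fR(H)$. Combining the pieces gives $\fR(H + e) \le \fR(H) + 1$, and induction on $|E'|$ completes the proof.
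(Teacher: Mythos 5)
Your proof is correct, but it takes a noticeably different (and more elaborate) route than the paper's. The paper observes that if $D=(D_1,\dots,D_r)$ is an $r$-disintegration of $G$ and the new edge is $\{1,2\}$, then the $(r+1)$-tuple $\tD_1=\{1\}$, $\tD_{i+1}=D_i\setminus\{1\}$ is a pre-disintegration of $G'$: after deleting $1$ in the first step the new edge is gone, and removing an extra vertex from $G\setminus(D_1\cup\dots\cup D_{i-1})$ can only split components, so representatives that were in distinct components stay in distinct components. Lemma~\ref{lem:p-disintegration} then upgrades the pre-disintegration to a disintegration of length at most $r+1$. Your plan instead is the chain $\fR(G+e)\le\fR((G+e)\setminus\{u\})+1=\fR(G\setminus\{u\})+1\le\fR(G)+1$, which is perfectly sound and, as a structural observation, arguably cleaner: it derives edge-addition from vertex-removal (Lemma~\ref{lem:graph-removal}) plus subgraph monotonicity. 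The catch is that the monotonicity step you treat as "the main obstacle" is exactly Lemma~\ref{lem:graph-subset} in the paper, already available and proved in two lines via the pre-disintegration device (delete $V\setminus V'$ from each block of a disintegration of $G$ and tidy up with Lemma~\ref{lem:p-disintegration}). You instead re-derive a special case of it through an elimination-forest reformulation of disintegrations. That reformulation is correct and potentially illuminating --- the forest levels really do form a pre-disintegration, since any two same-level vertices connected in the surviving graph would be forced to have the same level-$k$ ancestor --- but it reinvents the pre-disintegration machinery in heavier form. Had you simply cited Lemma~\ref{lem:graph-subset}, your argument would be a genuinely shorter alternative to the paper's direct construction.
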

 For a pair of graphs $G,G'$, let the relation $G'\le G$ denote that $G'$ is isomorphic to some subgraph of $G$. The following lemma demonstrates that removing random variables and dependencies between random variables can only reduce graph resilience.
\begin{lemma}\label{lem:graph-subset}
    For a pair of graphs $G$ and $G'$, if $G' \le G$, then $\fR(G') \le \fR(G)$. 
\end{lemma}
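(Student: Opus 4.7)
The plan is to induct on $r = \fR(G)$. For the base case $r = 0$, $G$ must be the null graph, so any subgraph $G'$ is also null and $\fR(G') = 0$.

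For the inductive step, I would first identify $G'$ with an actual subgraph of $G$ via the isomorphism, so that $V(G') \subseteq V(G)$ and the edges of $G'$ are a subset of those of $G$ restricted to $V(G')$. Then I would fix an $r$-disintegration $(V_1,\ldots,V_r)$ of $G$, noting that $V_1$ contains exactly one vertex per component of $G$ and $\fR(G \setminus V_1) \le r-1$. The main task is to build the first step $W_1$ of a disintegration of $G'$ that is ``compatible'' with $V_1$. My construction: for each component $C'$ of $G'$, let $C$ be the unique component of $G$ containing $C'$ and let $v_C$ be the vertex of $V_1$ in $C$; put $v_C$ into $W_1$ if $v_C \in C'$, and otherwise include an arbitrary vertex of $C'$. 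By construction, $W_1$ contains exactly one vertex per component of $G'$, and crucially $V_1 \cap V(G') \subseteq W_1$.

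The key observation will then be that $G' \setminus W_1 \le G \setminus V_1$: the containment $V_1 \cap V(G') \subseteq W_1$ forces $V(G') \setminus W_1 \subseteq V(G) \setminus V_1$, and each edge of $G' \setminus W_1$ lies in $G$ with both endpoints in $V(G) \setminus V_1$, so it already appears in $G \setminus V_1$. Applying the inductive hypothesis to $G' \setminus W_1 \le G \setminus V_1$ (where $\fR(G \setminus V_1) < r$) will give $\fR(G' \setminus W_1) \le \fR(G \setminus V_1) \le r-1$, and prepending $W_1$ to any optimal disintegration of $G' \setminus W_1$ produces an $r$-disintegration of $G'$, finishing the induction.

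The main obstacle is the construction of $W_1$, which has to simultaneously satisfy two competing requirements. A single component $C$ of $G$ may now split into many components of $G'$, so the naive choice $V_1 \cap V(G')$ is typically too small to be a first disintegration step of $G'$; conversely, arbitrarily augmenting $V_1$ risks placing two picks in a single component of $G'$, violating the cardinality condition on the first step. The construction above threads this needle: each surviving $v_C$ lies in exactly one component of $G'$ and is assigned there, while the remaining components of $G'$ inside $C$ absorb the arbitrary choices. This is what simultaneously secures both the one-vertex-per-component condition and the containment $V_1 \cap V(G') \subseteq W_1$ needed to apply the inductive hypothesis on the subgraph relation.
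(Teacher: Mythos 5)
Your proof is correct, but it takes a genuinely different route from the paper's. The paper first establishes the auxiliary notion of a \emph{pre-disintegration} (a disintegration-like tuple that may skip components, i.e. drops the cardinality requirements \ref{def:dis_first_size} and \ref{def:dis_next_size}) and proves, via a partial-order/minimality argument (Lemma~\ref{lem:p-disintegration}), that any $\ell$-pre-disintegration of a graph certifies resilience at most $\ell$. With that tool in hand, Lemma~\ref{lem:graph-subset} becomes a one-liner: take an optimal disintegration $D$ of $G$, set $\tD_i = D_i \cap V'$, and observe that $\tD$ is a pre-disintegration of $G'$ of length at most $\fR(G)$. You instead build a bona fide disintegration of $G'$ directly by induction on $\fR(G)$, which forces you to confront exactly the issue the pre-disintegration machinery abstracts away: a single component $C$ of $G$ may shatter into many components of $G'$, so $V_1 \cap V(G')$ is typically too small to be a legal first step. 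Your $W_1$ construction (keep $v_C$ when it survives, augment with one arbitrary vertex per orphan component of $G'$) resolves this cleanly and, crucially, preserves the containment $V_1 \cap V(G') \subseteq W_1$ that makes $G' \setminus W_1 \le G \setminus V_1$, so the induction closes. Both proofs are valid; the paper's is shorter because the work has been amortized into Lemma~\ref{lem:p-disintegration} (which is also reused for Lemmas~\ref{lem:graph-removal}, \ref{lem:edge-removal}, \ref{lem:graph-complete}, and \ref{lem:meta-graph}), whereas yours is self-contained and makes the component-splitting obstacle, and its resolution, explicit. One tiny point: your base case should be handled slightly more carefully, since the paper's definition implicitly starts disintegrations at $r \ge 1$; basing the induction at $r=1$ (empty edge set, so any subgraph also has empty edge set) would match the paper's conventions more cleanly.
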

The following corollary follows directly from the previous Lemma \ref{lem:graph-subset} and Lemma \ref{lem:graph-complete}, which we present later.
\begin{cor}\label{cor:max-clique}
    Let $G$ be a graph whose largest clique contains $k$ vertices, then $r(G) \ge k$.
\end{cor}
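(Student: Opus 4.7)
The plan is to combine the two cited lemmas directly. Let $k$ be the size of a largest clique in $G$, and let $K_k$ denote the complete graph on $k$ vertices. Since the $k$ clique vertices (together with all edges between them) form a subgraph of $G$ that is isomorphic to $K_k$, we have $K_k \le G$ in the sense defined just before Lemma \ref{lem:graph-subset}.

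Applying Lemma \ref{lem:graph-subset}, this immediately yields $\fR(K_k) \le \fR(G)$. It then suffices to invoke the forthcoming Lemma \ref{lem:graph-complete}, which (as its placement in the corollary suggests) computes $\fR(K_k) = k$ for the complete graph. Chaining the two inequalities gives $k = \fR(K_k) \le \fR(G)$, which is the claim.

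The only thing worth remarking on is that the lower bound $\fR(K_k) \ge k$ for the complete graph is intuitively forced by Definition \ref{def:dis}: $K_k$ is connected, and removing any proper subset of its vertices still leaves a complete (hence connected) graph, so at every disintegration step the current graph has exactly one component and thus $|V_i| = 1$. Therefore every disintegration of $K_k$ must have length exactly $k$. This is the content we are borrowing from Lemma \ref{lem:graph-complete}, and there is no real obstacle in the corollary itself beyond naming the clique subgraph and invoking the two lemmas in sequence.
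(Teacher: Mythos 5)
Your proof is correct and takes exactly the route the paper intends: the paper states that the corollary follows directly from Lemma \ref{lem:graph-subset} and Lemma \ref{lem:graph-complete}, and you chain these in precisely that way, with an accurate side remark on why $\fR(K_k) \ge k$. No issues.
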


\subsection{Example Graph Resiliences}\label{sec:example-res}
\begin{table}[t]
    \centering
    \begin{tabular}{lcc}
    \toprule
        Graph $G$ & Resilience $r=r(G)$ & Sample complexity $n$\\
    \midrule
        Trees (depth $k$) & $\le k$ & $\eps^{-(k+2)}$ \\
        Trees (general) & $O(\log(d))$ & $\sim \eps^{-(\log(d)+2)}$ \\
        Paths & $O(\log(d))$ & $\sim \eps^{-(\log(d)+2)}$ \\
        Grid  & $O(\sqrt{d})$ & $\sim \eps^{-(\sqrt{d}+2)}$ \\
        Complete graph & $d$ & $ \eps^{-(d+2)}$ \\
    \bottomrule\\
    \end{tabular}
    \caption{Example graph resiliences  and corresponding sample complexities}
    \label{tab:examples}
\end{table}

From these properties of graph resilience we can derive the graph resilience of some example graphs. Some results from this section are summarized in Table \ref{tab:examples}. We begin with a couple of simple cases.

The following lemma demonstrates the surprising fact that, if even a single edge is missing from the graph, then the effective dimension of the estimation problem is strictly smaller than the ambient dimension $d$.
\begin{lemma}\label{lem:graph-complete}
    For a graph $G = (V,E)$, $\fR(G) = |V|$ if and only if $G$ is a complete graph.
\end{lemma}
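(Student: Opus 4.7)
The plan is a short case analysis around the two implications. First I note the trivial upper bound $r(H)\le |V(H)|$ for any graph $H$: the disintegration that selects a single vertex from each component at every step certifies this, so the easy content is all in the lower bound.

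For the ($\Leftarrow$) direction, assume $G = K_d$. Any induced subgraph of a complete graph is again complete, hence connected whenever nonempty. Thus at every stage of a disintegration $(V_1,\dots,V_r)$ the residual graph $G\setminus \bigcup_{j<i}V_j$ has exactly one component, so conditions \ref{def:dis_first_size} and \ref{def:dis_next_size} force $|V_i|=1$. Since $\{V_i\}$ partitions $V$, any disintegration must have length $d$, giving $r(K_d)\ge d$. Combined with $r(K_d)\le d$, we get equality.

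For the ($\Rightarrow$) direction I argue the contrapositive: if $G$ is not complete, then $r(G)\le d-1$. I split on connectivity. If $G$ is disconnected, write $G = G_1\oplus\cdots\oplus G_k$ with $k\ge 2$. By Lemma \ref{lem:graph-union} and the trivial bound,
\[
r(G) = \max_i r(G_i) \le \max_i |V(G_i)| \le d-(k-1) \le d-1.
\]
If instead $G$ is connected but not complete, pick two non-adjacent vertices $u,v$. Because $G$ is connected, every $u$–$v$ path must traverse intermediate vertices, so there exists a vertex separator $V'\subseteq V\setminus\{u,v\}$ between $u$ and $v$ (e.g.\ take all of $V\setminus\{u,v\}$ and then shrink to a minimal such separator, giving $|V'|\le d-2$). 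Then $G\setminus V'$ has at least two components, so each component has at most $d-|V'|-1$ vertices. Applying Lemma \ref{lem:graph-removal} and then Lemma \ref{lem:graph-union},
\[
r(G) \le r(G\setminus V') + |V'| \le (d-|V'|-1) + |V'| = d-1.
\]

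The argument has no hard step; the main thing to be careful about is the one-line justification of $r(H)\le |V(H)|$ (to convert component-size bounds into resilience bounds), and the observation that in a connected non-complete graph a non-adjacent pair always admits a separator of size at most $d-2$, which is immediate from connectedness. These are the only places where one has to stop and explicitly verify something beyond a direct application of the previously stated lemmas.
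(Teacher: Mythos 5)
Your proof is correct, but it takes a genuinely different route than the paper. The paper's proof of the contrapositive is a one-liner: pick a non-adjacent pair $d-1,d$ and directly exhibit the pre-disintegration $(\{1\},\dots,\{d-2\},\{d-1,d\})$, which has length $d-1$; Lemma \ref{lem:p-disintegration} then gives $\fR(G)\le d-1$. You instead route through Lemmas \ref{lem:graph-removal} and \ref{lem:graph-union} plus the auxiliary bound $\fR(H)\le|V(H)|$, with a case split on connectivity. Both work, and your writeup of the ($\Leftarrow$) direction is a nice bit of explicit reasoning that the paper leaves as ``clear.'' That said, your contrapositive argument is noticeably heavier than it needs to be: the case split and the talk of shrinking to a minimal separator are superfluous. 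For any non-complete $G$, just take $V'=V\setminus\{u,v\}$ for any non-adjacent $u,v$; then $G\setminus V'$ is the two-vertex empty graph, which has resilience $1$ by Lemma \ref{lem:graph-empty}, so Lemma \ref{lem:graph-removal} gives $\fR(G)\le 1+(d-2)=d-1$ in one step, with no connectivity hypothesis required. That collapses your two cases into one and dispenses with the trivial-upper-bound lemma for the contrapositive direction entirely (you still need it, or an equivalent observation, for the ($\Leftarrow$) direction). So: correct, different mechanism, but could be streamlined to match the economy of the paper's construction.
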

Analogously, a graph can only have resilience 1 if it is the empty graph.
\begin{lemma} \label{lem:graph-empty}
    For a graph $G = (V,E)$, $\fR(G) = 1$ if and only if $E = \emptyset$.
\end{lemma}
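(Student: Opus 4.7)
The plan is to verify both directions directly from Definition \ref{def:dis}, which makes this lemma essentially a definitional unpacking rather than a deep result.

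For the easy direction, suppose $E = \emptyset$. Then every vertex of $G$ is its own connected component, so $G$ has exactly $|V|$ components. I would propose the single-step disintegration given by $V_1 = V$ and check the five conditions of Definition \ref{def:dis}: (1) $\{V\}$ is trivially a partition of $V$; (2) every element of $V_1 = V$ lies in a distinct component, since each vertex is its own component; (3) $|V_1| = |V|$ equals the number of components; (4) and (5) are vacuous since $r = 1$. This establishes $\fR(G) \le 1$, and $\fR(G) \ge 1$ holds for any nonempty graph by definition.

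For the converse, suppose $\fR(G) = 1$, witnessed by a disintegration $(V_1)$. By condition \ref{def:dis_partition} of Definition \ref{def:dis}, $V_1$ partitions $V$, so $V_1 = V$. Then condition \ref{def:dis_first_step} says every element of $V_1 = V$ lies in a different component of $G$. The only way every vertex of $G$ can sit in a distinct component is for each vertex to form its own singleton component, which forces $E = \emptyset$.

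I do not anticipate any obstacle: the entire argument is a direct application of conditions \ref{def:dis_partition}--\ref{def:dis_first_size} of the disintegration definition, with no combinatorial or probabilistic content beyond observing that ``no edges'' and ``every vertex is a component'' are the same statement.
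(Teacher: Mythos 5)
Your proof is correct and follows essentially the same approach as the paper: for the forward direction, $D_1=V$ gives a $1$-disintegration when $E=\emptyset$, and for the converse, the partition and component conditions force every vertex into its own component, hence $E=\emptyset$. The only superficial difference is that the paper phrases the converse as a contrapositive, but the underlying argument is identical.
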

The star graph is an example of a connected graph whose resilience is very small. Recall that a star graph $S_{d}$ is a graph with a single central vertex connected to every other node, i.e. $E=\{(i,j) : j \ne i\}$. Equivalently, it is (a) a complete bipartite graph with a single vertex in one partition or (b) a tree of depth one with a single root.
\begin{lemma}\label{lem:star}
    $r(S_{d}) \le 2$.
\end{lemma}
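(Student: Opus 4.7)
The plan is to prove the upper bound by exhibiting an explicit $2$-disintegration $(V_1,V_2)$ of the star graph $S_d$, and then check the five conditions of Definition~\ref{def:dis} one by one. Since $S_d$ is connected (for $d\ge 2$), condition~\ref{def:dis_first_size} forces $|V_1|=1$, so there is essentially only one natural candidate to try: $V_1$ equal to the center of the star.

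More precisely, I would let $c$ denote the central vertex of $S_d$ (the one adjacent to all others), set $V_1 = \{c\}$ and $V_2 = V \setminus \{c\}$, and verify Definition~\ref{def:dis} as follows. Conditions~\ref{def:dis_partition}, \ref{def:dis_first_step}, \ref{def:dis_first_size} are immediate: $\{V_1,V_2\}$ partitions $V$, $V_1$ is a single vertex lying in the unique component of $S_d$, and $|V_1|=1$ equals the number of components of $S_d$. For conditions~\ref{def:dis_next_step} and \ref{def:dis_next_size}, I would observe that, since every edge of $S_d$ is incident to $c$, the graph $S_d \setminus V_1 = S_d \setminus \{c\}$ has no edges at all and therefore consists of exactly $d-1$ isolated vertices, one per element of $V_2$. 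Hence the $d-1$ elements of $V_2$ lie in $d-1$ distinct components of $S_d\setminus V_1$, and $|V_2|$ equals this number of components. Removing $V_2$ then yields the null graph, so $(V_1,V_2)$ is a valid $2$-disintegration and $\fR(S_d)\le 2$.

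There is essentially no obstacle here; the only subtlety is making sure the definition is applied correctly, in particular that we are taking ``one vertex from each component'' rather than ``an entire component'' at each stage, which is exactly what conditions~\ref{def:dis_next_step}--\ref{def:dis_next_size} enforce and what makes $V_2$ legal despite containing many vertices. As a sanity check, Lemma~\ref{lem:graph-empty} gives the matching lower bound $\fR(S_d)\ge 2$ whenever $d\ge 2$ since $S_d$ has at least one edge, so the bound is tight.
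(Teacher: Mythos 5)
Your proof is correct and matches the paper's approach exactly: both take the central vertex as $V_1$ and the remaining vertices as $V_2$, then observe that removing the center leaves an edgeless graph so that $V_2$ legitimately selects one vertex per component. You simply verify Definition~\ref{def:dis} in more detail than the paper's one-line proof.
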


In the remainder of this section, we describe additional examples of graphs whose resilience can be bounded. We focus on four broad classes of graphs, categorized by the application of interest: Hierarchical data (e.g. language, biology, phylogeny), sequential data (e.g. audio, video, language), spatial data (e.g. images, vision, signal processing), and clustered data (e.g. genetics, ecology, medicine). See Figure~\ref{fig:intro-ex} for a guide to these classes of graphs.

The following definition will be helpful for analyzing graphs that have a linear or grid shape. 
\begin{defn}\label{def:power}
    For a graph $G$ and $n\in \N$, the power graph $G^n$, is the graph which inherits its vertices from $G$ and a pair of vertices in $G^n$ are adjacent when their distance in $G$ is at most $n$.
\end{defn}

\subsubsection{Hierarchical Data}

Hierarchical data arises from distributions that have a tree-like structure, i.e. $G$ is tree. In the worst-case, the resilience of a tree can scale at most logarithmically with $d$, however, in practical applications where the tree has bounded depth the resilience is also bounded.

\begin{lemma}\label{lem:k-ary}
    Let $G$ be a $k$-ary tree with depth $m$, then $\fR(G) \le m$.
\end{lemma}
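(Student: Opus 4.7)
The plan is to proceed by induction on the depth $m$, using Lemmas~\ref{lem:graph-union} and \ref{lem:graph-removal} as the workhorses. The cleanest formulation is the slightly stronger statement that any $k$-ary tree of depth at most $m$ has resilience at most $m$; this allows subtrees whose depths are smaller than $m-1$ to be handled uniformly when peeling off the root.

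For the base case $m = 1$, a $k$-ary tree of depth at most $1$ is a single root vertex. This graph has no edges, so by Lemma~\ref{lem:graph-empty} its resilience is $1$, matching the bound. For the inductive step, let $G$ be a $k$-ary tree of depth $m \ge 2$ rooted at $v$. Removing $v$ disconnects $G$ into at most $k$ subtrees $G_1, \ldots, G_\ell$ with $\ell \le k$, each of which is itself a $k$-ary tree of depth at most $m - 1$. By Lemma~\ref{lem:graph-union},
\begin{align*}
    r(G \setminus \{v\}) \;=\; r(G_1 \oplus \cdots \oplus G_\ell) \;=\; \max_i r(G_i) \;\le\; m - 1,
\end{align*}
where the final inequality uses the inductive hypothesis on each $G_i$. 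Applying Lemma~\ref{lem:graph-removal} with $V' = \{v\}$ then gives
\begin{align*}
    r(G) \;\le\; r(G \setminus \{v\}) + 1 \;\le\; m,
\end{align*}
as required.

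I do not anticipate any serious obstacle. The argument is essentially the explicit disintegration that strips off the tree level-by-level (take $V_i$ to be the set of vertices at level $i$), repackaged through the two structural lemmas so that one need not verify the axioms of Definition~\ref{def:dis} directly. The only subtle point to keep track of is that a $k$-ary tree need not be a \emph{complete} $k$-ary tree—some internal nodes may have fewer than $k$ children, and some leaves may occur at depths strictly below $m$—but this is automatically absorbed by the induction, since each subtree $G_i$ is itself a $k$-ary tree whose depth is at most $m - 1$ regardless of whether branches are missing.
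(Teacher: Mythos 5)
Your proof is correct, and while it is morally the same level-by-level stripping the paper uses, the packaging is genuinely different. The paper's proof is a direct construction: it sets $V_i$ to be the set of vertices at the $i$-th level (root at level $1$), asserts that this is an $m$-disintegration, and appeals to Definition~\ref{def:res}; the reader is expected to verify the five conditions of Definition~\ref{def:dis}. Your proof instead proceeds by induction on the depth and reduces everything to Lemma~\ref{lem:graph-removal} (peel off the root) and Lemma~\ref{lem:graph-union} (take the maximum over the resulting subtrees), never touching the axioms of Definition~\ref{def:dis} directly. What the paper's route buys is brevity and an explicit witness for the disintegration; what your route buys is modularity and a built-in handling of the irregularity you flag at the end — subtrees of non-uniform depth and internal nodes with fewer than $k$ children are absorbed automatically by the strengthened inductive hypothesis ``depth at most $m$,'' whereas the direct construction requires one to notice that condition~\ref{def:dis_next_size} still holds when some branches terminate early. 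Both are valid; you correctly identify the equivalence yourself, and your version is arguably the more careful of the two.
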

\begin{lemma}\label{lem:tree}
    Let $G$ be a tree with $d$ vertices, then $\fR(G) \le \log_2(d)+1$. 
\end{lemma}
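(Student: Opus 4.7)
The plan is to induct on $d$ and use the classical centroid theorem for trees together with the tools already established in Lemmas \ref{lem:graph-union} and \ref{lem:graph-removal}. Recall that the centroid theorem states: for any tree $T$ on $n \ge 2$ vertices, there exists a vertex $c$ (a centroid) such that every connected component of $T \setminus \{c\}$ has at most $\lfloor n/2 \rfloor$ vertices. This will be the quantitative engine driving the $\log_2(d)+1$ bound, since removing one well-chosen vertex per step at least halves the size of every remaining tree.

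More concretely, the base case $d=1$ is immediate since then $G$ is a single vertex with $\fR(G)=1 \le \log_2(1)+1$. For the inductive step, given a tree $G$ on $d \ge 2$ vertices, take a centroid $c$ and consider $G \setminus \{c\}$. This is a forest $T_1 \oplus \cdots \oplus T_m$ whose connected components each have at most $\lfloor d/2 \rfloor$ vertices. By the inductive hypothesis, $\fR(T_i) \le \log_2(\lfloor d/2 \rfloor) + 1 \le \log_2(d/2) + 1 = \log_2(d)$ for each $i$. Applying Lemma \ref{lem:graph-union} gives $\fR(G \setminus \{c\}) = \max_i \fR(T_i) \le \log_2(d)$, and then Lemma \ref{lem:graph-removal} with $V' = \{c\}$ yields
\[
    \fR(G) \;\le\; \fR(G \setminus \{c\}) + 1 \;\le\; \log_2(d) + 1,
\]
completing the induction.

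The bound itself is not delicate — the halving at each disintegration step directly forces the logarithmic length, and both of the auxiliary lemmas needed have already been established earlier in the section. The only step that requires a bit of care is the appeal to the centroid theorem and verifying that every subgraph produced along the recursion is indeed a forest (so the centroid theorem applies componentwise). Since subgraphs of trees are forests and the inductive hypothesis can be applied to each tree in the forest produced by removing $\{c\}$, there is no real obstacle beyond invoking the standard centroid fact. If a self-contained justification of the centroid theorem is desired, one can include a short proof by a standard argument: starting from any vertex, repeatedly move to the neighbor whose associated subtree has the largest size; this process terminates at a vertex all of whose subtrees have size at most $\lfloor n/2 \rfloor$.
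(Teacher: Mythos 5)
Your proof is correct and follows essentially the same route as the paper's: both arguments find a centroid vertex whose removal leaves components of size at most $\lfloor d/2\rfloor$, then combine Lemma~\ref{lem:graph-removal} and Lemma~\ref{lem:graph-union} with induction to obtain the $\log_2(d)+1$ bound. The only cosmetic difference is that you cite the centroid theorem as a classical fact (with a sketch), whereas the paper proves the centroid's existence inline via a minimizing/contradiction argument, which is the same descent you describe.
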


\subsubsection{Sequential Data}

A path graph naturally represents sequential data. For a more realistic model one may assume that a vertex in a path graph is connected to all vertices within a given distance, so that nearby vertices are highly dependent, but far away indices are less dependent. Even with this assumption we see that, like tree graphs, the path graph's resilience only grows at rate $O(\log d)$.
\begin{defn}\label{def:path_graph}
    The \emph{path graph of length $d$}, denoted $L_d$, is the graph $(V,E)$ with $V= [d]$ and $E = \{\{i,j \} \mid |i-j| = 1 \}$.
\end{defn}
\begin{prop}\label{prop:path}
    Recall that $L_d$ is the path graph of length $d$ as defined in Definition \ref{def:path_graph}.
    For any $s,t\in \N$, we have
    \begin{align*}
        \fR(L_{t(2^{s}-1)}^t) \le st.
    \end{align*}
    Here, $L_{t(2^{s}-1)}^t$ is the power graph of $L_{t(2^{s}-1)}$ as defined in Definition \ref{def:power}.
\end{prop}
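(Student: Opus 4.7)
The plan is to prove this by induction on $s$, leveraging the two structural properties of graph resilience already established in Lemmas~\ref{lem:graph-union} and~\ref{lem:graph-removal}. Set $N_s := t(2^s - 1)$. The algebraic identity $N_s = 2 N_{s-1} + t$ is the whole reason this particular dimension appears in the statement, since it suggests excising a block of $t$ vertices in the middle to split the graph into two equal copies of the smaller problem.

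The key geometric observation is that $t$ consecutive vertices form a separator in $L_{N_s}^t$. Specifically, define $V' := \{N_{s-1}+1, N_{s-1}+2, \ldots, N_{s-1}+t\}$. Any pair $(i, j)$ with $i \le N_{s-1}$ and $j \ge N_{s-1}+t+1$ satisfies $|i - j| \ge t+1$, so $i$ and $j$ are not adjacent in $L_{N_s}^t$. Since $|V'| = t$ and the two sides have $N_{s-1}$ vertices each, I obtain the isomorphism
\begin{align*}
L_{N_s}^t \setminus V' \;\cong\; L_{N_{s-1}}^t \oplus L_{N_{s-1}}^t.
\end{align*}

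Given this decomposition, the inductive step is immediate: by Lemma~\ref{lem:graph-removal},
\begin{align*}
\fR(L_{N_s}^t) \;\le\; \fR(L_{N_s}^t \setminus V') + |V'| \;=\; \fR(L_{N_{s-1}}^t \oplus L_{N_{s-1}}^t) + t,
\end{align*}
and by Lemma~\ref{lem:graph-union} the right-hand side equals $\fR(L_{N_{s-1}}^t) + t \le (s-1)t + t = st$ using the inductive hypothesis. For the base case $s = 1$, one has $N_1 = t$, and every two vertices of $L_t$ lie within path-distance $t - 1 \le t$, so $L_t^t = K_t$ is the complete graph; Lemma~\ref{lem:graph-complete} then gives $\fR(L_t^t) = t = 1 \cdot t$, closing the induction.

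The only nontrivial step is the verification of the isomorphism above, which reduces to the one-line distance calculation $|i-j| \ge t+1$ between the two halves, so I do not expect a serious obstacle. If a direct construction is desired, one can be read off from the argument: peel the $t$ vertices of $V'$ off one at a time in the first $t$ steps (the graph remains connected throughout, because fewer than $t$ consecutive indices are missing and so the endpoints across any partial gap of size $k < t$ are still within distance $k+1 \le t$), thereby producing two disjoint copies of $L_{N_{s-1}}^t$, and then interleave the two inductively-supplied $(s-1)t$-disintegrations in parallel, one vertex per component per step.
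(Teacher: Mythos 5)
Your proof is correct and matches the paper's argument essentially line for line: the same choice of $V'$ (the $t$ middle indices), the same appeal to Lemma~\ref{lem:graph-removal} followed by Lemma~\ref{lem:graph-union}, and the same base case $L_t^t = K_t$ via Lemma~\ref{lem:graph-complete}. The closing remark about peeling the separator off one vertex at a time is a pleasant but unnecessary aside, since Lemma~\ref{lem:graph-removal} already handles the bookkeeping.
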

This, along with Lemma \ref{lem:graph-subset}, yields the following rate on graph resilience for path graphs.
\begin{cor}\label{cor:path}
    For any $d\in \N$ and any $t\in\N$, we have
    \begin{align*}
        \fR(L_{d}^t) =O(t\log d).
    \end{align*}
\end{cor}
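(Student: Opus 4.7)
The plan is to derive Corollary \ref{cor:path} directly from Proposition \ref{prop:path} combined with the subgraph monotonicity result Lemma \ref{lem:graph-subset}. The basic idea: Proposition \ref{prop:path} already bounds the resilience of $L_{t(2^s-1)}^t$ by $st$, so I just need to realize $L_d^t$ as a subgraph of $L_{N}^t$ for some $N$ of the form $t(2^s-1)$ with $s = O(\log d)$.

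First I would choose $s := \lceil \log_2(d/t + 1) \rceil$, so that $t(2^s-1) \ge d$ and $s \le \log_2(d+1) + O(1) = O(\log d)$. Write $N := t(2^s-1)$. Next, I would embed $L_d$ into $L_N$ as the initial sub-path via the vertex map $i \mapsto i$; under this map, the graph distance between any two vertices is the same in $L_d$ as in $L_N$, since shortest paths along a sub-path of a path are unchanged by extending the path. Consequently, if $\{i,j\}$ is an edge of $L_d^t$, i.e.\ $|i-j| \le t$, then $i$ and $j$ are still at distance at most $t$ in $L_N$, so $\{i,j\}$ is also an edge of $L_N^t$. This shows $L_d^t \le L_N^t$ in the sub-isomorphism order.

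Finally, I would apply Lemma \ref{lem:graph-subset} together with Proposition \ref{prop:path} to conclude
\[
\fR(L_d^t) \le \fR(L_{t(2^s-1)}^t) \le st \le t\bigl(\log_2(d+1) + O(1)\bigr) = O(t \log d),
\]
which is the claim.

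There is no genuine obstacle here; the only thing to verify carefully is the distance-preserving property of the sub-path embedding $L_d \hookrightarrow L_N$, but this is immediate because any geodesic in a path graph between two vertices stays within the interval they span, so enlarging the ambient path does not create any new shortcuts. The entire argument is essentially a single application of Lemma \ref{lem:graph-subset} after fixing $s$ so that $t(2^s-1) \ge d$.
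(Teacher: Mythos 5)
Your proposal is correct and follows essentially the same approach as the paper: choose $s = \lceil \log_2(d/t+1)\rceil$, observe $L_d^t \le L_{t(2^s-1)}^t$, and combine Lemma~\ref{lem:graph-subset} with Proposition~\ref{prop:path}. The additional verification you include (that the sub-path embedding preserves distances, hence power-graph edges) is a detail the paper states more tersely, but it is the same argument.
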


\subsubsection{Spatial Data}
Image data is naturally represented via a grid graph. The following graph describes a $k\times k'$ grid of vertices with every vertex connected connected to its vertical, horizontal, and diagonal, neighbors.
\begin{defn}\label{def:grid_graph}
    The \emph{grid graph of shape $k\times k'$}, denoted $L_{k\times k'}$, is the graph $(V,E)$ with $V= [k]\times [k']$ and $E = \{\{(i,j),(i',j') \} \mid (i,j) \neq (i',j'), |i-i'| \le 1, |j-j'| \le 1 \}$.
\end{defn}

\begin{prop}\label{prop:grid}
Recall that $L_{k\times k}$ be the grid graph of shape $k\times k$ as defined in Definition \ref{def:grid_graph}.
For any $s,t\in\N$, we have
\begin{align*}
    \fR(L_{t(2^s-1)\times t(2^s-1)}^t) 
    & \le 
    4t^22^s.
\end{align*}
Here, $L_{t(2^s-1)\times t(2^s-1)}^t$ is the power graph of $L_{t(2^s-1)\times t(2^s-1)}$ as defined in Definition \ref{def:power}.
\end{prop}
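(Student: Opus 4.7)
The plan is to induct on $s$, using Lemmas \ref{lem:graph-union} and \ref{lem:graph-removal} to reduce the problem on $L_{N_s \times N_s}^t$ (with $N_s := t(2^s-1)$) to four disjoint copies on $(N_{s-1} \times N_{s-1})$-grids. The key observation driving the recursion is the identity $N_s = 2N_{s-1} + t$, which lets one partition the $N_s \times N_s$ grid into four $N_{s-1} \times N_{s-1}$ corner quadrants separated by a central \emph{cross} consisting of $t$ consecutive middle rows together with $t$ consecutive middle columns (with a $t \times t$ overlap at the center).

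For the base case $s=1$, every two vertices of $L_{t \times t}^t$ lie within Chebyshev distance $t - 1 < t$, so the graph is complete and $\fR(L_{t \times t}^t) = t^2$ by Lemma \ref{lem:graph-complete}; this is at most $8t^2 = 4t^2 \cdot 2^1$. For the inductive step I would let $V'$ be the vertex set of the cross, so that $|V'| = 2tN_s - t^2$. Since edges in $L_{N_s \times N_s}^t$ correspond to pairs of vertices within Chebyshev distance $t$, any two vertices lying in different quadrants are at Chebyshev distance strictly greater than $t$ once the cross is removed, so no edges survive between quadrants; moreover, the induced subgraph on each quadrant is isomorphic to $L_{N_{s-1} \times N_{s-1}}^t$ because the power-graph relation depends only on coordinate differences. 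Hence
\[
(L_{N_s \times N_s}^t) \setminus V' \;\cong\; \bigoplus_{i=1}^{4} L_{N_{s-1} \times N_{s-1}}^t.
\]
Chaining Lemma \ref{lem:graph-removal}, Lemma \ref{lem:graph-union}, and the inductive hypothesis gives
\[
\fR(L_{N_s \times N_s}^t) \;\le\; |V'| + \fR\bigl(L_{N_{s-1} \times N_{s-1}}^t\bigr) \;\le\; (2tN_s - t^2) + 4t^2 \cdot 2^{s-1},
\]
and a one-line calculation using $N_s = t(2^s - 1)$ simplifies the right-hand side to $4t^2 \cdot 2^s - 3t^2 \le 4t^2 \cdot 2^s$, closing the induction.

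The only substantive step is verifying that removing the cross genuinely isolates the four quadrants and that each resulting induced subgraph is $L_{N_{s-1} \times N_{s-1}}^t$. Both are immediate from the Chebyshev-distance characterization of adjacency in $L_{k \times k'}^t$, so the real obstacle is bookkeeping: arranging the indexing of the middle rows and columns so that the decomposition $N_s = 2N_{s-1} + t$ produces quadrants of exactly the right size, and tracking the small error terms in the arithmetic to land on the clean bound $4t^2 \cdot 2^s$. Once these structural checks are in hand, the rest of the argument is a routine application of the two removal lemmas already established in Section \ref{sec:resilience-properties}.
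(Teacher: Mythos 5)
Your proof is correct and follows essentially the same route as the paper: remove a $t$-thick central cross, observe that the remainder is a disjoint union of four copies of $L_{t(2^{s-1}-1)\times t(2^{s-1}-1)}^t$, and chain Lemmas~\ref{lem:graph-removal} and~\ref{lem:graph-union} with the inductive hypothesis. The only cosmetic difference is that you compute $|V'|$ exactly and absorb the $-3t^2$ slack at the end, whereas the paper bounds $|V'|\le 4t^2 2^{s-1}$ up front; the structure and all key steps are identical.
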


\begin{figure}[H]
    \centering
    \begin{minipage}{0.49\textwidth}
    \centering
    \begin{tikzpicture}
\def\gridsize{2}
        \foreach \x in {0,...,2}
            \foreach \y in {0,...,2}
                \node[circle,draw,inner sep=2pt,fill=black] (simple-node-\x-\y) at (\x*\gridsize,\y*\gridsize) {};

\foreach \x in {0,...,2}
            \foreach \y in {0,...,2}
                \foreach \dx in {-1,0,1}
                    \foreach \dy in {-1,0,1}
                    {
                        \pgfmathparse{int(\x+\dx)}
                        \let\newx\pgfmathresult
                        \pgfmathparse{int(\y+\dy)}
                        \let\newy\pgfmathresult
\pgfmathparse{(\newx >= 0) && (\newx <= 2) && (\newy >= 0) && (\newy <= 2) && !((\dx == 0) && (\dy == 0))}
                        \ifnum\pgfmathresult=1
                            \draw [line width=1pt](simple-node-\x-\y) -- (simple-node-\newx-\newy);
                        \fi
                    }
    \end{tikzpicture}
    \end{minipage}
    \begin{minipage}{0.49\textwidth}
    \centering
    \begin{tikzpicture}
\def\gridsize{2}
        \foreach \x in {0,...,2}
            \foreach \y in {0,...,2}
                \node[circle,draw,inner sep=2pt,fill=black] (node-\x-\y) at (\x*\gridsize,\y*\gridsize) {};

\foreach \x in {0,...,2}
            \foreach \y in {0,...,2}
                \foreach \dx in {-2,-1,0,1,2}
                    \foreach \dy in {-2,-1,0,1,2}
                    {
\pgfmathparse{int(\x+\dx)}
                        \let\newx\pgfmathresult
                        \pgfmathparse{int(\y+\dy)}
                        \let\newy\pgfmathresult

\pgfmathparse{(\newx >= 0) && (\newx <= 2) && (\newy >= 0) && (\newy <= 2) && !((\dx == 0) && (\dy == 0))}
                        \ifnum\pgfmathresult=1
\pgfmathparse{((abs(\x-\newx)==2) && (\y==\newy)) || ((abs(\y-\newy)==2) && (\x==\newx)) || ((\x==0 && \y==0 && \newx==2 && \newy==2) || (\x==2 && \y==2 && \newx==0 && \newy==0)) || ((\x==0 && \y==2 && \newx==2 && \newy==0) || (\x==2 && \y==0 && \newx==0 && \newy==2))}
                            \ifnum\pgfmathresult=1
                                \draw [line width=1pt](node-\x-\y) to[bend left=30] (node-\newx-\newy);
                            \else
                                \draw [line width=1pt](node-\x-\y) -- (node-\newx-\newy);
                            \fi
                        \fi
                    }
    \end{tikzpicture}
    \end{minipage}
    \caption{Comparison of $L_{3\times 3}$ (left) and $L_{3 \times 3}^2$ (right).}\label{fig:grid}
\end{figure}
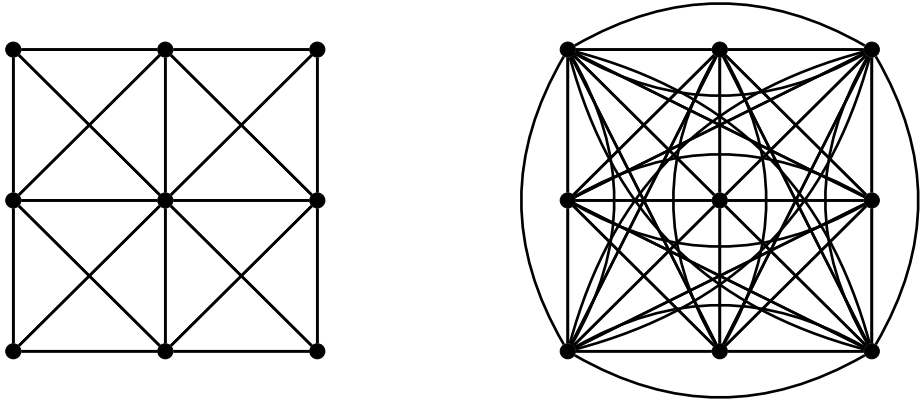
Examples of a grid graph and its power can be found in Figure \ref{fig:grid}. The previous proposition gives us a general rate of $\sqrt{d}$ for grid graphs.
\begin{cor}\label{cor:grid}
For any $k\in \N$ and any constant $t\in \N$, we have
\begin{align*}
    \fR(L_{k\times k}^t) 
    & =
    O(t^2\sqrt{d}) \quad \text{where $d=k^2$.}
\end{align*}
Note that the graph $L_{k\times k}^t$ has $d$ vertices.
\end{cor}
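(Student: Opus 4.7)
The plan is to deduce the corollary directly from Proposition \ref{prop:grid} together with the subgraph monotonicity of resilience (Lemma \ref{lem:graph-subset}), by choosing $s$ just large enough that a $k \times k$ subgrid sits inside a $t(2^s-1) \times t(2^s-1)$ grid. Concretely, given $k$ and $t$, I would set $s = \lceil \log_2(k/t + 1) \rceil$, so that $t(2^s - 1) \ge k$, while simultaneously $2^s \le 2(k/t + 1) = O(k/t)$ for $k \ge t$ (the case $k < t$ is trivial because then $d = k^2 \le t^2$ is constant).

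The core step is to verify that $L_{k\times k}^t \le L_{t(2^s-1)\times t(2^s-1)}^t$ in the isomorphism ordering. The natural embedding places the $k \times k$ grid in a corner of the larger grid. Since adding vertices and edges to an ambient graph can only \emph{decrease} pairwise distances, any two vertices whose distance in $L_{k\times k}$ is at most $t$ also have distance at most $t$ in $L_{t(2^s-1)\times t(2^s-1)}$; hence every edge of $L_{k\times k}^t$ persists in $L_{t(2^s-1)\times t(2^s-1)}^t$, giving the desired subgraph relation. This monotonicity of the power construction under induced subgraphs is the only subtle point; everything else is a direct substitution.

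Combining Lemma \ref{lem:graph-subset} with Proposition \ref{prop:grid} then yields
\begin{align*}
    \fR(L_{k\times k}^t) \le \fR(L_{t(2^s-1)\times t(2^s-1)}^t) \le 4t^2 2^s = O(t^2 \cdot k/t) = O(tk) = O(t\sqrt{d}),
\end{align*}
which is in particular $O(t^2 \sqrt{d})$ as claimed (the statement in the corollary is a slightly loose form of this tighter bound). I expect no genuine obstacle here: the work is entirely in selecting $s$ and justifying the power-graph embedding, after which the bound follows mechanically from the proposition.
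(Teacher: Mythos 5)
Your proposal is correct and follows essentially the same route as the paper's proof: choose $s = \lceil\log_2(k/t+1)\rceil$, observe $L_{k\times k}^t \le L_{t(2^s-1)\times t(2^s-1)}^t$, and combine Lemma~\ref{lem:graph-subset} with Proposition~\ref{prop:grid}. The only difference is that you track the constants slightly more carefully to obtain the marginally sharper $O(tk)=O(t\sqrt{d})$, which of course implies the corollary's stated $O(t^2\sqrt{d})$.
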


\subsubsection{Clustered Data}

Another common type of structured data exhibits clusters: Variables in the same cluster are dependent, while variables in different clusters are independent.
This type of structure can be modeled with disjoint cliques, where each clique represents a cluster or community. Let $V_{i}\subset V$ denote each cluster, with $V_{i}\cap V_{j}=\emptyset$ and $V=V_{1}\cup\cdots\cup V_{m}$ (i.e. $\{V_{i}\}$ is a partition of $V$).
Let $d_{i}\triangleq|V_{i}|$, $G_{i}\triangleq K_{d_{i}}$ be a clique (complete graph) on $V_{i}$, and $G=\bigoplus_i G_{i}$.
Then it follows from Lemmas~\ref{lem:graph-complete} and~\ref{lem:graph-union} that $r(G)= \max_{i}d_{i}$. In particular, if $d_{i}=O(1)$, then $r(G)=O(1)$. Of course, this simply recovers the well-known result that a density that factorizes into a product of densities can be estimated at a rate that depends only on the largest factor. Using our graph resilience analysis we have that, for a stochastic block model, the graph resilience is bounded by the size of the blocks times the graph that encapsulates dependencies between the blocks.

\begin{lemma}\label{lem:meta-graph}
    Let $G' = ([k],E')$ be a graph. Let $G = (V,E)$ be a graph, such that there is a partition of $V = \bigcup_{i=1}^k V_i$, where, if $\{v,v'\} \in E$ it follows that $v \in V_i$ and $v' \in V_j$ where $i = j$ or $i$ and $j$ are adjacent in $G'$. Then $\fR(G) \le \fR(G') \max_{i\in [k]} |V_i|$.
\end{lemma}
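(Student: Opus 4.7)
The plan is induction on $r(G')$, with $M := \max_{i \in [k]} |V_i|$. The base case $r(G') = 0$ forces $V = \emptyset$ and is trivial. For the inductive step, I would fix a disintegration $(U_1, \ldots, U_r)$ of $G'$ realizing $r = r(G')$, and set $A_1 = \bigcup_{i \in U_1} V_i$. The goal is to prove the one-step reduction $\fR(G) \le \fR(G \setminus A_1) + M$. Since $G \setminus A_1$ paired with $G' \setminus U_1$ satisfies the hypothesis (with the restricted partition $\{V_i\}_{i \notin U_1}$), and $(U_2, \ldots, U_r)$ witnesses $\fR(G' \setminus U_1) \le r - 1$, the inductive hypothesis delivers $\fR(G \setminus A_1) \le (r-1)M$, and combining yields $\fR(G) \le rM$.

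The heart of the argument is the one-step bound. A direct application of Lemma \ref{lem:graph-removal} gives only $\fR(G) \le \fR(G \setminus A_1) + |A_1|$, and $|A_1|$ can be as large as $|U_1| \cdot M$; iterating this produces the useless bound $\fR(G) \le kM$. The structural observation that rescues the argument is that distinct $i, i' \in U_1$ lie in different components of $G'$ by the definition of disintegration, and hence $V_i$ and $V_{i'}$ lie in different components of $G$. Indeed, any $G$-path from $V_i$ to $V_{i'}$ projects (via the cluster-indexing map $v \mapsto c(v)$ where $v \in V_{c(v)}$) to a sequence of cluster indices in which consecutive entries are equal or adjacent in $G'$; collapsing repeats gives an honest walk in $G'$ from $i$ to $i'$, contradicting their separation. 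Hence $A_1$ distributes across the components of $G$, depositing at most one cluster $V_i$ in each. Writing $G = \bigoplus_\ell G_\ell$, we get $|A_1 \cap V(G_\ell)| \le M$ for every $\ell$; Lemma \ref{lem:graph-removal} applied componentwise gives $\fR(G_\ell) \le \fR(G_\ell \setminus A_1) + M$, and Lemma \ref{lem:graph-union} concludes $\fR(G) = \max_\ell \fR(G_\ell) \le \max_\ell \fR(G_\ell \setminus A_1) + M = \fR(G \setminus A_1) + M$.

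The main obstacle is precisely this sharpening of Lemma \ref{lem:graph-removal}: one must exploit the parallelism provided by the defining property of $U_1$ (its elements live in distinct components of $G'$, hence in distinct components of $G$) to pay cost only $M$ per step rather than $|U_1| \cdot M$. Once this component-wise removal bound is established, the induction and the verification that $(G \setminus A_1, G' \setminus U_1)$ inherits the hypothesis are routine.
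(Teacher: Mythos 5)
Your proof is correct, and it takes a route that is recognizably parallel to but distinct from the paper's. The paper first \emph{inflates} the instance --- via Lemma~\ref{lem:graph-subset} it passes to the case where every $V_i$ has exactly $m=\max_i|V_i|$ vertices and is a clique --- then directly constructs an $m\fR(G')$-pre-disintegration of $G$ by lifting a minimum disintegration $(D'_1,\dots,D'_t)$ of $G'$: the step indexed lexicographically by $(a,j)$ removes $\{v_{i,j}\mid i\in D'_a\}$, and Lemma~\ref{lem:p-disintegration} finishes. You instead run an induction on $\fR(G')$, peeling off one $G'$-step at a time and paying cost $M$ per step via the \emph{componentwise} form of Lemma~\ref{lem:graph-removal} combined with Lemma~\ref{lem:graph-union}, with no need for the inflation step. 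The essential structural fact underlying both arguments is the same --- namely that for $i\neq i'$ in $U_1$ (resp.\ in $D'_a$ after earlier removals), the clusters $V_i$ and $V_{i'}$ land in distinct components of (what remains of) $G$, which you establish via the projection $v\mapsto c(v)$ and a walk-collapsing argument. Where the paper asserts this implicitly (``it is clearly a pre-disintegration''), you spell it out; this is the part of the argument that actually does the work, and making it explicit is a genuine improvement in rigor. The trade-off: the paper's version is a one-shot construction and a bit shorter to state, while yours is more modular, avoids the artificial inflation, and surfaces the key lemma-like claim (at most one cluster per component) cleanly. One small stylistic note: the paper's conventions treat $\fR$ of the null graph implicitly, so you may prefer to base your induction at $r(G')=1$ (the edgeless case, where $G=\bigoplus_i G[V_i]$ and Lemma~\ref{lem:graph-union} gives $\fR(G)\le M$ immediately) rather than at $r(G')=0$.
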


This has the useful interpretation that the effect of additional dependencies \emph{between} clusters does not interact with the clusters themselves. For example, if we allow for noisy interactions between clusters (e.g. as in a stochastic block model), the complexity scales separately with the noise and the size of the largest cluster.

\subsection{Comparison to other graphical properties}
\label{sec:comparison}

The examples in the previous section can be used to show that the classical graphical properties that one might expect to govern the sample complexity surprisingly fail to capture the sample complexity.

\paragraph{Degree}
The first and most obvious is the maximum degree of $G$. The star graph $S_{d}$ is an example where the resilience is $O(1)$, and hence the sample complexity is $(1/\eps)^{O(1)}$, but the maximum degree is $\Omega(d)$. Thus, the degree cannot properly capture the sample complexity. Moreover, the path graph shows the reverse: The maximum degree can be $O(1)$ while the resilience is $\Omega(\log d)$.

\paragraph{Diameter} The diameter $\diam(G)$ of a graph $G$ is the length of its longest geodesic path. A clique $K_d$ thus has $\diam(K_d)=1$ (since every node is connected to every other node), but $r(K_d)=d$. It is clear even from classical results that a clique cannot be estimated any faster than $(1/\eps)^{d}$. Thus, the diameter also cannot capture the sample complexity.

\paragraph{Sparsity}
Call a density $p$ on $d$ variables $s$-sparse if $p(x)=q(x_S)$, where $S\subset[d]$ with $|S|=s$. Now suppose $p$ is $s$-sparse and $q$ is Markov to a star graph $S_{s}$ on $s$ vertices. Then Theorem~\ref{thm:main-known-G} implies that $p$ can be estimated with $(1/\eps)^{O(1)}$ samples (since $r(G)=O(1)$). Thus, taking $s\to\infty$, it follows that the sparsity $s$ cannot properly capture the sample complexity.

In other words, not only does the graph resilience properly capture the sample complexity of structured density estimation, it is also not simply a proxy for commonly used graphical properties.

\section{Conclusion}
This work has introduced a new concept, \emph{graph resilience}, and demonstrated that it controls the complexity of estimating densities satisfying the Markov property. This characterization has shown that estimating such densities can be significantly easier than previous works have indicated. This finding contributes to the broader understanding of graph theoretical properties in statistical estimation and, more generally, to nonparametric estimation. The concept of resilience is a new and unexplored property with significant potential for discovering graphs that yield good estimation properties.
\label{sec:conc}

\bibliography{refs.bib}
\bibliographystyle{plainnat}
\appendix

\section{Proofs on Density Estimation}
We begin by introducing some objects and notation that will be essential for proving the main theorems.

Let $\N = \{1,2,\ldots\}$. 
For any $m\in \N$, let $[m]\triangleq\{1,2,\dots,m\}$.
For any $b \in \N$, let $\Delta^b$ be the simplex in $\R^{b}$, i.e.
\begin{align*}
    \Delta^b
    & =
    \left\{v\in \R^{b} : \sum_{i\in [b]} v_i = 1\,\, \text{and}\,\, v_i \geq 0 \,\, \text{for any $i\in [b]$} \right\}.
\end{align*}
For any $d,b\in \N$, let $\sT_{d,b}$ be the set of probability tensors in $\R^{b^{\times d}}$. This can be thought of as the joint probability tables over $d$ variables with states in $[b]$.
For any $d,b\in\N$ and $T\in \sT_{d,b}$, we say a random variable $X\sim T$ if 
\begin{align*}
    P\left( X = A\right) = T_A \qquad \text{for all $A\in [b]^d$}.
\end{align*}
For any graph $G$ with $d$ vertices, let $\sT_b(G) \subseteq \sT_{d,b}$, such that, for $T \in \sT_b(G)$ and $(X_1,\ldots,X_{d})\sim T$, then $X_1,\ldots, X_{d}$ satisfies the Markov property with respect to the graph $G$.
We consider the domain $[0,1)^d$ and split it into $b$ bins per dimension, i.e. there are $b^d$ bins in total.
For any $d,b\in\N$ and multi-index $A\in [b]^d$, let $\Lambda_{d,b,A}$ be the bin indexed at $A$ and $\Lambda_{d,b}$ be the set of all bins, i.e.
\begin{align*}
    \Lambda_{d,b,A} = \prod_{i=1}^d \left[\frac{A_i-1}{b}, \frac{A_i}{b}\right) \qquad \text{and} \qquad \Lambda_{d,b} = \{\Lambda_{d,b,A}\mid A\in[b]^d\}.
\end{align*}
For any $d,b\in \N$, let $\sH_{d,b}$ be the set of all histograms with the bins from $\Lambda_{d,b}$, i.e.
\begin{align*}
    \sH_{d,b}
    & =
    \left\{f:[0,1)^d \to \R\mid f = \sum_{A\in [b]^d} b^d T_{A}\cdot \chi_{\Lambda_{d,b,A}} \,\, \text{where $\chi$ is the indicator and $T \in \sT_{d,b}$}\right\}.
\end{align*}
Let $U_{d,b}:\sT_{d,b} \to \sH_{d,b}$ be the natural $\ell^1 \to L^1$ linear isometry between the spaces.

For any $d\in \N$ and $L>0$, let $\sD_{d}$ be the set of densities on $[0,1)^d$ and $\sD_{d,L}\subset \sD_d$ be those densities which are $L$-Lipschitz continuous.

\begin{defn}
    For $A$, some subset of a $\ell^1$ Euclidean tensor space or $L^1$ function space, we define \begin{equation*}
        N\left(A,\varepsilon\right) \triangleq \min_{C\subset A} |C| \text{\quad such that, for all $a \in A$, there exists $c \in C$ where $\left\Vert a - c\right\Vert_1 \le \varepsilon $}.
    \end{equation*}
\end{defn}

In this work we will use the following version of the Markov property: for $I,J\subset V$, with $I$ and $J$ not intersecting or adjacent, $X_I\indep X_J \mid X_{V\setminus I\cup J}$. Note that this property is satisfied when the ``global'' Markov property is satisfied.

\begin{lemma}[Lemma 3.3.7 from \cite{reiss89}]\label{lem:prod-bound}
    For probability measures $\left\Vert \prod_{i=1}^t \mu_i - \prod_{i=1}^t \nu_i \right\Vert_1 \le \sum_{i=1}^t \left\Vert \mu_i-\nu_i\right\Vert_1$.
\end{lemma}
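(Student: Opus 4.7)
The plan is to prove this by induction on $t$, where the base case $t=1$ is immediate. The substantive content all comes from the inductive step, and the key idea is a single telescoping decomposition that separates the last factor from the rest. Specifically, I would write
\begin{align*}
\prod_{i=1}^{t}\mu_i - \prod_{i=1}^{t}\nu_i
&= \Bigl(\prod_{i=1}^{t-1}\mu_i\Bigr)\otimes(\mu_t - \nu_t)
 + \Bigl(\prod_{i=1}^{t-1}\mu_i - \prod_{i=1}^{t-1}\nu_i\Bigr)\otimes \nu_t,
\end{align*}
which is just adding and subtracting $\bigl(\prod_{i=1}^{t-1}\mu_i\bigr)\otimes \nu_t$. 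Applying the triangle inequality for the $L^1$/total-variation norm on the product space then reduces the claim to bounding the $L^1$ norm of each of the two summands.

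For each summand, the observation I would invoke is that if $\eta$ is a signed finite measure and $\pi$ is a probability measure, then $\|\pi \otimes \eta\|_1 = \|\eta\|_1$. This follows from Fubini applied to the Jordan decomposition of $\eta$: writing $\eta = \eta^+ - \eta^-$, the product $\pi\otimes\eta$ has Jordan parts $\pi\otimes\eta^{\pm}$, and integrating out $\pi$ on each gives $\eta^{\pm}$'s total masses. Applying this with $\pi = \prod_{i=1}^{t-1}\mu_i$ (a probability measure since each $\mu_i$ is) and $\eta = \mu_t-\nu_t$ handles the first summand; applying it with $\pi = \nu_t$ and $\eta = \prod_{i=1}^{t-1}\mu_i - \prod_{i=1}^{t-1}\nu_i$ handles the second. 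This yields
\begin{align*}
\Bigl\|\prod_{i=1}^{t}\mu_i - \prod_{i=1}^{t}\nu_i\Bigr\|_1
&\leq \|\mu_t-\nu_t\|_1 + \Bigl\|\prod_{i=1}^{t-1}\mu_i - \prod_{i=1}^{t-1}\nu_i\Bigr\|_1,
\end{align*}
and the inductive hypothesis on the second term closes the argument.

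There is no serious obstacle; the only mild subtlety is the tensor-norm identity for signed measures, which needs the Jordan decomposition or, equivalently, the observation that $|\pi\otimes\eta|(dx\,dy) = \pi(dx)\,|\eta|(dy)$ as a measure. Everything else is a standard induction plus triangle inequality, and no structure beyond $\mu_i,\nu_i$ being probability measures on arbitrary measurable spaces is required.
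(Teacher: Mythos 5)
Your proof is correct, and since the paper simply cites Reiss (1989, Lemma 3.3.7) without reproducing an argument, there is no in-paper proof to compare against. Your telescoping decomposition plus the tensor-norm identity $\|\pi\otimes\eta\|_1=\|\eta\|_1$ (for $\pi$ a probability measure and $\eta$ a finite signed measure, established via the Jordan decomposition and the mutual singularity of $\pi\otimes\eta^+$ and $\pi\otimes\eta^-$) is the standard argument and is exactly the sort of proof found in the cited reference. One small cosmetic note: the two terms in the decomposition place the ``difference'' factor on opposite sides of the tensor product, so strictly you are using the identity in both the forms $\|\pi\otimes\eta\|_1=\|\eta\|_1$ and $\|\eta\otimes\pi\|_1=\|\eta\|_1$, but the argument is identical in either orientation and this is not a gap.
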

\begin{lemma}[Lemma A.1 from \cite{vandermeulen2021}]\label{lem:simplex-cover}
    For all $b$ and $0< \varepsilon \le 1$, $N\left(\Delta^b,\varepsilon\right)\le \left(2b/\varepsilon\right)^b$.
\end{lemma}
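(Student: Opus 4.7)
The plan is to build an explicit $\varepsilon$-cover of $\Delta^b$ by taking a uniform rational lattice on the simplex. Because $N(\Delta^b,\varepsilon)$ requires the cover set to lie inside $\Delta^b$ itself, I would work with a lattice whose points automatically sum to one. Concretely, set $m = \lceil b/\varepsilon \rceil$ and take
\begin{align*}
    C_m \triangleq \left\{\tfrac{1}{m}(k_1,\ldots,k_b) \in \Delta^b : k_i \in \Z_{\geq 0},\ \sum_{i=1}^b k_i = m \right\}.
\end{align*}

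Next, I would verify that $C_m$ is an $\varepsilon$-cover via a rounding argument. Given $v \in \Delta^b$, let $k_i^{(0)} = \lfloor m v_i \rfloor$; since each floor loses strictly less than $1$, the deficit $\delta \triangleq m - \sum_i k_i^{(0)}$ satisfies $0 \le \delta \le b-1$. Add $1$ to any $\delta$ of the coordinates to obtain $k \in \Z_{\geq 0}^b$ with $\sum_i k_i = m$, so that $k/m \in C_m$. Coordinate-by-coordinate, either $0 \le v_i - k_i/m < 1/m$ (if $i$ was untouched) or $0 \le k_i/m - v_i \le 1/m$ (if incremented), giving $\|v - k/m\|_1 \le b/m \le \varepsilon$.

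Then I would count $|C_m|$ by stars and bars: projecting away the last (determined) coordinate shows $|C_m| = \binom{m+b-1}{b-1} \le (m+1)^{b-1}$. Plugging in $m+1 \le b/\varepsilon + 2$ and using $\varepsilon \le 1$, one has $b/\varepsilon + 2 \le 2b/\varepsilon$ whenever $b \ge 2$, which yields $|C_m| \le (2b/\varepsilon)^{b-1} \le (2b/\varepsilon)^b$. The corner case $b = 1$ is trivial because $\Delta^1$ is a single point, giving $N(\Delta^1,\varepsilon) = 1 \le 2/\varepsilon$.

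The expected obstacle is purely arithmetic bookkeeping to make $(m+1)^{b-1}$ fit under the target bound $(2b/\varepsilon)^b$ cleanly. If the constants become awkward in some regime, I would fall back on the slightly coarser grid $m = \lceil 2b/\varepsilon \rceil$, which still satisfies $b/m \le \varepsilon$ and gives a count bounded by $(2b/\varepsilon)^b$ with room to spare. There is no deep obstacle here: covering numbers for the probability simplex are standard, and the only nontrivial design choice is using a simplex-constrained lattice so that the cover is a subset of $\Delta^b$ as the definition demands.
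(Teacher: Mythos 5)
Note first that the paper does not actually prove this lemma---it is imported as Lemma~A.1 from \cite{vandermeulen2021} and used as a black box. So there is no in-paper proof to compare against; the relevant question is simply whether your blind proof is correct, and it is.

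The argument is sound and standard. The lattice $C_m \subset \Delta^b$ with $m = \lceil b/\varepsilon\rceil$ genuinely lies inside the simplex, so it respects the paper's definition of $N(\cdot,\cdot)$, which requires $C\subset A$ (a point you correctly flagged). The rounding step is valid: flooring loses strictly less than one per coordinate, so the integer deficit $\delta = m - \sum_i \lfloor m v_i\rfloor$ satisfies $0\le\delta\le b-1$, and after incrementing $\delta$ coordinates each coordinate of $k/m$ is within $1/m$ of $v_i$, giving $\|v-k/m\|_1\le b/m\le\varepsilon$. The stars-and-bars count $\binom{m+b-1}{b-1}=\prod_{j=1}^{b-1}\frac{m+j}{j}\le(m+1)^{b-1}$ is correct since each factor $\frac{m+j}{j}\le m+1$ for $j\ge 1$, and the final arithmetic $(m+1)\le b/\varepsilon+2\le 2b/\varepsilon$ for $b\ge 2$, $\varepsilon\le 1$ closes the bound, with the $b=1$ case trivial. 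One minor caveat: the fallback you mention (taking $m=\lceil 2b/\varepsilon\rceil$) would actually make the counting estimate \emph{worse}, not easier, since $(m+1)^{b-1}$ would then be roughly $(2b/\varepsilon+2)^{b-1}$, which does not obviously fit under $(2b/\varepsilon)^b$ for large $b$. Fortunately the primary argument does not need it.
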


\begin{lemma}[Theorem 3.4 page 7 of \cite{ashtiani18}, Theorem 3.6 page 54 of~\cite{devroye01}] \label{lem:estimator-algorithm}
    There exists a deterministic algorithm that, given a collection of distributions $p_1,\ldots,p_M$, a parameter $\varepsilon >0$ and at least $\frac{\log \left(3M^2/\delta \right)}{2\varepsilon^2}$ iid samples from an unknown distribution $p$, outputs an index $j\in \left[M\right]$ such that
      \begin{align*} 
        \left\Vert p_j - p \right\Vert_1 \le 3 \min_{i \in \left[M\right]} \left\Vert p_i - p\right\Vert_1 + 4 \varepsilon
      \end{align*}
  with probability at least $1 - \frac{\delta}{3}$.
\end{lemma}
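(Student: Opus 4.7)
The plan is to use the classical Scheffé tournament (also called Yatracos's minimum-distance estimator). The idea is to reduce density estimation among the $M$ candidates to $\binom{M}{2}$ pairwise two-sample tests and then output the candidate that wins the most of these tests. For each pair $i<j$, define the Scheffé set
\[
A_{ij} = \{x : p_i(x) > p_j(x)\},
\]
and note $p_i(A_{ij}) - p_j(A_{ij}) = \tfrac{1}{2}\|p_i - p_j\|_1$, so the $A_{ij}$ separate any two candidates on exactly the set where they disagree. Let $\mu^{(k)}_{ij} = p_k(A_{ij})$ and let $\hat\mu_{ij}$ denote the empirical frequency $\frac{1}{n}\sum_{\ell=1}^n \mathbf{1}\{X_\ell \in A_{ij}\}$. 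In the match between $p_i$ and $p_j$, declare $p_i$ the winner if $|\mu^{(i)}_{ij} - \hat\mu_{ij}| \leq |\mu^{(j)}_{ij} - \hat\mu_{ij}|$, and ultimately output any candidate whose total number of wins is maximal.

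The first step is a routine concentration argument. Each $\mathbf{1}\{X \in A_{ij}\}$ is Bernoulli$(P(A_{ij}))$, so Hoeffding gives $\Pr(|\hat\mu_{ij} - P(A_{ij})| > \varepsilon) \leq 2\exp(-2n\varepsilon^2)$. Union-bounding over the at most $M^2$ pairs and choosing $n \geq \log(3M^2/\delta)/(2\varepsilon^2)$ guarantees, with probability at least $1 - \delta/3$, that $|\hat\mu_{ij} - P(A_{ij})| \leq \varepsilon$ holds simultaneously for all pairs $(i,j)$.

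The substantive step is the approximation argument, carried out under this good event. Let $i^*$ attain $\mathrm{OPT} \triangleq \min_i \|p_i - p\|_1$. Since $|p_k(A) - P(A)| \leq \tfrac{1}{2}\|p_k - p\|_1 \leq \|p_k - p\|_1$ holds for any fixed measurable set $A$ (using $\|\cdot\|_{TV} = \tfrac{1}{2}\|\cdot\|_1$), we get $|\mu^{(i^*)}_{ij} - \hat\mu_{ij}| \leq \mathrm{OPT} + \varepsilon$ for every $j$. Thus $p_{i^*}$ beats any candidate $p_j$ whose test statistic $|\mu^{(j)}_{ij} - \hat\mu_{ij}|$ exceeds $\mathrm{OPT} + \varepsilon$. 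Now suppose the declared winner $\hat\jmath$ satisfied $\|p_{\hat\jmath} - p\|_1 > 3\,\mathrm{OPT} + 4\varepsilon$. Expanding $|\mu^{(\hat\jmath)}_{i^*\hat\jmath} - \mu^{(i^*)}_{i^*\hat\jmath}| = \tfrac{1}{2}\|p_{i^*} - p_{\hat\jmath}\|_1$ via the triangle inequality through $p$, and then through the concentrated $\hat\mu$, forces $|\mu^{(\hat\jmath)}_{i^*\hat\jmath} - \hat\mu_{i^*\hat\jmath}| > \mathrm{OPT} + \varepsilon$, so $p_{i^*}$ beats $p_{\hat\jmath}$ directly. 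A matching comparison for every opponent that $p_{\hat\jmath}$ allegedly beats yields $p_{i^*}$ a strictly larger win count, contradicting maximality of $\hat\jmath$.

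The main obstacle is nailing down the specific constants $3$ and $4$ in the approximation step: one must track the triangle inequality carefully through both the candidate-to-truth error $\|p_i - p\|_1$ and the empirical-to-population error $|\hat\mu_{ij} - P(A_{ij})|$, while handling ties (so that the declared winner need not equal $i^*$) by arguing that $p_{i^*}$'s win count strictly dominates that of any candidate violating the target bound. The concentration step and the algorithmic description itself are standard.
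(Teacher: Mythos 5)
The paper does not prove this lemma; it is cited directly from Ashtiani et al.\ (Theorem 3.4) and Devroye--Lugosi (2001). Your plan is the right classical approach (Scheff\'e sets plus a uniform concentration bound over the $\binom{M}{2}$ test sets), and the concentration step is correct. But the approximation step, as written, has a genuine gap.

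You select the candidate with the maximum number of pairwise wins, then argue by contradiction: if the declared winner $\hat\jmath$ satisfies $\|p_{\hat\jmath}-p\|_1 > 3\,\mathrm{OPT}+4\eps$, you correctly derive (once the $\tfrac12$ in $|p_k(A)-p(A)|\le\tfrac12\|p_k-p\|_1$ is carried through, which you state but then drop) that $p_{i^*}$ beats $p_{\hat\jmath}$ in their head-to-head match. The gap is the next sentence: ``a matching comparison for every opponent that $p_{\hat\jmath}$ allegedly beats yields $p_{i^*}$ a strictly larger win count.'' This does not follow. Each match $(i,j)$ is decided on its own Scheff\'e set $A_{ij}$, so there is no transitivity: $p_{\hat\jmath}$ beating $p_k$ on $A_{\hat\jmath k}$ says nothing about whether $p_{i^*}$ beats $p_k$ on $A_{i^*k}$. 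Beating $p_{\hat\jmath}$ head-to-head does not prevent $p_{\hat\jmath}$ from accumulating more total wins than $p_{i^*}$ against third parties, so no contradiction with maximality of $\hat\jmath$'s win count is obtained. As stated, the tournament/max-wins rule is not proved to achieve the $3\,\mathrm{OPT}+4\eps$ guarantee by your argument.

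The fix used in the cited sources is to change the selection rule to the minimum-distance (Yatracos) estimator: output $\hat\jmath\in\arg\min_j \max_{k\ne j}\bigl|\mu^{(j)}_{jk}-\hat\mu_{jk}\bigr|$. Then, on the good event, $\max_k|\mu^{(\hat\jmath)}_{\hat\jmath k}-\hat\mu_{\hat\jmath k}|\le\max_k|\mu^{(i^*)}_{i^*k}-\hat\mu_{i^*k}|\le \tfrac12\mathrm{OPT}+\eps$, and applying this to $k=i^*$ together with $\tfrac12\|p_{\hat\jmath}-p_{i^*}\|_1 = |\mu^{(\hat\jmath)}_{\hat\jmath i^*}-\mu^{(i^*)}_{\hat\jmath i^*}|\le |\mu^{(\hat\jmath)}_{\hat\jmath i^*}-\hat\mu_{\hat\jmath i^*}| + |\hat\mu_{\hat\jmath i^*}-\mu^{(i^*)}_{\hat\jmath i^*}|$ gives $\|p_{\hat\jmath}-p_{i^*}\|_1\le 2\,\mathrm{OPT}+4\eps$, hence $\|p_{\hat\jmath}-p\|_1\le 3\,\mathrm{OPT}+4\eps$ by the triangle inequality. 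This is the direct argument the constants in the lemma come from; you should replace the win-count contradiction with it (or, if you insist on the tournament, supply an actual argument bounding the error of the max-wins candidate, which is not what you wrote).
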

\subsection{Preliminary Technical Results}
This section contains intermediate technical results which will aid in proving more core results.
\begin{lemma} \label{lem:submult}
For any $0<\eps<1$, any $b,t\in \N$ and any graph $G_i$ for $i\in [t]$, we have
    \begin{equation*}
        N\left(\sT_b\big( \bigoplus_{i=1}^t G_i \big),\varepsilon\right)\le 
        \prod_{i=1}^t N\left(\sT_b\left(  G_i \right),\varepsilon/t\right).
    \end{equation*}
\end{lemma}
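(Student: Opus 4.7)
The plan is simple and rests on the observation that tensors Markov to a disjoint union of graphs factor as tensor products across the components; a product of individual covers, with error budget divided equally, then yields a cover of the joint space via Lemma~\ref{lem:prod-bound}.

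The first step is the factorization. Fix $T \in \sT_b(\bigoplus_{i=1}^t G_i)$ and let $V_i$ denote the vertex set of $G_i$. For $i\ne j$, the sets $V_i$ and $V_j$ are neither intersecting nor adjacent in the disjoint union, so the Markov property (in the form stated just before Lemma~\ref{lem:prod-bound}) with empty conditioning set gives mutual independence of the marginals $T_i$ of $X_{V_i}$, whence $T = T_1 \otimes \cdots \otimes T_t$. Moreover, any separation within $G_i$ remains a separation in $\bigoplus_i G_i$, so each $T_i$ inherits the Markov property with respect to $G_i$ and therefore lies in $\sT_b(G_i)$.

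Next, for each $i \in [t]$, pick a minimum $(\varepsilon/t)$-cover $C_i \subseteq \sT_b(G_i)$, so that $|C_i| = N(\sT_b(G_i), \varepsilon/t)$, and set
\begin{equation*}
C \triangleq \bigl\{c_1 \otimes \cdots \otimes c_t : c_i \in C_i \text{ for each } i\in[t]\bigr\} \subseteq \sT_b\bigl(\bigoplus\nolimits_{i=1}^t G_i\bigr),
\end{equation*}
which has $|C| = \prod_{i=1}^t N(\sT_b(G_i), \varepsilon/t)$ elements. To verify that $C$ is an $\varepsilon$-cover, take $T = T_1 \otimes \cdots \otimes T_t$ as above and choose $c_i \in C_i$ with $\|T_i - c_i\|_1 \le \varepsilon/t$. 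Using the isometry $U_{d,b}$ to identify tensor products of probability tensors with products of the associated histogram measures, Lemma~\ref{lem:prod-bound} yields
\begin{equation*}
\|T_1 \otimes \cdots \otimes T_t - c_1 \otimes \cdots \otimes c_t\|_1 \le \sum_{i=1}^t \|T_i - c_i\|_1 \le \varepsilon,
\end{equation*}
which completes the proof.

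The only point requiring any care is translating Lemma~\ref{lem:prod-bound} from probability measures to probability tensors, and this is immediate because $U_{d,b}$ is an $\ell^1 \to L^1$ isometry that sends a tensor product of probability tensors to the pointwise product of the corresponding histogram densities. I do not anticipate any serious obstacle; the proof is essentially bookkeeping once the factorization in Step~1 is observed.
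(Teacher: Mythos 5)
Your proof is correct and follows essentially the same approach as the paper: factor $T\in\sT_b(\bigoplus_i G_i)$ as a tensor product across components, take a product of $(\varepsilon/t)$-covers, and bound the error via Lemma~\ref{lem:prod-bound}. The paper's version states the factorization slightly more tersely (it just asserts vertices in different components are independent), while you additionally spell out the application of the Markov property and the role of the isometry $U_{d,b}$, but the key decomposition and the key lemma are the same.
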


\begin{proof}[Proof of Lemma \ref{lem:submult}]

Let $C_i$ be any $(\eps/t)$-cover for $\sT_b\left(G_i\right)$ such that $N\left(\sT_b\left(  G_i \right),\varepsilon/t\right) = \abs{C_i}$ for all $i\in [t]$.
If we manage to argue that the set
\begin{align*}
    C
    & =
    \{T_1\times \cdots \times T_t \mid T_i\in C_i \quad \text{for all $i\in [t]$}\}
\end{align*}
is an $\eps$-cover of $\sT_b\left( \bigoplus_{i=1}^t G_i \right)$, then we immediately show the statement since 
\begin{align*}
    N\left(\sT_b\left( \bigoplus_{i=1}^t G_i \right),\varepsilon\right) \leq \abs{C} = \prod_{i=1}^t \abs{C_i} = \prod_{i=1}^t N\left(\sT_b\left(  G_i \right),\varepsilon/t\right).
\end{align*}

Now, to show that $C$ is an $\eps$-cover, we observe that any $T\in \sT_b\left( \bigoplus_{i=1}^t G_i \right)$ has the form of $T_1\times \cdots \times T_t$ where $T_i\in \sT_b\left( G_i \right)$ for all $i\in [t]$ since we recall that $\bigoplus_{i=1}^t G_i $ is simply a union of the graphs with no edges across different connected components which implies the vertices in different connected components are independent.
Since $C_i$ is an $(\eps/t)$-cover of $\sT_b(G_i)$, there is a $\widetilde{T}_i\in C_i$ such that $\norm{\widetilde{T}_i - T_i}_1 \leq \eps/t$.
By taking $\widetilde{T} = \widetilde{T}_1\times \cdots \times \widetilde{T}_t\in C$, there exists a $\widetilde{T}\in C$ such that, by Lemma \ref{lem:prod-bound}, we have
\begin{align*}
    \norm{\widetilde{T} - T}_1
    & \leq
    \sum_{i=1}^t (\eps/t)
    =
    \eps.
\end{align*}
    
\end{proof}

\begin{lemma}\label{lem:vertex-removal}
For any $0<\alpha<1$, any $0<\eps<1$, any $b\in \N$, any graph $G$ and any vertex $v$ in $G$, we have
    \begin{align*}
        N \left(\sT_b(G), \varepsilon\right)
        & \leq
        N\left(\sT_{1,b}, \alpha \eps\right)\cdot N\left(\sT_b\left(G\setminus \{v\}\right),(1-\alpha) \eps\right)^b.
    \end{align*}
\end{lemma}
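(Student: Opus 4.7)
The plan is to decompose any $T\in\sT_b(G)$ along the vertex $v$ into its marginal at $v$ together with the $b$ conditional distributions of the remaining coordinates given each value of $X_v$, and then build an $\eps$-cover by combining covers of these two factor spaces. Concretely, I would write $T_A = T^{v}_{A_v}\cdot T^{-v\mid v}_{A_{-v}\mid A_v}$, where $T^v\in\sT_{1,b}\cong\Delta^b$ is the marginal of $X_v$ and, for each $i\in[b]$, the slice $T^{-v\mid v}_{\cdot\mid i}$ is a probability tensor on the remaining $d-1$ coordinates. The crucial structural observation is that each conditional slice actually lies in $\sT_b(G\setminus\{v\})$: if disjoint sets $A,B\subset V\setminus\{v\}$ are separated by $C$ in $G\setminus\{v\}$, then every path from $A$ to $B$ in $G$ passes through $C\cup\{v\}$, so $X_A\indep X_B\mid X_{C\cup\{v\}}$ under $T$, which transfers to the required independence under each slice.

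Next, I would combine minimum covers of the two factor spaces. Let $C^v\subset\sT_{1,b}$ be an $(\alpha\eps)$-cover realizing $N(\sT_{1,b},\alpha\eps)$ and let $C^{-v}\subset\sT_b(G\setminus\{v\})$ be a $((1-\alpha)\eps)$-cover realizing $N(\sT_b(G\setminus\{v\}),(1-\alpha)\eps)$. Define the candidate cover
\begin{equation*}
C = \Bigl\{\widetilde T\in\sT_{d,b} : \widetilde T_A = \widetilde T^v_{A_v}\,\widetilde T^{-v\mid v}_{A_{-v}\mid A_v},\; \widetilde T^v\in C^v,\; \widetilde T^{-v\mid v}_{\cdot\mid i}\in C^{-v}\text{ for each } i\in[b]\Bigr\},
\end{equation*}
whose cardinality is at most $|C^v|\cdot|C^{-v}|^b$, matching the right-hand side of the claim.

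To verify $C$ is an $\eps$-cover, I would take any $T\in\sT_b(G)$, pick $\widetilde T^v\in C^v$ with $\|T^v-\widetilde T^v\|_1\le\alpha\eps$, and, for each $i\in[b]$, pick $\widetilde T^{-v\mid v}_{\cdot\mid i}\in C^{-v}$ with $\|T^{-v\mid v}_{\cdot\mid i}-\widetilde T^{-v\mid v}_{\cdot\mid i}\|_1\le(1-\alpha)\eps$. Applying the pointwise identity $|ab-a'b'|\le|a-a'|\,b + a'\,|b-b'|$ with $a=T^v_i$, $a'=\widetilde T^v_i$, $b=T^{-v\mid v}_{A_{-v}\mid i}$, $b'=\widetilde T^{-v\mid v}_{A_{-v}\mid i}$ and summing over all multi-indices $A$ gives
\begin{equation*}
\|T-\widetilde T\|_1 \;\le\; \sum_{i=1}^b |T^v_i - \widetilde T^v_i| \;+\; \sum_{i=1}^b \widetilde T^v_i\,\bigl\|T^{-v\mid v}_{\cdot\mid i}-\widetilde T^{-v\mid v}_{\cdot\mid i}\bigr\|_1 \;\le\; \alpha\eps + (1-\alpha)\eps \;=\; \eps,
\end{equation*}
using $\sum_{A_{-v}} T^{-v\mid v}_{A_{-v}\mid i}=1$ in the first sum and $\sum_i \widetilde T^v_i = 1$ in the second.

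The main obstacle is the Markov-property step: one must justify that conditioning on a single value of $X_v$ preserves the Markov property with respect to $G\setminus\{v\}$, which is exactly the separation argument sketched above. The remainder is a mild variant of the product-cover reasoning in Lemma~\ref{lem:submult}, with a product over independent components replaced by a mixture over the $b$ values of $X_v$; this is why one factor appears to the power $b$.
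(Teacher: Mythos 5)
Your proposal is correct and follows essentially the same route as the paper's proof: factor $T$ into the marginal of $X_v$ and the $b$ conditional slices, take the product of an $(\alpha\eps)$-cover of $\sT_{1,b}$ with $b$ copies of a $((1-\alpha)\eps)$-cover of $\sT_b(G\setminus\{v\})$, and bound the distance by splitting the difference of products via the triangle inequality. The only place you go a bit further is in making the Markov-preservation step explicit — justifying via graph separation that each conditional slice lies in $\sT_b(G\setminus\{v\})$ — which the paper asserts without proof when it writes the decomposition $T_A=\delta_{A_1}\cdot T^{(A_1)}_{(A_2,\dots,A_d)}$ with $T^{(i)}\in\sT_b(G\setminus\{1\})$.
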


\begin{proof}[Proof of Lemma \ref{lem:vertex-removal}]

Without loss of generality, we assume $v=1$.
Suppose the graph $G$ has $d$ vertices.
Let $C_0$ be any $(1-\alpha)\eps$-cover of $\sT_b(G\setminus\{1\})$ such that $ N\left(\sT_b\left(G\setminus \{1\}\right),(1-\alpha)\eps\right) = \abs{C_0}$ and $C_1$ be any $\alpha \eps$-cover of $\sT_{1,b}$ such that $N\left(\sT_{1,b},\alpha \eps\right) = \abs{C_1}$.
We now construct a set $C\in \sT_b(G)$ as follows.
\begin{align*}
    C
    & =
    \{T \mid T_A = \delta_{A_1} \cdot T^{(A_1)}_{(A_2,\dots, A_{d})} \quad \text{for any $A\in[b]^d$ where $T^{(1)},\dots,T^{(b)}\in C_0$ and $\delta\in C_1$}\}.
\end{align*}

If we manage to argue that the set $C$ is an $\eps$-cover of $\sT_b(G)$, then we immediately show the statement since 
\begin{align*}
    N \left(\sT_b(G), \varepsilon\right)
    & \leq 
    \abs{C}
    =
    \abs{C_1}\cdot\abs{C_0}^b
    = 
    N\left(\sT_{1,b}, \alpha \eps\right)\cdot N\left(\sT_b\left(G\setminus \{1\}\right),(1-\alpha) \eps\right)^b.
\end{align*}

Now, to show that $C$ is an $\eps$-cover, we observe that, for any $T\in \sT_b(G)$ and $A\in[b]^d$, we can express 
\begin{align*}
    T_A
    & =
    \delta_{A_1} \cdot T^{(A_1)}_{(A_2,\dots, A_d)}
    \quad \text{for some $T^{(1)},\dots,T^{(b)}\in \sT_b(G\setminus\{1\})$ and $\delta \in \sT_{1,b}$.}
\end{align*}
Since $C_0$ (resp. $C_1$) is an $(1-\alpha)\eps $-cover of $\sT_b(G\setminus \{1\})$ (resp. an $\alpha\eps$-cover of $\sT_{1,b}$), there exist $\widetilde{T}^{(1)},\dots,\widetilde{T}^{(b)}\in C_0$ and $\widetilde{\delta}\in C_1$ such that 
\begin{align*}
    \norm{\widetilde{T}^{(i)} - T^{(i)}}_1
    & \leq 
    (1-\alpha)\eps
    \quad \text{for all $i\in [b]$ and} \quad 
    \norm{\widetilde{\delta} - \delta}_1
    \leq 
    \alpha \eps. \numberthis\label{eq:one_remove_cover}
\end{align*}
Then, we take $\widetilde{T}\in C$ where 
\begin{align*}
    \widetilde{T}_A
    & =
    \widetilde{\delta}_{A_1} \cdot \widetilde{T}^{(A_1)}_{(A_2,\dots, A_d)} \quad \text{for any $A\in[b]^d$}
\end{align*}
and we have
\begin{align*}
    \norm{\widetilde{T} - T}_1
    & =
    \sum_{A_1=1}^b \norm{ \widetilde{\delta}_{A_1} \cdot \widetilde{T}^{(A_1)} - {\delta}_{A_1} \cdot {T}^{(A_1)} }_1 \\
    & \leq
    \sum_{A_1=1}^b \norm{ \widetilde{\delta}_{A_1} \cdot \widetilde{T}^{(A_1)} - {\delta}_{A_1} \cdot \widetilde{T}^{(A_1)} }_1 + \sum_{A_1=1}^b \norm{ {\delta}_{A_1} \cdot \widetilde{T}^{(A_1)} - {\delta}_{A_1} \cdot {T}^{(A_1)} }_1.
\end{align*}
By using $\eqref{eq:one_remove_cover}$, we further  bound each term
\begin{align*}
    \sum_{A_1=1}^b \norm{ \widetilde{\delta}_{A_1} \cdot \widetilde{T}^{(A_1)} - {\delta}_{A_1} \cdot \widetilde{T}^{(A_1)} }_1
    & =
    \sum_{A_1=1}^b\abs{\widetilde{\delta}_{A_1} - \delta_{A_1}}\cdot \norm{\widetilde{T}^{(A_1)}}_1
    =
    \norm{\widetilde{\delta} - \delta}_1
    \leq 
    \alpha \eps \quad \text{and} \\
    \sum_{A_1=1}^b \norm{ {\delta}_{A_1} \cdot \widetilde{T}^{(A_1)} - {\delta}_{A_1} \cdot {T}^{(A_1)} }_1 
    & =
    \sum_{A_1=1}^b {\delta}_{A_1} \cdot \norm{ \widetilde{T}^{(A_1)} -{T}^{(A_1)} }_1
    \leq
    \sum_{A_1=1}^b {\delta}_{A_1} \cdot (1-\alpha) \eps
    =
    (1-\alpha)\eps.
\end{align*}
Combining these two inequalities, we conclude that, for any $T\in \sT_b(G)$, there exists a $\widetilde{T}\in C$ such that $\norm{\widetilde{T} - T}_1\leq \alpha \eps + (1-\alpha)\eps=\eps.$
\end{proof}

\subsection{Controlling Bias and Variance}
Like many the analysis of many estimators this proof has two pain parts, analysis of the bias, and the analysis of the variance.
\subsubsection{Variance}
To control the variance we will use the follow bound on covering numbers
\begin{prop}\label{prop:rdim-cover}
For any $b\in \N$, any $0<\eps<1$ and any graph $G$ whose number of vertices is $d$ and resilience is $r$, we have
\begin{align*}
    \log N\left(\sT_b(G), \eps\right) 
    & \leq
    db^{r}\log \left(\frac{2d^{r+1}b}{\eps}\right).
\end{align*}
\end{prop}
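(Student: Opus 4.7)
The plan is to unroll the disintegration of $G$ via alternating applications of Lemma \ref{lem:submult} (splitting a disconnected graph into its components) and Lemma \ref{lem:vertex-removal} (removing a single vertex from a connected graph). Fix an $r$-disintegration $(V_1, \dots, V_r)$ of $G$. Starting from $G$, at each stage $k$ the current graph $G \setminus \bigcup_{j < k} V_j$ has $|V_k|$ components, each containing exactly one vertex of $V_k$. At each stage I split into components (Lemma \ref{lem:submult}) and, within each component, peel off the $V_k$-vertex using Lemma \ref{lem:vertex-removal}. This produces a recursion tree whose only non-null leaves are simplex-covering instances $N(\Delta^b, \cdot)$, bounded via Lemma \ref{lem:simplex-cover}.

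Next, the plan is to track two things along each root-to-leaf path: the multiplicative weight accumulated (from the $b$-power that appears in Lemma \ref{lem:vertex-removal}) and the residual $\varepsilon$-budget at the leaf. A simplex leaf created at stage $i$ sits below exactly $i-1$ ``reduced graph'' branches of preceding Lemma \ref{lem:vertex-removal} applications (each contributing one factor of $b$), so it carries weight $b^{i-1}$; moreover there are exactly $|V_i|$ such leaves, one per component at stage $i$. To balance the residual $\varepsilon$-budgets across stages, I take the Lemma \ref{lem:vertex-removal} parameter $\alpha_k = 1/(r-k+1)$ at stage $k$. A direct telescoping computation gives $\alpha_i \prod_{k<i}(1-\alpha_k) = 1/r$, so every stage-$i$ simplex leaf receives an $\alpha$-fraction equal to exactly $1/r$ of the original budget. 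Combining this with the component-count divisions from Lemma \ref{lem:submult} (each at most $d$), the residual budget at a stage-$i$ leaf is at least $\varepsilon/(rd^i)$.

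Summing the per-leaf contributions $b^{i-1} \cdot b \log(2b/\varepsilon_{\text{leaf}})$ yields
\[
\log N(\sT_b(G), \varepsilon) \le \sum_{i=1}^{r} |V_i|\, b^{i} \log\!\left(\frac{2brd^i}{\varepsilon}\right).
\]
Using $b^i \le b^r$, the estimate $rd^i \le d \cdot d^r = d^{r+1}$ (the bound $r \le d$ is immediate from $\sum_i |V_i| = d$ and $|V_i| \ge 1$), and $\sum_i |V_i| = d$, this collapses to $d b^r \log(2 d^{r+1} b/\varepsilon)$, matching the claim.

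The main technical subtlety is selecting the $\alpha_k$-schedule so that the combined $\alpha$-contributions telescope cleanly to $1/r$ at every leaf; a naive choice such as $\alpha_k = 1/2$ would introduce a factor of $2^r$ inside the logarithm and miss the target bound, while the prescription $\alpha_k = 1/(r-k+1)$ is exactly calibrated to the $r$-stage structure of the disintegration. Once this balancing is in place, the rest of the argument is bookkeeping on the recursion tree.
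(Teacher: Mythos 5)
Your proof is correct and follows essentially the same approach as the paper's: alternate applications of Lemma~\ref{lem:submult} and Lemma~\ref{lem:vertex-removal} following the disintegration, terminating in Lemma~\ref{lem:simplex-cover} at the leaves. The only substantive difference is bookkeeping: the paper argues by induction on $r$ (first for connected $G$, then in general) with the locally chosen budget split $\alpha = 1/(t+1)$ at each step, whereas you unroll the recursion tree explicitly with the globally calibrated stage schedule $\alpha_k = 1/(r-k+1)$; both choices deliver the same final bound.
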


\begin{proof}[Proof of Proposition \ref{prop:rdim-cover}]

We first assume that $G$ is a connected graph and prove that 
\begin{align}
    \log N\left(\sT_b(G), \eps\right) 
    & \leq
    db^{r}\log \left(\frac{2d^{r}b}{\eps}\right).\label{eqn:rdim-cover-con}
\end{align}
Note that the exponent in the factor $d^r$ is $r$ as opposed to $r+1$ in the original statment.
We will prove the statement by induction on $r$.

\paragraph{Base case $r=1$:}
Note that the graph $G$ is a graph with only one vertex.
By Lemma \ref{lem:simplex-cover}, we have
\begin{align*}
    \log N(\sT_b(G), \eps) 
    & \leq
    \log N(\Delta^b, \eps)
    \leq
    b\log \left(\frac{2b}{\eps}\right)
    =
    1\cdot b^1\log\left(\frac{2\cdot 1^1 \cdot b}{\eps}\right),
\end{align*}
thereby satisfying \eqref{eqn:rdim-cover-con}.

\paragraph{Inductive step $r>1$:}
Let $v$ be the vertex selected in the first disintegration step of the $r$-disintegration and $G_1,\dots,G_t$ be the connected components after removing $v$ from $G$.
Also, for all $i\in [t]$, let $r_i = \fR(G_i)$ and  $d_i$ be the number of vertices in $G_i$.
By Lemma \ref{lem:vertex-removal}, we first have
\begin{align*}
    \log N\left(\sT_b(G), \eps\right) 
    & \leq
    \log N\left(\sT_{1,b}, \frac{\eps}{t+1}\right) + b \cdot\log N\left(\sT_b(G\setminus\{v\}), \frac{t\eps}{t+1}\right)\\
    & \leq
    \underbrace{b\log \left(\frac{2(t+1)b}{\eps}\right)}_{\text{by Lemma \ref{lem:simplex-cover}}} + b \cdot\log N\left(\sT_b(G\setminus\{v\}), \frac{t\eps}{t+1}\right). \numberthis\label{eq:inductive_1}
\end{align*}
By Lemma \ref{lem:submult}, we further have
\begin{align*}
    \log N\left(\sT_b(G\setminus\{v\}), \frac{t\eps}{t+1}\right)
    & \leq
    \sum_{i=1}^t \log N\left( \sT_b(G_i), \frac{\eps}{t+1}  \right). \numberthis\label{eq:inductive_2}
\end{align*}
Note that the resilience of each $G_i$ is at most $r-1$ (i.e. $r_i\leq r-1$) and hence we invoke the inductive assumption.
We have
\begin{align*}
    \log N\left( \sT_b(G_i), \frac{\eps}{t+1}  \right)
    & \leq
    d_i \cdot b^{r_i} \cdot \log \left(\frac{2(t+1)d_i^{r_i}b}{\eps}\right) \qquad \text{by the induction assumption} \\
    & \leq
    d_i \cdot b^{r-1} \cdot \log \left(\frac{2d^{r}b}{\eps}\right) \qquad \text{since $r_i\leq r-1$ and $t+1,d_i \leq d$.}\numberthis\label{eq:inductive_each}
\end{align*}
By plugging \eqref{eq:inductive_each} into \eqref{eq:inductive_2} and \eqref{eq:inductive_1}, we have
\begin{align*}
    \log N\left(\sT_b(G), \eps\right) 
    & \leq
    b\log \left(\frac{2(t+1)b}{\eps}\right) + b \cdot \sum_{i=1}^t d_i \cdot b^{r-1} \cdot \log \left(\frac{2d^{r}b}{\eps}\right) \\
    & \leq
    b^{r}\log \left(\frac{2d^{r}b}{\eps}\right) \cdot \left(1+\sum_{i=1}^t d_i\right) \qquad\text{since $t+1 \leq d^r$} \\
    & =
    db^{r}\log \left(\frac{2d^{r}b}{\eps}\right) \qquad\text{since $1+\sum_{i=1}^t d_i = d$.}
\end{align*}

Now, we remove the assumption of $G$ being connected.
Suppose $G$ has $t$ connected components, $G_1,\dots,G_t$ whose number of vertices is $d_i$ and resilience is $r_i$.
We have
\begin{align*}
    \log N(\sT_b(G),\eps)
    & \leq
    \sum_{i=1}^t \log N\left(\sT_b(G_i), \frac{\eps}{t}\right) \qquad \text{by Lemma \ref{lem:submult}} \\
    & \leq
    \sum_{i=1}^t d_i b^{r_i}\log \left(\frac{2td_i^{r_i}b}{\eps}\right) \qquad \text{from the case of $G$ being connected} \\
    & \leq
    d b^{r}\log \left(\frac{2d^{r+1}b}{\eps}\right) \qquad \text{since $d_i \leq \sum_{i=1}^t d_i=d$, $r_i\leq r$ and $t\leq d$.}
\end{align*}

\end{proof}

\subsubsection{Analysis of Bias}
\begin{thm}\label{thm:bias}
    For any $d,b\in \N$, any $L>0$, any graph $G$ and any $p \in \sD_L(G)$, we have 
    \begin{equation*}
    \min_{p' \in U_{d,b}\left(\sT_b(G)\right)} \left\Vert p - p' \right\Vert_1 \le \sqrt{d} L/b.
    \end{equation*}
\end{thm}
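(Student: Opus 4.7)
The plan is to construct $p'\in U_{d,b}(\sT_b(G))$ as a normalized point-evaluation of $p$ at bin centers, and to bound $\|p - p'\|_1$ by a triangle inequality with the bin-center value as intermediate. Concretely, let $c_A = \bigl(\tfrac{A_1-1/2}{b},\dots,\tfrac{A_d-1/2}{b}\bigr)$ denote the center of $\Lambda_{d,b,A}$, put $Z = b^{-d}\sum_{A\in[b]^d} p(c_A)$, and define $T\in\sT_{d,b}$ by $T_A = p(c_A)/(b^d Z)$. Then $p' := U_{d,b}(T)$ is piecewise constant with $p'(x) = p(c_A)/Z$ whenever $x\in\Lambda_{d,b,A}$. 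We may assume $L\sqrt d/b < 1$, since otherwise the bound follows trivially from $\|p-q\|_1\le 2$ for any two densities; this assumption also makes $Z > 0$ below.

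To show $T\in\sT_b(G)$, I would verify discrete conditional independence directly. Fix a separation $A\mid C\mid B$ in $G$, let $D = V\setminus(A\cup B\cup C)$, and partition $D = D_A \sqcup D_B \sqcup D_\emptyset$ according to which components of $G\setminus C$ each $D$-vertex joins (components meeting $A$, meeting $B$, or meeting neither). Setting $S_A = A\cup D_A$, $S_B = B\cup D_B$, $S_\emptyset = D_\emptyset$, these three sets lie in distinct components of $G\setminus C$, so they are pairwise separated by $C$ in $G$ and therefore mutually conditionally independent under $p$ given $X_C$. This yields
\[
p(x) = h(x_{S_A},x_C)\,k(x_{S_B},x_C)\,m(x_{S_\emptyset},x_C)
\]
for some functions $h,k,m$. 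Since $c_y$ is coordinate-wise, $p(c_y)$ inherits this product form, and the $T$-marginal $P_T(y_A,y_B,y_C) \propto \sum_{y_D} p(c_y)$ factors as a product of a sum over $y_{D_A}$ (a function of $(y_A,y_C)$), a sum over $y_{D_B}$ (a function of $(y_B,y_C)$), and a sum over $y_{D_\emptyset}$ (a function of $y_C$). Hence $Y_A\indep Y_B\mid Y_C$ under $T$.

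For the bias bound,
\[
\|p - p'\|_1 \le \int |p(x) - p(c_A)|\,dx + \int \bigl|p(c_A) - p(c_A)/Z\bigr|\,dx,
\]
where $A$ denotes the bin index of $x$ in both integrands. Since $\|x - c_A\|_2 \le \sqrt d/(2b)$ on $\Lambda_{d,b,A}$ and $p$ is $L$-Lipschitz, $|p(x) - p(c_A)| \le L\sqrt d/(2b)$ pointwise, and the first integral is at most $L\sqrt d/(2b)$. For the second, the integrand equals $p(c_A)\,|1 - 1/Z|$, and since $\int p(c_A)\,dx = Z$ the integral collapses to $|Z - 1|$; writing $Z - 1 = \sum_A \int_{\Lambda_{d,b,A}}(p(c_A)-p(x))\,dx$ and reusing the pointwise Lipschitz estimate gives $|Z - 1| \le L\sqrt d/(2b)$. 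Summing the two contributions yields $\|p - p'\|_1 \le L\sqrt d/b$, as required.

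The main obstacle is the Markov-inheritance step. The natural alternative of setting $T_A = \int_{\Lambda_{d,b,A}} p$ does \emph{not} preserve discrete conditional independence in general, even when $p$ is continuously Markov, so some care is required to choose a construction whose Markov property transfers cleanly to the tensor. The point that makes center-evaluation work is the refined factorization exhibited above: because $C$ separates all of $G$ and not merely $A$ from $B$ in isolation, every $D$-vertex can be absorbed into exactly one of $h$, $k$, or $m$, so the sum over $y_D$ decouples into independent sums. Without this component decomposition of $G\setminus C$ one would recover only $Y_A\indep Y_B\mid Y_C, Y_D$, which does not imply the desired $Y_A\indep Y_B\mid Y_C$.
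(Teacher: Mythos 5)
Your construction and bias estimate coincide with the paper's: you evaluate $p$ at the bin centers, normalize by $Z$, and split the $L^1$ error into a Lipschitz-approximation term and a normalization term, each bounded by $L\sqrt d/(2b)$. The computation that the normalization error collapses to $\abs{Z-1}\le L\sqrt d/(2b)$ is exactly the paper's argument. Your remark that the naive choice $T_A=\int_{\Lambda_{d,b,A}}p$ would destroy the discrete Markov property, whereas point evaluation preserves it, is precisely the motivation behind the paper's construction.

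The one place you diverge is the Markov-inheritance step, and here you prove more than the paper does. The paper's declared definition of $\sT_b(G)$ only requires conditional independence for separations that \emph{partition} $V$, i.e.\ $V=V_1\cup V_2\cup V_0$ with $V_0$ separating $V_1$ from $V_2$, so there is never a leftover set $D$ to sum over; they simply evaluate the two-factor identity at centroids and check the product equation directly. You instead establish the full global Markov property for $T$ (arbitrary $A,C,B$ with $D=V\setminus(A\cup B\cup C)$ nonempty), which genuinely requires your component decomposition $D=D_A\sqcup D_B\sqcup D_\emptyset$ so that the sum over $y_D$ decouples; without it one would only recover $Y_A\indep Y_B\mid (Y_C,Y_D)$, as you correctly observe. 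This is a valid and slightly stronger argument, though unnecessary for the definition the paper actually uses. One wording point: $S_A,S_B,S_\emptyset$ need not be single components of $G\setminus C$ but rather disjoint unions of components; the separation conclusion is unaffected. Finally, a small slip in the degenerate case: $\norm{p-q}_1\le 2$ only disposes of the range $L\sqrt d/b\ge 2$, not $\ge 1$ as you claim; the threshold should be $2$ (which is also exactly the threshold ensuring $Z>0$, matching the paper's treatment), and the interval $L\sqrt d/b\in[1,2)$ as written is unhandled, though the main argument applies there anyway.
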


\begin{proof}[Proof of Theorem \ref{thm:bias}]

For any $d,b\in \N$ and $A\in[b]^d$, recall that $\Lambda_{d,b,A} = \prod_{i=1}^d \left[\frac{A_i-1}{b}, \frac{A_i}{b}\right)$.
Let $\lambda_A$ be the centroid of $\Lambda_{d,b,A}$.
We define
\begin{align*}
    \tp' = \sum_{A \in [b]^d} p\left(\lambda_{A}\right) \chi_{\Lambda_{d,b,A}} 
    \quad \text{and} \quad 
    \tp = \tp' / Z \quad \text{if $Z\neq 0$} \numberthis\label{eq:tpp_def}
\end{align*}
where $\chi_{\Lambda_{d,b,A}}$ is the indicator, i.e. $\chi_{\Lambda_{d,b,A}}(x) = \begin{cases}
    1 & \text{if $x\in \Lambda_{d,b,A}$} \\
    0 & \text{if $x\notin \Lambda_{d,b,A}$}
\end{cases}$ and $Z$ is the normalizing factor, i.e. $Z = \int_{x\in[0,1)^d} \tp'(x) dx$.

Since the construction requires $Z\neq 0$, we now first show that when $Z=0$ the statement holds trivially.
When $Z=0$, we have
\begin{align*}
    p(\lambda_{A})
    & =
    0 \quad\text{for all $A\in [b]^d$}
\end{align*}
which implies
\begin{align*}
    p(x)
    \leq 
    L\cdot \norm{x - \lambda_A}_2
    \leq
    L\sqrt{d}/(2b) \quad\text{since $p\in \sD_L(G)$}.
\end{align*}
Hence, we further have
\begin{align*}
    1
    & =
    \int_{x\in[0,1)^d} p(x) dx
    =
    \sum_{A\in [b]^d} \int_{x\in \Lambda_{d,b,A}} p(x) dx
    \leq 
    \sum_{A\in [b]^d} \int_{x\in \Lambda_{d,b,A}} L\sqrt{d}/(2b) dx
    =
    L\sqrt{d}/(2b).
\end{align*}
Namely, $L\sqrt{d}/b \geq 2$ and therefore the statement holds for any $p'\in U_{d,b}\left(\sT_b(G)\right)$.

Now, we assume that $Z\neq 0$.
We will prove the statement by arguing 1) $\tp \in U_{d,b}\left(\sT_b(G)\right)$ and 2) $\norm{p - \tp}_1 \leq \sqrt{d}L / b$.

\paragraph{To prove 1) $\tp \in U_{d,b}\left(\sT_b(G)\right)$:}
For any partition $V_0\cup V_1\cup V_2$ of the vertex set of $G$ such that $V_0$ separates $V_1$ and $V_2$, we would like to show, for any $x^*\in [0,1)^d$ (without loss of generality, we reorder the indices such that $x = (x_{V_1},x_{V_2},x_{V_0})$)
\begin{align*}
    \frac{\tp(x^*_{V_1},x^*_{V_2},x^*_{V_0})}{\tp(\colon,\colon, x^*_{V_0})}
    & =
    \frac{\tp(x^*_{V_1},\colon, x^*_{V_0})}{\tp(\colon,\colon, x^*_{V_0})} \cdot \frac{\tp(\colon,x^*_{V_2}, x^*_{V_0})}{\tp(\colon,\colon, x^*_{V_0})} \quad \text{if $\tp(\colon,\colon, x^*_{V_0}) \neq 0$}
\end{align*}
where we use $\colon$ to indicate integrating with respect to the corresponding variables, i.e.
\begin{align*}
    \tp(\colon,\colon,x^*_{V_0})
    & =
    \int_{(x_{V_1},x_{V_2})\in [0,1)^{d_1+d_2}} \tp(x_{V_1},x_{V_2},x^*_{V_0}) d(x_{V_1},x_{V_2}) \\
    \tp(\colon,x^*_{V_2},x^*_{V_0})
    & =
    \int_{x_{V_1}\in [0,1)^{d_1}} \tp(x_{V_1},x^*_{V_2},x^*_{V_0}) dx_{V_1}\\
    \tp(x^*_{V_1},\colon,x^*_{V_0})
    & =
    \int_{x_{V_2}\in [0,1)^{d_2}} \tp(x^*_{V_1},x_{V_2},x^*_{V_0}) dx_{V_2}
\end{align*}
and $d_1$ (resp. $d_2$) is the size of $V_1$ (resp. $V_2$).
It is equivalent to show
\begin{align*}
    \tp(x^*_{V_1},x^*_{V_2},x^*_{V_0})\cdot \tp(\colon,\colon, x^*_{V_0})
    & =
    \tp(x^*_{V_1},\colon, x^*_{V_0}) \cdot \tp(\colon,x^*_{V_2}, x^*_{V_0}).\numberthis\label{eq:cond_ind}
\end{align*}
We now analyze each of $\tp(x^*_{V_1},x^*_{V_2},x^*_{V_0}),\tp(\colon,\colon, x^*_{V_0}),\tp(x^*_{V_1},\colon, x^*_{V_0}) ,\tp(\colon,x^*_{V_2}, x^*_{V_0})$.
Let $\lambda$ (resp. $\lambda^*$) be the closest centroid to $x$ for any $x\in [0,1)^d$ (resp. $x^*$).
For $\tp'(x^*_{V_1},x^*_{V_2},x^*_{V_0})$, we have
\begin{align*}
    \tp(x^*_{V_1},x^*_{V_2},x^*_{V_0})
    & =
    \frac{1}{Z}p(\lambda^*_{V_1},\lambda^*_{V_2},\lambda^*_{V_0}) \\
    & =
    \frac{p(\lambda^*_{V_1},\colon,\lambda^*_{V_0})\cdot p(\colon,\lambda^*_{V_2},\lambda^*_{V_0})}{Zp(\colon,\colon,\lambda^*_{V_0})}\quad \text{since $p\in \sD_L(G)$ and $V_0$ separates $V_1$ and $V_2$}. \numberthis\label{eq:cond_ind_1}
\end{align*}
Here, recall that  we use $\colon$ to indicate integrating with respect to the corresponding variables, i.e., for any centroid $\lambda$ including $\lambda^*$,
\begin{align*}
    p(\lambda_{V_1},\colon,\lambda_{V_0})
    & = 
    \int_{x_{V_2}\in [0,1)^{d_2}}p(\lambda_{V_1},x_{V_2},\lambda_{V_0}) d x_{V_2}\\
    p(\colon,\lambda_{V_2},\lambda_{V_0}) 
    & =
    \int_{x_{V_1}\in [0,1)^{d_1}}p(x_{V_1},\lambda_{V_2},\lambda_{V_0}) d x_{V_1}\\
    p(\colon,\colon,\lambda_{V_0})
    & =
    \int_{(x_{V_1},x_{V_2})\in [0,1)^{d_1+d_2}}p(x_{V_1},x_{V_2},\lambda_{V_0}) d (x_{V_1},x_{V_2}).
\end{align*}
For $\tp'(\colon,\colon, x^*_{V_0})$, we have
\begin{align*}
    \tp(\colon,\colon, x^*_{V_0})
    & =
    \int_{(x_{V_1},x_{V_2})\in [0,1)^{d_1+d_2}} \tp(x_{V_1},x_{V_2},x^*_{V_0}) d(x_{V_1},x_{V_2}) \\
    & =
    \frac{1}{Z}\int_{(x_{V_1},x_{V_2})\in [0,1)^{d_1+d_2}} \tp'(x_{V_1},x_{V_2},x^*_{V_0}) d(x_{V_1},x_{V_2})
\end{align*}
Since, for any $A\in[b]^d$, $\tp'(x_{V_1},x_{V_2},x^*_{V_0}) = p(\lambda_{V_1},\lambda_{V_2},\lambda^*_{V_0})$ for all $(x_{V_1},x_{V_2},x^*_{V_0}) \in \Lambda_{d,b,A}$ by \eqref{eq:tpp_def}, we have
\begin{align*}
    \int_{(x_{V_1},x_{V_2})\in \Lambda_{d,b,A}} \tp'(x_{V_1},x_{V_2},x^*_{V_0}) d(x_{V_1},x_{V_2})
    & =
    \frac{1}{b^{d_1+d_2}}p(\lambda_{V_1},\lambda_{V_2},\lambda^*_{V_0}).
\end{align*}
By plugging it into the equation for $\tp(\colon,\colon, x^*_{V_0})$, we have
\begin{align*}
    \tp(\colon,\colon, x^*_{V_0})
    & =
    \sum_{\text{centroid $\lambda: \lambda_{V_0}=\lambda^*_{V_0}$}} \frac{1}{Zb^{d_1+d_2}}p(\lambda_{V_1},\lambda_{V_2},\lambda^*_{V_0}) \\
    & =
    \sum_{\text{centroid $\lambda: \lambda_{V_0}=\lambda^*_{V_0}$}} \frac{p(\lambda_{V_1},\colon,\lambda^*_{V_0})\cdot p(\colon,\lambda_{V_2},\lambda^*_{V_0})}{Zb^{d_1+d_2}p(\colon,\colon,\lambda^*_{V_0})}.\numberthis\label{eq:cond_ind_2}
\end{align*}
For $\tp'(x^*_{V_1},\colon, x^*_{V_0})$, we have
\begin{align*}
    \tp(x^*_{V_1},\colon, x^*_{V_0})
    & =
    \sum_{\text{centroid $\lambda: (\lambda_{V_1},\lambda_{V_0})=(\lambda^*_{V_1},\lambda^*_{V_0})$}} \frac{1}{Zb^{d_2}}p(\lambda^*_{V_1},\lambda_{V_2},\lambda^*_{V_0}) \\
    & =
    \sum_{\text{centroid $\lambda: (\lambda_{V_1},\lambda_{V_0})=(\lambda^*_{V_1},\lambda^*_{V_0})$}} \frac{p(\lambda^*_{V_1},\colon,\lambda^*_{V_0})\cdot p(\colon,\lambda_{V_2},\lambda^*_{V_0})}{Zb^{d_2}p(\colon,\colon,\lambda^*_{V_0})}.\numberthis\label{eq:cond_ind_3}
\end{align*}
Similarly, for $\tp'(\colon, x^*_{V_2},x^*_{V_0})$, we have
\begin{align*}
    \tp(\colon, x^*_{V_2},x^*_{V_0})
    & =
    \sum_{\text{centroid $\lambda: (\lambda_{V_2},\lambda_{V_0})=(\lambda^*_{V_2},\lambda^*_{V_0})$}} \frac{p(\colon,\lambda^*_{V_2},\lambda^*_{V_0})\cdot p(\lambda_{V_1},\colon,\lambda^*_{V_0})}{Zb^{d_1}p(\colon,\colon,\lambda^*_{V_0})}.\numberthis\label{eq:cond_ind_4}
\end{align*}
By using \eqref{eq:cond_ind_1}, \eqref{eq:cond_ind_2}, \eqref{eq:cond_ind_3} and \eqref{eq:cond_ind_4}, we can conclude \eqref{eq:cond_ind}.

\paragraph{To prove 2) $\norm{p - \tp}_1 \leq \sqrt{d}L / b$:} 
We first express 
\begin{align*}
    \norm{p - \tp}_1
    & \leq
    \norm{p - \tp'}_1 + \norm{\tp' - \tp}_1 \quad \text{by triangle inequality.} \numberthis\label{eq:proj_tri}
\end{align*}
Note that $\tp'$ may not be a probability density.

For the first term $\norm{p - \tp'}_1$, we have
\begin{align*}
    \norm{p - \tp'}_1
    & =
    \int_{x\in [0,1)^d} \abs{p(x) - \tp'(x)} dx 
    =
    \sum_{A\in [b]^d} \int_{x\in \Lambda_{d,b,A}}\abs{p(x) - \tp'(x)} dx. \numberthis\label{eq:proj_first_term}
\end{align*}
For any $x\in \Lambda_{d,b,A}$, we have $\tp'(x) = p(\lambda_{A})$ by the definition of $\tp'$.
Also, since $p\in \sD_L(G)$, $p$ satisfies the $L$-Lipschitz condition which implies
\begin{align*}
    \abs{p(x) - \tp'(x)} 
    & =
    \abs{p(x) - p(\lambda_{A})} 
    \leq
    L \cdot \norm{x-  \lambda_{A}}_2
    \leq
    L\sqrt{d}/(2b). \numberthis\label{eq:first_term_lip}
\end{align*}
By plugging \eqref{eq:first_term_lip} into \eqref{eq:proj_first_term}, we have
\begin{align*}
    \norm{p - \tp'}_1
    & \leq
    \sum_{A\in [b]^d} \int_{x\in \Lambda_{d,b,A}} L\sqrt{d}/(2b) dx
    =
    L\sqrt{d}/(2b). \numberthis\label{eq:first_term}
\end{align*}

For the second term $\norm{\tp' - \tp}_1$, we observe that $\tp$ is simply a scaled version of $\tp'$ by a factor of $1/Z$.
Namely, we have
\begin{align*}
    \norm{\tp'}_1
    & =
    Z 
    \quad \text{and} \quad 
    \norm{\tp' - \tp}_1
    =
    \abs{Z - 1} \quad \text{since $\norm{\tp}_1=1$.}
\end{align*}
By \eqref{eq:first_term} and the fact of $\norm{p}_1 = 1$, we further have
\begin{align*}
    \abs{Z - 1}
    =
    \abs{\norm{\tp'}_1 - 1} 
    \leq 
    \norm{p - \tp'}_1
    \leq
    L\sqrt{d}/(2b) 
    \quad \text{which implies} \quad 
    \norm{\tp' - \tp}_1
    \leq 
    L\sqrt{d}/(2b). \numberthis\label{eq:second_term} 
\end{align*}
By plugging \eqref{eq:first_term} and \eqref{eq:second_term} into \eqref{eq:proj_tri}, we have
\begin{align*}
    \norm{p - \tp}_1 
    & \leq
    L\sqrt{d}/(2b) + L\sqrt{d}/(2b)
    =
    L\sqrt{d}/b.
\end{align*}

\end{proof}

\subsection{Proof of Main Theorems}
With the previous results established we can now prove the core sample complexity results of this work.
\begin{proof}[Proof of Theorem \ref{thm:main-known-G}]

Recall that Lemma \ref{lem:estimator-algorithm} states the following.
There is a deterministic algorithm that, given a collection of $M$ distributions $C = \{p_1,\dots,p_M\}$, any $0<\eps<1$ and at least $\frac{\log(3M^2 /\delta )}{2\eps^2}$ i.i.d. samples drawn from an unknown distribution $p$, outputs an index $j\in[M]$ such that 
\begin{align*}
    \norm{p_j - p}_1
    & \leq
    3\min_{i\in [M]}\norm{p_i - p}_1 + 4\eps \quad \text{with probability at least $1-\frac{\delta}{3}$.}
\end{align*}

We will use the algorithm from Lemma \ref{lem:estimator-algorithm} by taking the collection $C$ to be the $\eps$-cover for $\sT_b(G)$.
Here, we determine $b\in \N$ later.
By Proposition \ref{prop:rdim-cover}, we have
\begin{align*}
    \log M =  \log \abs{C} \leq db^r \log (\frac{2d^{r+1}b}{\eps})
\end{align*}
and, by Theorem \ref{thm:bias}, there exists a $p'\in U_{d,b}(\sT_b(G))$ such that
\begin{align*}
    \norm{p - p'}_1
    & \leq
    \frac{\sqrt{d}L}{b}
\end{align*}
which implies there exists a $p''\in C$ such that
\begin{align*}
    \norm{p-p''}_1
    & \leq
    \norm{p - p'}_1 + \norm{p' - p''}_1
    \leq
    \frac{\sqrt{d}L}{b} + \eps.
\end{align*}
Namely, whenever $n \geq \frac{db^r \log (\frac{2d^{r+1}b}{\eps})}{\eps^2} + \frac{\log (3/\delta)}{2\eps^2}$ i.i.d. samples are drawn from $p$, the distribution $q$ returned by the algorithm in Lemma \ref{lem:estimator-algorithm} satisfies
\begin{align*}
    \norm{q - p}_1
    & \leq
    3\bigg(\frac{\sqrt{d} L}{b} + \eps\bigg) + 4\eps 
    =
    \frac{3\sqrt{d} L}{b} + 7\eps \quad \text{with probability at least $1-\frac{\delta}{3}$.}
\end{align*}
Now, by picking $b=\Theta(\frac{\sqrt{d} L}{\eps})$, the desired result follows.
\end{proof}

\begin{proof}[Proof of Theorem \ref{thm:unknown-G}]

The proof is the same as in Theorem \ref{thm:main-known-G} except that we need to another $\eps$-cover for $\bigcup_{G \in \G} \sT_b(G)$.
For any $G\in \G$, let $C_G$ be the $\eps$-cover for $\sT_b(G)$.
By taking $\bigcup_{G\in \G} C_G$ to be the $\eps$-cover for $\bigcup_{G\in \G}\sT_b(G)$, we have
\begin{align*}
    \log M
    & =
    \log \abs{C}
    \leq
    \log \big(\sum_{G\in \G} \abs{C_G}\big).
\end{align*}
By Proposition \ref{prop:rdim-cover}, each of $\abs{C_G}$ is bounded above by $(\frac{2d^{r+1}b}{\eps})^{db^r}$ and, by considering all possible graphs, $\abs{\G}$ is bounded above by $2^{d^2}$.
In particular, when $r=1$, we have $\abs{\G}=1$.
Hence, we have
\begin{align*}
    \log M
    & \leq
    \begin{cases}
        db^r \log (\frac{2d^{r+1}b}{\eps}) + d^2  & \text{if $r>1$}\\
        db \log (\frac{2d^{2}b}{\eps})& \text{if $r=1$}
    \end{cases}
\end{align*}
and we can conclude the desired result by following the rest of the proof in Theorem \ref{thm:main-known-G}.
\end{proof}

\section{Proofs on Graph Resilience}
For proofs in this section it will be implicit that graphs with a subscript, $G_i$, are equal to $(V_i,E_i)$.

For proofs in this section we will introduce a useful concept we term a \emph{pre-disintegration of a graph}. 
\begin{defn}\label{def:predis}
    For any graph $G=(V,E)$, we define a \emph{pre-disintegration} of $G$ to be a tuple of subsets of $V$ which satisfies the properties \ref{def:dis_partition}, \ref{def:dis_first_step}, and, \ref{def:dis_next_step} in Definition \ref{def:dis} of disintegration, noting that some entries of a pre-disintegration may be the empty set, including the first entry. 
\end{defn}
In other words, the only difference between a pre-disintegration and a disintegration is that, in each step, we do not have to select a vertex in every connected component.
The \emph{length of a pre-disintegration}, $D$, is defined to be the largest $r$ such that $D_r$ is nonempty and we call such pre-disintegration a $r$-pre-disintegration. 
We now have the following lemma on pre-disintegrations.
\begin{lemma}\label{lem:p-disintegration}
For any graph $G$, if there is a $\ell$-pre-disintegration of $G$, then $\fR(G)\leq \ell$.
\end{lemma}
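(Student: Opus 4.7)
The plan is to show that any pre-disintegration $D = (V_1, \ldots, V_\ell)$ can be converted into a genuine disintegration of length at most $\ell$ by iteratively ``promoting'' vertices from later sets into earlier ones. The only way a pre-disintegration fails to be a disintegration is by violating the size conditions (properties \ref{def:dis_first_size} and \ref{def:dis_next_size} of Definition \ref{def:dis}), i.e.\ by some component of $G_i := G \setminus \bigcup_{j<i} V_j$ failing to contribute a vertex to $V_i$. So my task is to fill in these missing components without ever lengthening the tuple.

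Concretely, I would pick the smallest index $i$ such that some component $C$ of $G_i$ has empty intersection with $V_i$. Since $\{V_k\}$ partitions $V$ and $C$ is disjoint from $\bigcup_{j<i} V_j$, every vertex of $C$ lies in some $V_j$ with $j \ge i$; and because $V_i \cap C = \emptyset$, one can find some $v \in C$ sitting in $V_j$ for a strict $j > i$. Define $V'_i := V_i \cup \{v\}$, $V'_j := V_j \setminus \{v\}$, and $V'_k := V_k$ otherwise.

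The main technical step is checking that the modified tuple $D' = (V'_1, \ldots, V'_\ell)$ is still a pre-disintegration. The partition property is immediate. For the component-separation conditions, I would observe that the new prefix unions satisfy $U'_k = U_k$ for $k \le i$ and $k > j$, while $U'_k = U_k \cup \{v\}$ for $i < k \le j$. Since deleting one more vertex from a graph can only split components, never merge them, any elements previously in distinct components of $G \setminus U_k$ remain in distinct components of $G \setminus U'_k$. At step $i$ itself, adding $v$ is consistent because $v$ was drawn from a component of $G_i$ that contained no prior element of $V_i$. Hence $D'$ satisfies all the conditions in Definition \ref{def:predis}.

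For termination, the potential $\Phi(D) = \sum_{v \in V} i_D(v)$, where $i_D(v)$ is the index $k$ with $v \in V_k$, strictly decreases by $j - i \ge 1$ at each move and is bounded below by $|V|$, so the process halts after finitely many promotions. When it halts, every step has all its components covered. The resulting tuple is a disintegration after truncating any empty trailing sets; such empty sets cannot appear in the middle, since an empty $V^\star_i$ with $G \setminus U^\star_i$ nonempty would contain an uncovered component, contradicting termination. Because promotions only ever move a vertex to an earlier index, the length never grows, so the final disintegration has length at most $\ell$ and $\fR(G) \le \ell$. I expect the most delicate bookkeeping to be at the sandwiched indices $i < k \le j$, where the prefix union shifts by $\{v\}$, but the monotonicity of component structure under vertex deletion handles this cleanly.
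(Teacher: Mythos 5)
Your proof is correct and uses the same core idea as the paper's: promote a vertex from an uncovered component to the step where the size condition fails, relying on the fact that removing vertices only refines (never merges) the component structure. The paper phrases this as an extremal argument—taking a minimal element of $\{D \leq \widehat D\}$ under the componentwise partial order $D^{-1}(v) \leq D'^{-1}(v)$ and deriving a contradiction—whereas you phrase it as an explicit iterative promotion with the potential $\Phi(D)=\sum_v i_D(v)$ to guarantee termination; these are equivalent presentations of the same argument, and your version spells out the verification that the modified tuple remains a pre-disintegration a bit more explicitly than the paper does.
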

\begin{proof}[Proof of Lemma \ref{lem:p-disintegration}]

Let $\mathcal{D}$ be the set of all pre-disintegration of $G$.
For any $D\in \mathcal{D}$ and any vertex $v\in V$ where $V$ is the vertex set of $G$, we define $D^{-1}$ to be a function from $V$ to $\N$ such that $D^{-1}(v)$ is the index $i$ where $v\in D_i$.
We will now define a partial order on $\mathcal{D}$.
For any $D$ and $D'$ in $\mathcal{D}$, we say $D \leq D'$ if $D^{-1}(v)\leq {D'}^{-1}(v)$ for all $v\in V$.
Namely, every vertex $v$ appears no later in $D$ than in $D'$.
From the assumption, there exists a $\ell$-pre-disintegration of $G$, $\widehat{D}$.
Similarly to the definition of the length of a pre-disintegration, trailing empty sets are simply ignored to satisfy the antisymmetry property for a partial order.
Hence, the set $\{D\in\mathcal{D} \mid D \leq \widehat{D}\}$ is finite and there must exist a minimal element, $D^*$.
By removing trailing empty sets from $D^*$, the length of $D^*$ is no larger than $\ell$.

Now, we need to show $D^*$ is a disintegration.
Since $D^*$ is a pre-disintegration, it satisfies the properties \ref{def:dis_partition}, \ref{def:dis_first_step}, and, \ref{def:dis_next_step} in Definition \ref{def:dis}.
Namely, we need to check the properties \ref{def:dis_first_size} and \ref{def:dis_next_size}.
For \ref{def:dis_first_size}, we prove it by contradiction.
When $D^*$ violates \ref{def:dis_first_size}, there is a connected component $G'$ of $G$ such that no $v\in V'$, where $V'$ is the vertex set of $G'$, is in $D^*_1$.
Let $v'$ be $\arg\min_{v\in V'} {D^*}^{-1}(v)$ and $i'$ be ${D^*}^{-1}(v')$, i.e. the earliest vertex in $V'$ appears in $D^*$.
We construct a new pre-disintegration $\widehat{D}^*$ by setting $\widehat{D}^* = D^*$ except that $\widehat{D}^*_1 = D^*_1\cup\{v'\}$ and $\widehat{D}^*_{i'} = D^*_{i'}\setminus \{v'\}$, i.e. we move $v'$ to the first step.
Hence, we have $\widehat{D}^* \leq D^*$ and $\widehat{D}^*\neq D^*$ which contradicts the minimality of $D^*$.
We can perform the identical argument to show \ref{def:dis_next_size}.
\end{proof}

Equipped with Lemma \ref{lem:p-disintegration}, we can now commence with the proofs from the main text.

\begin{proof}[Proof of Lemma \ref{lem:graph-union}]

We will assume $V_1,\ldots,V_m$ contain distinct elements for convenience.
We now prove the statement by showing the following two inequalities: 1) $\max_{i\in[m]} \fR(G_i) \ge \fR(\bigoplus_{i\in[m]} G_i)$ and 2) $\max_{i\in[m]} \fR(G_i) \le \fR(\bigoplus_{i\in[m]} G_i)$.

\paragraph{For 1) $\max_{i\in[m]} \fR(G_i) \ge \fR(\bigoplus_{i\in[m]} G_i)$:}
Let $r$ be the maximum of the graph resilience of $G_i$, i.e. $r=\max_{i\in[m]} \fR(G_i)$, and $D^{(i)}$ be a $\fR(G_i)$-disintegration for $i \in [m]$.
We construct a $r$-tuple $D$ by setting the $j$-th step to be $D_j = \bigcup_{i\in [m]} D^{(i)}_j$.
It is easy to check that $D$ satisfies Definition \ref{def:dis} and hence $D$ is a $r$-disintegration of $\bigoplus_{i\in[m]} G_i$.
By Definition \ref{def:res}, the inequality $r \ge \fR(\bigoplus_{i\in[m]} G_i)$ follows.

\paragraph{For 2) $\max_{i\in[m]} \fR(G_i) \le \fR(\bigoplus_{i\in[m]} G_i)$:}
Let $r$ be the graph resilience of $\bigoplus_{i\in [m]} G_i$, i.e. $r=\fR\big(\bigoplus_{i\in [m]} G_i\big)$, and $D$ be a $r$-disintegration of $\bigoplus_{i\in [m]} G_i$.
We construct $m$ tuples $D^{(1)},\dots,D^{(m)}$ by setting 
\begin{align*}
    D^{(j)}_i 
    & =
    \big\{v \in V \mid v \in V_j\cap D_i\big\} \quad \text{for all $j\in[m]$ and $i\in [r]$}.
\end{align*}
It is easy to check that, for each $j\in [m]$, $D^{(j)}$ satisfies Definition \ref{def:predis} and hence $D^{(j)}$ is a $r$-pre-disintegration of $G_j$.
By Definition \ref{def:res} and Lemma \ref{lem:p-disintegration}, we have $\fR(G_i) \leq r$ for all $i\in [m]$ and hence the inequality $\max_{i\in[m]} \fR(G_i) \le r$ follows.
\end{proof}

\begin{proof}[Proof of Lemma \ref{lem:graph-removal}]

We will prove for the case where $|V'| = 1$. Note that the lemma statement follows from repeated application of this case.
    
    We will proceed by contradiction. Suppose there exists $G$ and $v$ such $\fR(G)\ge \fR(G\setminus\{v\}) + 2$. Let $\tD$ be a $\fR(G\setminus\{v\})$-disintegration of $G\setminus\{v\}$. If we define a tuple $D$ with $D_1 = v$ and $D_{i+1} = \tD_i$ for all $i$, it is clear that $D$ is a $(\fR(G\setminus\{v\})+1)$-pre-disintegration of $G$. The contradiction follows from the application of Lemma \ref{lem:p-disintegration}.
\end{proof}

\begin{proof}[Proof of Lemma \ref{lem:edge-removal}]
    We will prove for the case where $|E'|=1$, the lemma follows from repeated application of this case. Without loss of generality let $E' = (1,2)$. Let $D$ be an $r$-disintegration of $G$. Let $\tD$ be a $(r+1)$-tuple with $\tD_1 = \{1\}$ and $\tD_i = D_{i-1}\setminus\{1\}$ for $i
    \in \{2,\ldots,r+1\}$. Then $\tD$ is an pre-disintegration of $G'$ with length $r+1$. This case follows from Lemma $\ref{lem:p-disintegration}$.
\end{proof}

\begin{proof}[Proof of Lemma \ref{lem:graph-subset}]
Let $D$ be a $\fR(G)$-disintegration. Let $\tD$ be the sequence of subsets of $V'$ when one removes the elements of in $V\setminus V'$ from $D$. Clearly, $\tD$ is a pre-disintegration of $G'$, with length at most $\fR(G)$. The lemma follows from application of Lemma \ref{lem:p-disintegration}.
\end{proof}

\begin{proof}[ Proof of Lemma \ref{lem:graph-complete}]

It is clear that when $G$ with $d$ vertices is a complete graph that $\fR(G) = d$. 
    For the reverse direction, suppose that $G$ is a graph with $V=[d]$ with no edge between $d-1$ and $d$. We can define a pre-disintegration where $D_i = \{i\}$ for $i \in [d-2]$ and $D_{d-1} = \{d-1,d\}$. The lemma follows from application of Lemma \ref{lem:p-disintegration}.
\end{proof}

\begin{proof}[Proof of Lemma \ref{lem:graph-empty}]
    When $E = \emptyset$ clearly $D_1 = V$ is a disintegration.
    
    If $E \neq \emptyset$ then there must exist two vertices in some component so $D_1 = V$ does not satisfy the component property of a disintegration, thus $D_2 \neq \emptyset$ for any disintegration of $G$.
\end{proof}

\begin{proof}[Proof of Lemma \ref{lem:star}]
    One can simply choose the center vertex as the first entry of a disintegration and the remaining vertices as the rest.
\end{proof}

\begin{proof}[Proof of Lemma \ref{lem:k-ary}]

We construct a $r$-tuple $D$ as follows.
For all $i\in [r]$, set $D_i$ to be the set of all tree vertices at the $i$-th level in $G$.
Here, we define the root node to be at the first level.
It is easy to check that $D$ a $r$-disintegration.
By Definition \ref{def:res}, we have $\fR(G)\leq r$.
\end{proof}

\begin{proof}[Proof of Lemma \ref{lem:tree}]

Let $V$ be the vertex set.
For any vertex $v\in V$, let $\alpha_{v}$ be the maximum number of vertices in the connected components after removing $v$.
We would like to show that there exists a vertex $v^*$ such that $\alpha_{v^*}\leq \lfloor\frac{d}{2}\rfloor$.
We prove it by contradiction.
Let $v'$ be $\arg\min_{v\in V}\alpha_v$ and $V_1,\dots,V_t$ be the vertex sets of the connected components after removing $v'$.
We have $\alpha_{v'}>\lfloor\frac{d}{2}\rfloor$ which means that there is a connected component whose number of vertices is strictly larger than $\lfloor\frac{d}{2}\rfloor$.
Note that there is only one such component since the sum of the number of vertices of other components is strictly less than $d-1-\lfloor\frac{d}{2}\rfloor$.
WLOG, let $V_1$ be the vertex set of that component.
Let $v''$ be the vertex in $V_1$ sharing an edge with $v'$.
We want to argue that $\alpha_{v''}<\alpha_{v'}$ which contradicts the definition of $v'$.
To see this, we examine the number of vertices in the connected components after removing $v''$.
We use the fact that $G$ is a tree.
For the component whose vertex set is $\cup_{i=2}^t V_i \cup\{v'\}$, the size is strictly less than $(d-1-\lfloor\frac{d}{2}\rfloor)+1 =d-\lfloor\frac{d}{2}\rfloor \leq \alpha_{v'}$.
For any component inside $V_1$, the size is less than $\lfloor\frac{d}{2}\rfloor -1 < \alpha_{v'}$.
Namely, we have $\alpha_{v''} < \alpha_{v'}$.
Therefore, there exists a vertex $v^*$ such that $\alpha_{v^*}\leq \lfloor\frac{d}{2}\rfloor$.

Now, we remove $v^*$ in the first step and all the connected components, $G_1,\dots, G_t$, has at most $\lfloor\frac{d}{2}\rfloor$ vertices.
By induction, we have
\begin{align*}
    \fR(G)
    & \leq
    \fR(G\setminus \{v^*\}) + 1 \quad \text{by Lemma \ref{lem:graph-removal}} \\
    & =
    \fR(\bigoplus_{i\in [t]} G_i) + 1 \\
    & =
    \max_{i\in [t]} \fR(G_i) + 1 \quad \text{by Lemma \ref{lem:graph-union}} \\
    & \leq
    \big(\log_2(\lfloor\frac{d}{2}\rfloor) +1\big) + 1 \quad \text{by the inductive assumption} \\
    & =
    \log_2(d) + 1.
\end{align*}

\end{proof}

\begin{proof}[Proof of Proposition \ref{prop:path}]

We will prove the statement by induction on $s$.

\paragraph{Base case $s=1$:}
When $s=1$, we observe that $L_{t(2^1-1)}^t = L_t^t$ is a complete graph with $t$ vertices.
By Lemma \ref{lem:graph-complete}, we have $\fR(L_{t}^t) = t \leq 1\cdot t$.

\paragraph{Inductive step $s>1$:}
Let $V'\subset V$ be $\{t(2^{s-1}-1) + i\mid i\in [t]\}$.
Note that $\abs{V'} = t$.
By Lemma \ref{lem:graph-removal}, we have
\begin{align*}
    \fR(L_{t(2^s-1)}^t)
    \leq 
    \fR(L_{t(2^s-1)}^t\setminus V') + t.
\end{align*}
In other words, we first remove the "middle" $t$ vertices from the graph.
By the definition of path graphs (Definition \ref{def:path_graph}), it is easy to check that there is no edge between $i$ and $j$ for $i=1,\dots,t(2^{s-1}-1)$ and $j=t(2^{s-1}-1) + t+1,\dots,t(2^s-1)$.
Therefore, there are two connected components and clearly each of them is isomorphic to $L_{t(2^{s-1}-1)}^t$.
Let $G_1$ and $G_2$ be the two connected components, i.e. $L_{t(2^s-1)}\setminus V' = G_1\oplus G_2$.
Then, we have
\begin{align*}
    \fR(L_{t(2^s-1)}^t)
    & \leq 
    \fR(G_1\oplus G_2) + t \\
    & =
    \max\{\fR(G_1),\fR(G_2)\}+t \quad \text{by Lemma \ref{lem:graph-union}} \\
    & \leq
    (s-1)t+t \quad \text{by the inductive assumption} \\
    & =
    st.
\end{align*}
\end{proof}

\begin{proof}[Proof of Corollary \ref{cor:path}]

Let $s$ be $\lceil\log_2(\frac{d}{t}+1)\rceil$.
By the definition of path graphs (Definition \ref{def:path_graph}), we have $L_d^t\leq L_{t(2^s-1)}^t$ (recall that $\leq$ means being isomorphic to a subgraph).
Hence, we have
\begin{align*}
    \fR(L_d^t)
    & \leq
    \fR(L_{t(2^s-1)}^t)\quad \text{by Lemma \ref{lem:graph-subset}}\\
    &\leq
    s\cdot t\quad \text{by Proposition \ref{prop:path}} \\
    & =
    O(t\log d).
\end{align*}
\end{proof}

\begin{proof}[Proof of Proposition \ref{prop:grid} ]

We will prove the statement by induction on $s$.

\paragraph{Base case $s=1$:}
When $s=1$, we observe that $L_{t(2^1-1)\times t(2^1-1)}^t = L_{t \times t}^t$ is a complete graph with $t^2$ vertices.
By Lemma \ref{lem:graph-complete}, we have $\fR(L_{t\times t}^t) = t^2 \leq 4 t^2 2^1$.

\paragraph{Inductive step $s>1$:}
Let $d'$ be $t(2^{s-1}-1)$.
Let $V'\subset V$ be
\begin{align*}
    V'
    & \triangleq
    \{(d' + i,j)\mid i\in [t],j\in [t(2^s-1)]\}\cup \{(i,d'+j)\mid i\in [t(2^s-1)],j\in [t]\}.
\end{align*}
Note that $\abs{V'} = t^2 + 4t^2d' \leq 4t^22^{s-1}$.
By Lemma \ref{lem:graph-removal}, we have
\begin{align*}
    \fR(L_{t(2^s-1)\times t(2^s-1)}^t)
    & \leq
    \fR(L_{t(2^s-1)\times t(2^s-1)}^t\setminus V') + 4t^22^{s-1}.
\end{align*}
In other words, we first remove the vertical and horizontal "middle" stripes from the graph.
By the definition of grid graphs (Definition \ref{def:grid_graph}), it is easy to check that there is no edge between $(i_1,j_1)$ and $(i_2,j_2)$ if one of the followings happens:
\begin{itemize}
    \item $i_1=1,\dots,t(2^{s-1}-1)$ and $i_2=t(2^{s-1}-1) + t+1,\dots,t(2^s-1)$
    \item $j_1=1,\dots,t(2^{s-1}-1)$ and $j_2=t(2^{s-1}-1) + t+1,\dots,t(2^s-1)$.
\end{itemize}

Therefore, there are four connected components  and clearly each of them is isomorphic to $L_{t(2^{s-1}-1)\times t(2^{s-1}-1)}^t$.
Let $G_1,G_2,G_3,G_4$ be the four connected components, i.e. $L_{t(2^s-1)\times t(2^s-1)}^t\setminus V' = G_1\oplus G_2\oplus G_3\oplus G_4$.
See figure \ref{fig:prop-grid-Gp} for the graphical illustration.
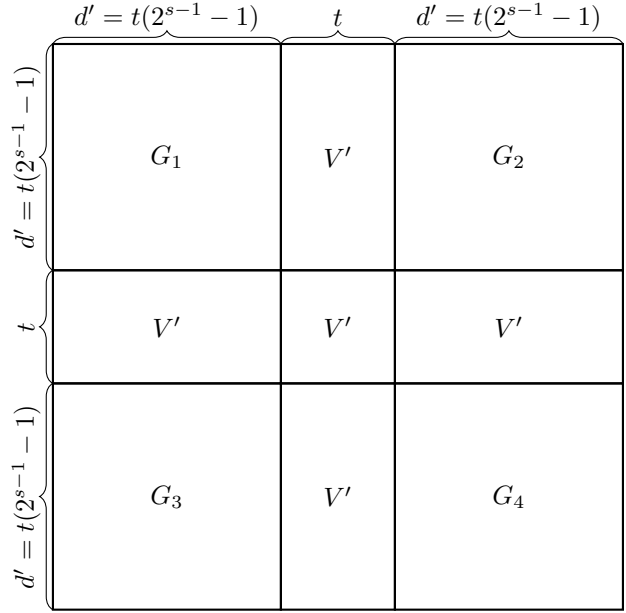
\begin{figure} 
\centering
    \begin{tikzpicture}[scale=0.75]
\draw[thick] (0, 0) rectangle (4, 4);
        \draw[thick] (4, 0) rectangle (6, 4);
        \draw[thick] (6, 0) rectangle (10, 4);
        
        \draw[thick] (0, 4) rectangle (4, 6);
        \draw[thick] (4, 4) rectangle (6, 6);
        \draw[thick] (6, 4) rectangle (10, 6);

        \draw[thick] (0, 6) rectangle (4, 10);
        \draw[thick] (4, 6) rectangle (6, 10);
        \draw[thick] (6, 6) rectangle (10, 10);

        \draw [decorate,decoration={brace,amplitude=5pt}] (0,10) -- (4,10) node[midway,yshift=+1em]{$d'=t(2^{s-1}-1)$}; 
        \draw [decorate,decoration={brace,amplitude=5pt}] (6,10) -- (10,10) node[midway,yshift=+1em]{$d'=t(2^{s-1}-1)$}; 
        \draw [decorate,decoration={brace,amplitude=5pt}] (4,10) -- (6,10) node[midway,yshift=+1em]{$t$}; 
        \draw [decorate,decoration={brace,amplitude=5pt}] (0,6) -- (0,10) node[midway,xshift=-1em,rotate=90]{$d'=t(2^{s-1}-1)$}; 
        \draw [decorate,decoration={brace,amplitude=5pt}] (0,0) -- (0,4) node[midway,xshift=-1em,rotate=90]{$d'=t(2^{s-1}-1)$}; 
        \draw [decorate,decoration={brace,amplitude=5pt}] (0,4) -- (0,6) node[midway,xshift=-1em,rotate=90]{$t$}; 
        \node[inner sep=0] at (2, 2) {$G_3$};
        \node[inner sep=0] at (8, 2) {$G_4$};
        \node[inner sep=0] at (2, 8) {$G_1$};
        \node[inner sep=0] at (8, 8) {$G_2$};
    
        \node[inner sep=0] at (5, 5) {$V'$};
    
        \node[inner sep=0] at (2, 5) {$V'$};
        \node[inner sep=0] at (8, 5) {$V'$};
        
        \node[inner sep=0] at (5, 2) {$V'$};
        \node[inner sep=0] at (5, 8) {$V'$};
    \end{tikzpicture} 
    \caption{Illustration of $V',G_1,G_2,G_3,G_4$ in the proof of Proposition \ref{prop:grid}}\label{fig:prop-grid-Gp}
\end{figure}
Then, we have
\begin{align*}
    \fR(L_{t(2^s-1)\times t(2^s-1)}^t)
    & \leq
    \fR(G_1\oplus G_2\oplus G_3\oplus G_4) + 4t^22^{s-1} \\
    & \leq
    \max\{\fR(G_1),\fR(G_2),\fR(G_3),\fR(G_4)\} + 4t^22^{s-1} \quad \text{by Lemma \ref{lem:graph-union}}\\
    & \leq
    4t^22^{s-1} + 4t^22^{s-1}\quad \text{by the inductive assumption}\\
    & =
    4t^22^s.
\end{align*}
\end{proof}

\begin{proof}[Proof of Corollary \ref{cor:grid}]

Let $s$ be $\lceil\log_2(\frac{k}{t}+1)\rceil$.
By the definition of grid graphs (Definition \ref{def:grid_graph}), we have $L_{k\times k}^t \leq L_{t(2^s-1)\times t(2^s-1)}^t$ (recall that $\leq$ means being isomorphic to a subgraph).
Hence, we have
\begin{align*}
    \fR(L_{k\times k}^t)
    & \leq
    \fR(L_{t(2^s-1)\times t(2^s-1)}^t) \quad \text{by Lemma \ref{lem:graph-subset}} \\
    & \leq
    4t^22^s \quad \text{by Proposition \ref{prop:grid}} \\
    & =
    O(t^2k) = O(t^2\sqrt{d}).
\end{align*}
\end{proof}

\begin{proof}[Proof of Lemma \ref{lem:meta-graph}]
    We will assume $V_1,\ldots, V_k$ all contain $m$ vertices and for all $i$ for all $v,v' \in V_i$, $v$ is adjacent to $v'$ (all of the blocks are fully connected). We will prove the lemma for this case, the lemma statement then follows from application of Lemma \ref{lem:graph-subset}.

    We will denote the vertices of $V_i$ as $v_{i,1},\ldots,v_{i,m}$. Let $D'$ be a $t$-disintegration of $G'$. Let $D$ be a $t\times m$-tuple, where $D_{i,j} = \left\{v_{i,j} \mid i\in D_i, j=j \right\}$. If one considers $D$ in lexicographical order, then it is clearly a pre-disintegration of $G$ with length $mt$. This case then follows from Lemma \ref{lem:p-disintegration}.
\end{proof}

\end{document}